\global\long\def\spp{\quad}%
\global\long\def\dt{\,\mathrm{d}t}%
\global\long\def\X{\mathcal{X}}%
\global\long\def\dx{\,\mathrm{d}x}%
\global\long\def\dd{\,\mathrm{d}}%
\newcommand{\lagr}{s}
\newcommand{\hclagr}{\Phi}
\newcommand{\ifhl}[1]{#1}%\color{red} #1 \color{black}}
\newacronym{WFR}{WFR}%{\textsc{wfr}}
{Wasserstein Fisher-Rao}
\newacronym{OT}{OT}
\newacronym{SB}{SB}
\newacronym{AM}{AM}
\newcommand{\marginalcouple}{\Gamma(\mu_0, \mu_1)}
\newcommand{\mmarginalcouple}{\Gamma(\{\mu_{t_i}\})}
\newcommand{\marginalcoupleof}[1]{\Gamma(\{ #1 \})}
\newcommand{\hamilgs}{H}
\theoremstyle{definition}
\newtheorem{example}{Example}[section]
\crefname{example}{Ex.}{example}
\crefname{definition}{Def.}{def}
\crefname{section}{Sec.}{sec}
\crefname{appendix}{App.}{app}
\crefname{algorithm}{Alg.}{algorithm}
\crefname{theorem}{Thm.}{theorem}
\theoremstyle{plain}
\newtheorem{definition}{Definition}[section]
\newcommand{\x}{x_0}
\newcommand{\y}{x_1}
\newcommand{\st}{\quad \text{s.t.}\,\,}
\newcommand{\wmanifold}{\cP_2(\cX)}
\newcommand{\wfrmanifold}{\cM(\cX)}
\newcommand{\xmetricnorm}[1]{\| #1\|}
\newcommand{\qm}{\rho}
\newcommand{\vel}{v}%{\nabla \action}
\newcommand{\tmetricat}[2]{\langle #1\rangle_{T_{#2}}}
\newcommand{\cometricat}[2]{\langle #1\rangle_{T^*_{#2}}}
\newcommand{\wfrmetricat}[2]{\langle #1\rangle^{WFR_{\lambda}}_{T_{#2}}}
\newcommand{\wfrcometricat}[2]{\langle #1\rangle^{WFR_{\lambda}}_{T^*_{#2}}}
\newcommand{\wmetricat}[2]{\langle #1\rangle^{W_2}_{T_{#2}}}
\newcommand{\wnormat}[2]{\left\| #1\right\|^2_{T^{W_2}_{#2}}} %{W_2}_{T_{#2}}}
\newcommand{\wcometricat}[2]{\langle #1\rangle^{W_2}_{T^*_{#2}}}
\newcommand{\accel}{p}
\newcommand{\lagrv}{}%{\beta_t^V}
\newcommand{\lagrsbat}[1]{\frac{\sigma_{#1}^4}{8}}
\newcommand{\sqrttwolagrsbat}[1]{\frac{\sigma_{#1}^2}{2}}%{\sigma}
\newtcolorbox{mybox}[1][]{breakable, mybox,#1}
\def\eqref#1{equation~\ref{#1}}
\def\1{\bm{1}}
\DeclareMathAlphabet{\mathsfit}{\encodingdefault}{\sfdefault}{m}{sl}
\SetMathAlphabet{\mathsfit}{bold}{\encodingdefault}{\sfdefault}{bx}{n}
\newcommand{\cA}{{\mathcal{A}}}
\newcommand{\cC}{{\mathcal{C}}}
\newcommand{\cF}{{\mathcal{F}}}
\newcommand{\cH}{{\mathcal{H}}}
\newcommand{\cK}{{\mathcal{K}}}
\newcommand{\cL}{{\mathcal{L}}}
\newcommand{\cM}{{\mathcal{M}}}
\newcommand{\cP}{{\mathcal{P}}}
\newcommand{\cS}{{\mathcal{S}}}
\newcommand{\cT}{{\mathcal{T}}}
\newcommand{\cU}{{\mathcal{U}}}
\newcommand{\cX}{{\mathcal{X}}}
\newcommand{\bbR}{{\mathbb{R}}}
\global\long\def\divp#1{\nabla\cdot\left(#1\right)}%
\let\originalleft\left
\let\originalright\right
\renewcommand{\left}{\mathopen{}\mathclose\bgroup\originalleft}
\renewcommand{\right}{\aftergroup\egroup\originalright}
\global\long\def\X{\mathcal{X}}
\newcommand{\cd}{\cdot}
\global\long\def\inner#1#2{\left\langle #1, #2\right\rangle}
\definecolor{antiquefuchsia}{rgb}{0.57, 0.36, 0.51}
\definecolor{amethyst}{rgb}{0.6, 0.4, 0.8}
\newcommand{\deriv}[2]{\frac{\partial #1}{\partial #2}}
\newcommand{\var}{{\rm I\kern-.3em D}}
\newcommand{\cond}{\,|\,}
\newtheorem{theorem}{Theorem}
\newtheorem{proposition}[theorem]{Proposition}
\DeclareMathSymbol{\shortminus}{\mathbin}{AMSa}{"39}
\definecolor{bleudefrance}{rgb}{0.19, 0.55, 0.91}
\definecolor{azure(colorwheel)}{rgb}{0.0, 0.5, 1.0}
\newcommand{\vheader}{}%{\vspace*{-.12cm}}%{\vspace*{-.14cm}}
\theoremstyle{plain}
\icmltitlerunning{A Computational Framework for Solving Wasserstein Lagrangian Flows}
\begin{document}

\twocolumn[
\icmltitle{A Computational Framework for Solving Wasserstein Lagrangian Flows}

% It is OKAY to include author information, even for blind
% submissions: the style file will automatically remove it for you
% unless you've provided the [accepted] option to the icml2024
% package.

% List of affiliations: The first argument should be a (short)
% identifier you will use later to specify author affiliations
% Academic affiliations should list Department, University, City, Region, Country
% Industry affiliations should list Company, City, Region, Country

% You can specify symbols, otherwise they are numbered in order.
% Ideally, you should not use this facility. Affiliations will be numbered
% in order of appearance and this is the preferred way.
\icmlsetsymbol{equal}{*}

\begin{icmlauthorlist}
\icmlauthor{Kirill Neklyudov}{equal,udem,mila}
\icmlauthor{Rob Brekelmans}{equal,vector}
\icmlauthor{Alexander Tong}{udem,mila}
\icmlauthor{Lazar Atanackovic}{vector,uoft} \\ %\hspace*{2cm}
\hspace*{.4cm} 
\icmlauthor{Qiang Liu}{utaustin}\hspace*{.2cm}
%\hphantom{\icmlauthor{Qiang}{utaustin}}
\icmlauthor{Alireza Makhzani}{vector,uoft}%\hspace*{.2cm}
%\icmlauthor{}{sch}
%\icmlauthor{}{sch}
\end{icmlauthorlist}

\icmlaffiliation{udem}{Université de Montréal}
\icmlaffiliation{mila}{Mila - Quebec AI Institute}
\icmlaffiliation{vector}{Vector Institute}
\icmlaffiliation{uoft}{University of Toronto}
\icmlaffiliation{utaustin}{University of Texas at Austin}

\icmlcorrespondingauthor{}{k.necludov@gmail.com}
\icmlcorrespondingauthor{}{\{rob.brekelmans, makhzani\}@vectorinstitute.ai}
%\icmlcorrespondingauthor{}{makhzani@vectorinstitute.ai}

% You may provide any keywords that you
% find helpful for describing your paper; these are used to populate
% the "keywords" metadata in the PDF but will not be shown in the document
\icmlkeywords{Machine Learning, ICML}

\vskip 0.3in
]

% this must go after the closing bracket ] following \twocolumn[ ...

% This command actually creates the footnote in the first column
% listing the affiliations and the copyright notice.
% The command takes one argument, which is text to display at the start of the footnote.
% The \icmlEqualContribution command is standard text for equal contribution.
% Remove it (just {}) if you do not need this facility.

%\printAffiliationsAndNotice{}  % leave blank if no need to mention equal contribution
\printAffiliationsAndNotice{\icmlEqualContribution} % otherwise use the standard text.

\begin{abstract}
The dynamical formulation of the optimal transport can be extended through various choices of the underlying geometry (\textit{kinetic energy}), and the regularization of density paths (\textit{potential energy}). These combinations yield different variational problems (\textit{Lagrangians}), encompassing many variations of the optimal transport problem such as the Schrödinger bridge, unbalanced optimal transport, and optimal transport with physical constraints, among others. In general, the optimal density path is unknown, and solving these variational problems can be computationally challenging. We propose a novel deep learning based framework approaching all of these problems from a unified perspective. Leveraging the dual formulation of the Lagrangians, our method does not require simulating or backpropagating through the trajectories of the learned dynamics, and does not need access to optimal couplings. We showcase the versatility of the proposed framework by outperforming previous approaches for the single-cell trajectory inference, where incorporating prior knowledge into the dynamics is crucial for correct predictions.
\end{abstract}

\vspace{-.5cm}

%\RB{Beware:  Rob added some unfinished / unreviewed comments in Experiment / Related Work}

%\vheader
\vspace*{-.12cm}
\section{Introduction}\label{sec:intro}
% \vheader
The problem of \textit{trajectory inference}, or recovering the population dynamics of a system from samples of its temporal marginal distributions, is a challenging problem arising throughout the natural sciences \citep{hashimoto2016learning, lavenant2021towards}.   
A particularly important application is analysis of single-cell RNA-sequencing data \citep{schiebinger2019optimal, schiebinger2021reconstructing, saelens2019comparison}, which provides a heterogeneous snapshot of a cell population at a high resolution, allowing high-throughput observation over tens of thousands of genes \citep{macosko2015highly}.   However, since the measurement process ultimately leads to cell death, we can only observe temporal changes of the \textit{marginal} or \textit{population} distributions of cells as they undergo 
treatment, differentiation, or developmental processes of interest.
To understand these processes and make future predictions, we are interested in both (i) interpolating the evolution of marginal cell distributions between observed timepoints and (ii) modeling the full trajectories at the individual cell level.

%\RB{Which can help...why? (didn't really get this @Laz):} elucidate the temporal ordering of cells and help draw insights about their developmental landscape \citet{tong2020trajectorynet} \RB{(previous Laz intro commented out below)}

\begin{figure*}[t]
% \vspace{-35pt}
\begin{minipage}{.24\textwidth}
\centering
\includegraphics[width=\textwidth]{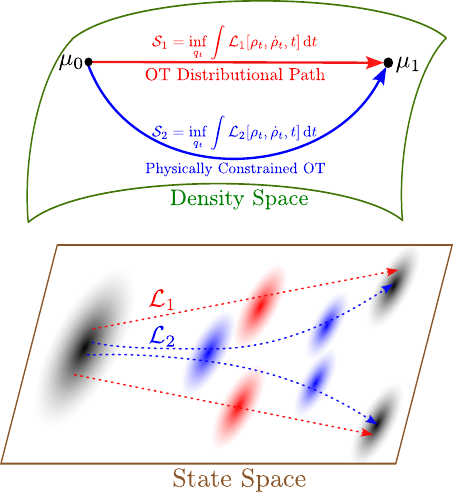}
% Using different choices of potential and kinetic energy to define a Lagrangian action $\mathcal{L}_i[\qm_t, \dot{\qm}_t,t]$ on the space of densities, we obtain different \textit{Wasserstein Lagrangian Flows} as action minimizing curves.   Toy examples of the resulting dynamics are given in (a)-(d).   Our Wasserstein Lagrangian flows can also be constrained to match observed intermediate data marginals $\qm_{t_i} \approx \mu_{t_i}$, and combine various energy terms to define a suitable notion of interpolation between $\mu_{t_i}$'s.}
\end{minipage}
\begin{minipage}{.74\textwidth}
\centering
  \begin{subfigure}[t]{.24\textwidth}
    \centering\includegraphics[width=\textwidth]{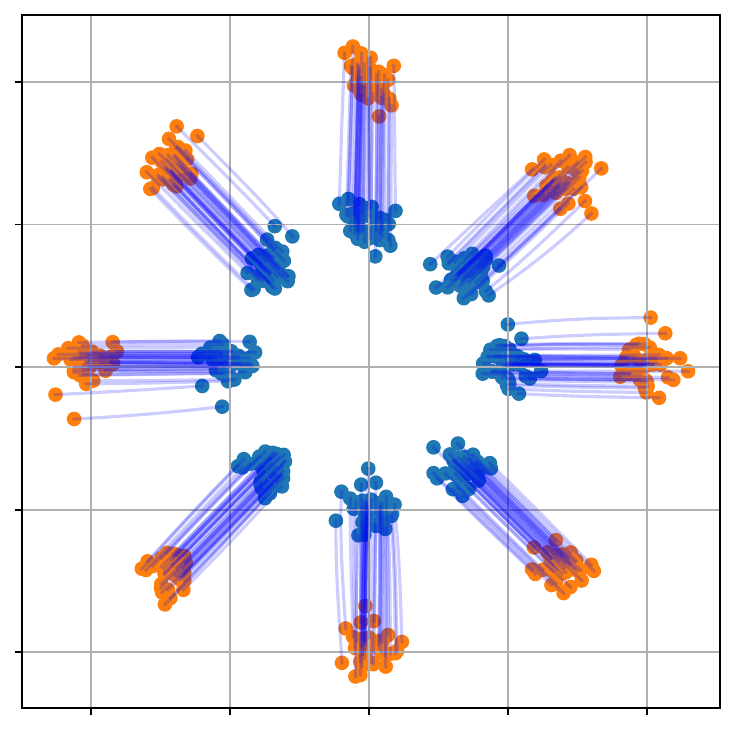}
    \vspace*{.2cm} \caption{Standard \textsc{ot}}
    \label{fig:examples_a}
  \end{subfigure}
    \begin{subfigure}[t]{.24\textwidth}
    \centering\includegraphics[width=\textwidth]{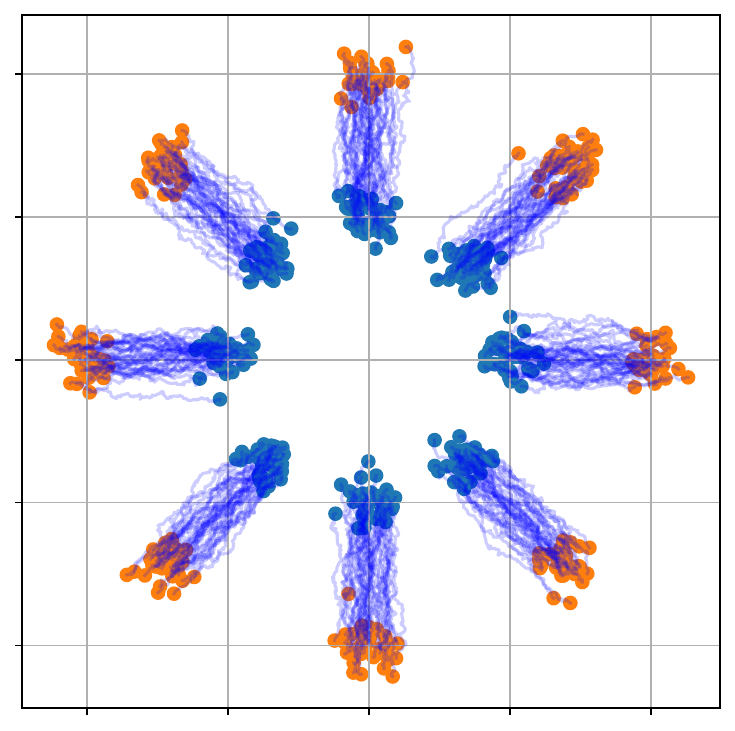}
    \vspace*{.2cm} 
    \caption{\textsc{sb}/Entropic \textsc{ot}}
  \end{subfigure}
  %\\[1.4ex]
    \begin{subfigure}[t]{.24\textwidth}
    \centering\includegraphics[width=\textwidth]{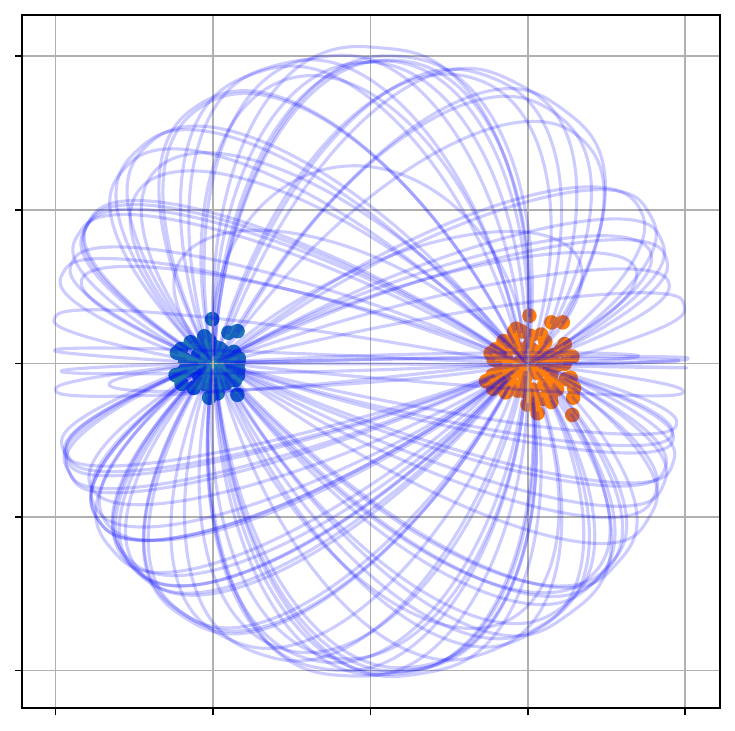}
    \vspace*{.2cm} 
    \caption{Physical \textsc{ot}}
  \end{subfigure}
      \begin{subfigure}[t]{.24\textwidth}
    \centering\includegraphics[width=\textwidth]{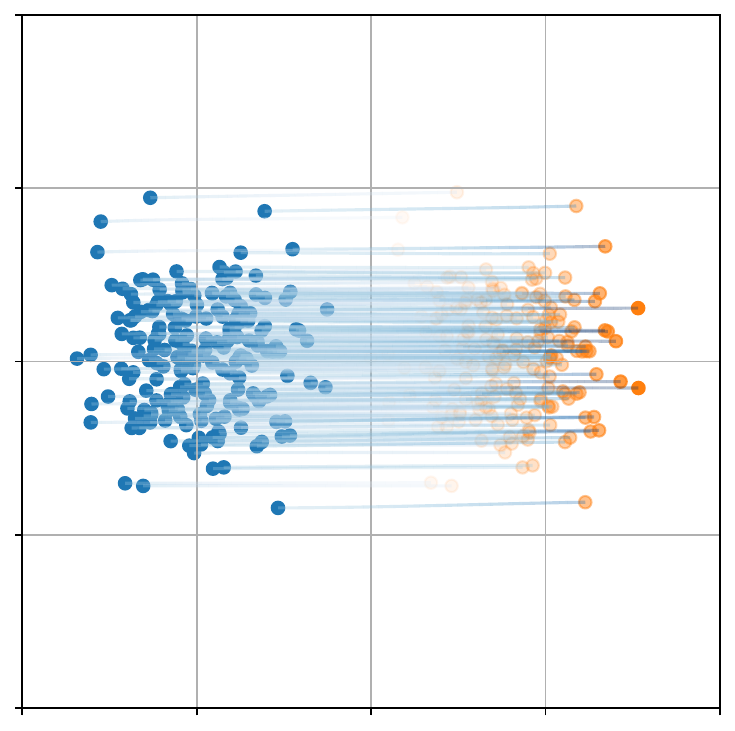}
    \vspace*{.2cm} 
    \caption{Unbalanced \textsc{ot}}
\end{subfigure}
\end{minipage}
\vspace*{-.1cm}
\caption{Our \textit{Wasserstein Lagrangian Flows} are action-minimizing curves for various choices of Lagrangian $\mathcal{L}_i[\qm_t, \dot{\qm}_t,t]$ on the space of densities, which each translate to particular state-space dynamics.
Toy examples of dynamics resulting from various potential or kinetic energy terms are given in (a)-(d).   We may also constrain Wasserstein Lagrangian flows match intermediate data marginals $\qm_{t_i} = \mu_{t_i}$ and combine energy terms to define a suitable notion of interpolation between given $\mu_{t_i}$. }
% \caption{Our \textit{Wasserstein Lagrangian Flows} are action-minimizing curves for various choices of Lagrangian $\mathcal{L}_i[\qm_t, \dot{\qm}_t,t]$ on the space of densities.  
%  Toy examples of dynamics resulting from various potential or kinetic energy terms are given in (a)-(d).   We may also constrain our Wasserstein Lagrangian flows match intermediate data marginals $\qm_{t_i} \approx \mu_{t_i}$ and combine energy terms to define a suitable notion of interpolation between given $\mu_{t_i}$.
% }
    %\caption{Toy examples for different Wasserstein Lagrangian Flows considered in this paper.}\label{fig:examples}
\vspace{-10pt}
\end{figure*}

However, when inferring trajectories over cell distributions, there exist multiple cell dynamics that yield the same population marginals. This presents an ill-posed problem, which highlights the need for trajectory inference methods that can flexibly incorporate different types of prior information on the cell dynamics.    
% Commonly, such prior information is specified via a Lagrangian energy or action function which the true underlying dynamics are assumed to minimize. 
% This problem specification may also be viewed as defining a `least-action principle' for interpolation on the space of marginal distributions, where previous work on measure-valued splines \citep{chen2018measure, benamou2019second, chewi2021fast, clancy2022wasserstein} is an example which seeks minimize the acceleration of particles.
% Commonly, such prior information is specified via trajectory-level constraints. 
% This problem specification may also be viewed as defining as a variational problem interpolating on the space of marginal distributions, where previous work on measure-valued splines \citep{chen2018measure, benamou2019second, chewi2021fast, clancy2022wasserstein, chen2023deep} are  examples which seek minimize the acceleration of particles.
Commonly, such prior information is specified via posing a variational problem on the space of marginal distributions, where previous works on measure-valued splines \citep{chen2018measure, benamou2019second, chewi2021fast, clancy2022wasserstein, chen2023deep} are examples which seek to minimize the acceleration of particles.

%We propose a general framework for using deep neural networks to solve Lagrangian optimization \RB{i.e. interpolation} problems on manifolds of probability measures.
We propose a general framework for using deep neural networks to infer dynamics and solve marginal interpolation problems, using Lagrangian action functionals on manifolds of probability densities that can flexibly incorporate various types of prior information.
%Lagrangian optimization \RB{i.e. interpolation} 
%problems on manifolds of probability measures.
We consider Lagrangians of the form $\cL[\qm_t, \dot{\qm}_t, t] = \cK[\qm_t, \dot{\qm}_t] - \cU[\qm_t, t]$, referring to the first term as a \textit{kinetic energy} and the second as a \textit{potential energy}.
Our
 %computational 
 methods can %be used to 
 solve a diverse set of problems defined by the choice of these energies and constraints on the evolution of $\qm_t$.   More explicitly, we specify %including but not limited to,
\begin{itemize}
%\item A \textit{geometry} on the space of probability measures, which corresponds to using a Riemannian metric as the \textit{kinetic energy} in the Lagrangian.  We primarily consider the Riemannian structures corresponding to the Wasserstein-2 and Wasserstein Fisher-Rao metrics.
%\vheader
\vspace*{-.12cm}
\item A \textit{kinetic energy} which, in the primary examples considered in this paper, corresponds to a \textit{geometry}
on the space of probability measures.  We primarily consider the Riemannian structures corresponding to the Wasserstein-2 and {Wasserstein Fisher-Rao} metrics.
%%%%%% PREV %%%%%%%
%\item A \textit{potential energy}, which may include functionals of the density such as the Shannon entropy, or spatial potentials which encode physical prior knowledge. 
%%%%%%
%\vheader
\vspace*{-.12cm}
\item A \textit{potential energy} 
%which may be the expectation of a physical potential encoding prior knowledge or even a nonlinear functional of the density.
which is a functional of the density,
%for example 
such as the expectation of a physical potential 
that encodes 
%encoding 
prior knowledge, or even a nonlinear functional.
% % \item A \textit{potential energy}, which is a functional of the density which 
% % \hl{reflect the evolution of functionals of the density such as the entropy,}
% % % %include functionals of the density such as the Shannon entropy, 
% %  or spatial potentials which encode physical prior knowledge.
% \item A \textit{potential energy}, which is a functional of the density such as the expectation of a physical potential or a so-called Fisher information functional resulting from the gradient of entropy.  
% \item A \textit{potential energy}, which is a functional on the density space, e.g. an expectation of a physical potential. 
% In general, it can include derivatives, as in the Fisher divergence.
% favors some state
%This may reflect physical potentials which encode prior knowledge of the dynamics, or the evolution of functionals such as the entropy. 
%encode physical \hl{constraints}.
%functional of the density such as entropy, or spatial potentials.   
\item A collection of \textit{marginal} constraints which are inspired by the availability of data in the problem of interest.   For \gls{OT}, \gls{SB}, or generative modeling tasks, we are often interested in interpolating between two endpoint marginals given by a data distribution and/or a tractable prior distribution.   For applications in {trajectory inference}, we may incorporate multiple constraints to match the
observed temporal marginals, given via data samples.
%marginal samples from the temporal evolution of a population of particles or cells.  
Notably, in the limit of data sampled infinitely densely in time,  we recover the \gls{AM} framework of \citet{neklyudov2023action}.
%\vheader
\vspace*{-.12cm}
\end{itemize}
% In particular, we propose tractable dual objectives to solve standard Wasserstein-2 \gls{OT} \citep{benamou2000computational, villani2009optimal}, entropy regularized \gls{OT} or Schrödinger Bridge \citep{leonard2013survey, chen2021stochastic}, physically constrained optimal transport, and unbalanced optimal transport \citep{liero2016optimal, kondratyev2016new,chizat2018interpolating} problems.
% Our framework also allows the combination of various energy terms to incorporate features of each of the above problems as inductive biases for trajectory inference.
%%%%%%%%%%Within our \textit{Wasserstein Lagrangian Flows} framework, we propose tractable dual objectives to solve (i) \textit{standard Wasserstein-2 \gls{OT}} (\cref{example:w2ot}, \citet{benamou2000computational, villani2009optimal}), (ii) \textit{entropy regularized \gls{OT}} or Schrödinger Bridge (\cref{example:sb}, \citet{leonard2013survey, chen2021stochastic}, (iii) \textit{physically constrained \gls{OT}} (\cref{example:pot}, \citet{villani2009optimal, tong2020trajectorynet,koshizuka2022neural}), and (iv) \textit{unbalanced \gls{OT}} (\cref{example:wfr}, \citet{liero2016optimal, kondratyev2016new,chizat2018interpolating}) problems (\cref{sec:examples}).
We refer to the optimal dynamics specified by these choices as \textit{Wasserstein Lagrangian Flows} and propose a single computational approach to solve a class of such problems, including (i) \textit{standard Wasserstein-2 \gls{OT}} (\cref{example:w2ot}, \citet{benamou2000computational, villani2009optimal}), (ii) \textit{entropy regularized \gls{OT}} or Schrödinger Bridge (\cref{example:sb}, \citet{leonard2013survey, chen2021stochastic}, (iii) \textit{physically constrained \gls{OT}} (\cref{example:pot}, \citet{tong2020trajectorynet,koshizuka2022neural}), and (iv) \textit{unbalanced \gls{OT}} (\cref{example:wfr}, \citet{chizat2018interpolating}) (\cref{sec:examples}).
Our framework also allows for combining energy terms to incorporate features of the above problems as inductive biases for trajectory inference.  In \cref{sec:experiments}, we showcase the ability of our methods to accurately solve Wasserstein Lagrangian flow optimizations, and highlight how exploring different Lagrangians can improve results in single-cell RNA-sequencing applications.
We compare our approach
%discuss 
%benefits of 
%our approach compared 
to related work in \cref{sec:related}.
\vheader
\vspace*{-.12cm}
\section{Background}\label{sec:bg}
\vspace*{-.12cm}
\vheader

\subsection{Wasserstein-2 Geometry}\label{sec:ot}
%\vheader
\vspace*{-.12cm}

For two given densities with finite second moments $\mu_0, \mu_1 \in \wmanifold$,  the Wasserstein-2 \gls{OT} problem is defined
%, in the Kantorovich formulation, 
as a cost-minimization problem over joint distributions or `couplings' $\pi \in \Pi(\mu_0,\mu_1) = \{\pi(x_0,x_1) \cond  \pi(\x) = \mu_0(\x), \; \pi(\y) = \mu_1(\y)\}$, i.e.
% whose marginals match the given $\mu_0$ and $\mu_1$,
%\small
\begin{align}
\begin{split}\label{eq:ot}
\scalebox{.95}{\ensuremath{ 
    W_2(\mu_0, \mu_1)^2  \coloneqq 
    %&~\\ 
    \inf \limits_{\pi \in \Pi(\mu_0,\mu_1)} ~\int 
    \| \x - \y \|^2
   %\hl{ d_{\cX}(\x,\y)^2 }
    \pi(\x,\y) \dd\x \dd\y. 
    }}
    %\nonumber
\end{split}
\end{align}
\normalsize
% \footnotesize
% \begin{align}%\label{eq:ot}
%     W_2(\mu_0, \mu_1)^2  \coloneqq \inf \limits_{\pi} \int 
%     \| \x - \y \|^2
%    %\hl{ d_{\cX}(\x,\y)^2 }
%     \pi(\x,\y)\dd\x\dd\y \st \int \pi(\x,\y)\dd\y &= \mu_0(\x), 
%     \,\, \int \pi(\x,\y)\dd\x = \mu_1(\x) \nonumber
% \end{align}
% \normalsize
The dynamical formulation of \citet{benamou2000computational} gives an alternative perspective on the $W_2$ \gls{OT} problem as an optimization over a vector field $\vel_t$ that transports samples according to an \textsc{ODE} $\dot{x}_t = \vel_t$. 
The evolution of the samples' density $\qm_t$,
under transport by $v_t$, is governed by the \textit{continuity equation} $\dot{\qm}_t = - \divp{ \qm_t v_t }$
(\citet{figalli2021invitation} Lemma 4.1.1), and we have
%\small
\begin{align}
\begin{split} \label{eq:dynamical} 
   \hspace*{-.2cm} \frac{1}{2} W_2(\mu_0, \mu_1)^2
    = \inf \limits_{\qm_t}\inf  \limits_{v_t} \int_0^1  \int \frac{1}{2} 
    {\xmetricnorm{v_t}^2} ~\qm_t \dx \dt \\
    \st  \dot{\qm}_t = - \divp{ \qm_t v_t }, \,\,
    ~ \qm_0 = \mu_0, \,\,  
           \qm_1 = \mu_1 ,
    \end{split}
\end{align}
\normalsize
where $\divp{  }$ is the divergence operator.
The $W_2$ transport cost can be viewed as providing a Riemannian manifold structure on 
%the space of densities 
$\wmanifold$ (\citet{otto2001geometry, ambrosio2008gradient}, see also \citet{figalli2021invitation} Ch. 4).   
Introducing Lagrange multipliers $\lagr_t$ to enforce the constraints in \cref{eq:dynamical}, we obtain the condition $\vel_t = \nabla \lagr_t$ (see \cref{app:dual_kinetic}), which is suggestive of the result from \citet{ambrosio2008gradient} characterizing the tangent space 
%as 
{$T^{W_2}_{\qm}\cP_2 = \{ 
\dot{\qm}~ | \int \dot{\qm}\dx = 0 \}$ via the continuity equation},
% We also write the cotangent space $T_{\qm}^{*W_2}\cP_2$ as, \RB{Move $T^*$ inline, below \cref{eq:w2_tangent}}
% \begin{align}
% T^{W_2}_{\qm}\wmanifold = \{
% \dot{\qm}
% %\partial_t \mu_t |_{t=0} 
% ~|~  \dot{\qm}  = -\divp{ \qm \nabla \lagr } \} \quad \qquad T^{*W_2}_{\qm}\wmanifold = \{ [\lagr ] ~ | ~ \lagr \in \cC^\infty(\mathcal{X})\}\label{eq:w2_tangent}
% \end{align}
%\small
\begin{align}
T^{W_2}_{\qm}\wmanifold = \{
\dot{\qm}
%\partial_t \mu_t |_{t=0} 
~|~  \dot{\qm}  = -\divp{ \qm \nabla \lagr } \}. %\quad \qquad T^{*W_2}_{\qm}\wmanifold = \{ [\lagr ] ~ | ~ \lagr \in \cC^\infty(\mathcal{X})\}
\label{eq:w2_tangent}
\end{align}
\normalsize
{We also write the cotangent space as $T^{*W_2}_{\qm}\wmanifold = \{ [\lagr ] ~ | ~ \lagr \in \cC^\infty(\mathcal{X})\}$,}
where $\cC^\infty(\mathcal{X})$ denotes smooth functions and $[\lagr]$ is an equivalence class up to addition by a constant.
%functionals $\lagr[\cdot]: \dot{\qm} \mapsto \int \lagr \dot{\qm} \dx$ which are linear in the density $\qm$, and thus may be viewed as cotangent vectors $\lagr \in \cT^{*W_2}_{\qm}\wfrmanifold$.}
For two curves $\mu_t, \qm_t : [-\epsilon,\epsilon] \mapsto \wmanifold$ passing through $\qm \coloneqq  \qm_0 = \mu_0$,  the Otto metric is defined
% \small
\begin{align}\label{eq:w2_metric}
\wmetricat{ \dot{\mu}_t, \dot{\qm}_t }{\qm} = \cometricat{ \lagr_{\dot{\mu}_t}, \lagr_{\dot{\qm}_t} }{\qm}^{W_2} = \int \langle \nabla \lagr_{\dot{\mu}_t}, \nabla \lagr_{\dot{\qm}_t}\rangle \qm~\dx .
%= - \int  \lagr_t^{(1)} \divp{ \mu_t \nabla \lagr_t^{(1)} }\dx
\end{align}
 \normalsize
\subsection{Wasserstein Fisher-Rao Geometry}\label{sec:wfr}
% \vheader
Building from the dynamical formulation in \cref{eq:dynamical}, \citet{chizat2018interpolating, chizat2018unbalanced, kondratyev2016new, liero2016optimal, liero2018optimal} consider additional terms allowing for birth and death of particles or teleportation of probability mass.  
In particular, we extend the continuity equation with
%to include 
a `growth' term $g_t: \cX \rightarrow \mathbb{R}$ whose square is regularized in the cost,
\small
\begin{align}
 & \frac{1}{2}WFR_\lambda(\mu_0, \mu_1)^2
    = \inf  \limits_{\qm_t} \inf \limits_{v_t,g_t} \int_0^1  \int \left( \frac{1}{2} \| v_t \|^2 + \frac{\lambda}{2} g_t^2 \right) \qm_t  \dx \dt, 
    \nonumber \\
    &\st \quad \dot{\qm}_t = - \divp{ \qm_t v_t } + \lambda \qm_t g_t, \,\, \qm_0 = \mu_0, \,\, \qm_1 = \mu_1 .\label{eq:wfr}
\end{align}
\normalsize
%subject to $\dot{\qm}_t = - \divp{ \qm_t v_t } + \lambda \qm_t g_t, \qm_0 = \mu_0, \qm_1 = \mu_1$.
We call this the \gls{WFR} distance, since considering \textit{only} the growth terms recovers
%$(\lambda \rightarrow \infty)$
%the additional growth terms $\dot{\qm}_t = \mu_t g_t \implies g_t = \partial_t \log \mu_t$ correspond to 
the non-parametric Fisher-Rao metric \citep{chizat2018interpolating,bauer2016uniqueness}.   We also refer to \cref{eq:wfr} as an \textit{unbalanced} \gls{OT} problem on the space of unnormalized densities $\cM(\cX)$, since the growth terms need not preserve normalization $\int \dot{\qm}_t \dx = \int \lambda g_t \qm_t \dx \neq 0$ without further modifications (see e.g. \citet{lu2019accelerating}).   

\citet{kondratyev2016new} %provide formal derivations to 
define a Riemannian structure on $\cM(\cX)$ via the \gls{WFR} distance.  Introducing Lagrange multipliers $\lagr_t$ and eliminating $v_t, g_t$ in \cref{eq:wfr} yields the optimality conditions 
%to solve \cref{eq:wfr} yields the conditions
$v_t = \nabla \lagr_t $ and $g_t = \lagr_t$.
In analogy with \cref{sec:ot}, this suggests characterizing the tangent space via the tuple 
$(\lagr_t, \nabla \lagr_t)$ and defining the metric as
a characterization of the tangent space
\small
\begin{align}
T^{WFR_\lambda}_{\qm}\wfrmanifold &= \{ 
\dot{\qm}
%\partial_t \qm_t |_{t=0} 
~|~  \dot{\qm}  = -\divp{  \qm \nabla \lagr } + \lambda \qm \lagr \} 
\qquad 
%T^{*WFR_\lambda}_{\qm}\wfrmanifold = \{ [\lagr ] ~ | ~ \lagr \in \cC^\infty_b \}
\label{eq:wfr_tangent} \\[1.25ex]
%%\end{align}
%\begin{align}
\begin{split}
\wfrmetricat{ \dot{\mu}_t, \dot{\qm}_t }{\qm} &=  \wfrcometricat{  \lagr_{\dot{\mu}_t},\lagr_{\dot{\qm}_t} }{\qm} \\ 
&= \int \big(\vphantom{\frac{1}{2}} \langle \nabla \lagr_{\dot{\mu}_t}, \nabla \lagr_{\dot{\qm}_t}\rangle  + \lambda ~ \lagr_{\dot{\mu}_t} \lagr_{\dot{\qm}_t}\big) \qm \dx. 
%= - \int  \lagr_t^{(1)} \divp{ \mu_t \nabla \lagr_t^{(1)} }\dx
%\nonumber
 %\label{eq:wfr_metric}
 \end{split}\label{eq:wfr_metric}
\end{align}
\normalsize

\newcommand{\ampath}{\mu}
\vheader
\vheader 
\subsection{Action Matching}
% \vheader
Finally, Action Matching (\gls{AM}) \citep{neklyudov2023action} considers only the inner optimizations in \cref{eq:dynamical} or \cref{eq:wfr} as a function of $v_t$ or $(v_t, g_t)$, assuming a distributional path $\ampath_t$ is given via samples. 
In the $W_2$ case, to solve for the velocity $v_t = \nabla \lagr_{\dot{\ampath}_t}$ which corresponds to $\ampath_t$ via the continuity equation or \cref{eq:w2_tangent}, \citet{neklyudov2023action} optimize the objective
%\small
\begin{align}
\begin{split}
\cA[\ampath_t] =&~  \sup \limits_{\lagr_t} \int \lagr_1 \ampath_1 \dx - \int \lagr_0  \ampath_0 \dx \\ 
&~- \int_0^1 \int \left( \frac{\partial \lagr_t}{\partial t} + \frac{1}{2}\| \nabla \lagr_t\|^2 \right) \ampath_t \dx \dt, 
\end{split} \label{eq:action_matching}
\end{align}
\normalsize
over $\lagr_t: \cX \times [0,1] \rightarrow \bbR$ parameterized by a neural network, with similar objectives for $WFR_{\lambda}$.
%$\dot{\ampath}_t =  -\divp{ \ampath_t \nabla \lagr_{\dot{\ampath}_t} } + \lambda \ampath_t \lagr_{\dot{\ampath}_t}$
%We review the $WFR_\lambda$ case here, recovering the $W_2$ case for $\lambda = 0$.
To foreshadow our exposition in \cref{sec:wlm}, we view Action Matching as maximizing a lower bound on the 
\textit{action} $\cA[\ampath_t]$ or \textit{kinetic energy} of the curve $\ampath_t: [0,1] \rightarrow \wmanifold$ of densities.
%(c.f.~\cref{eq:lagr_ground_cost}).   
In particular, at the optimal $\lagr_{\dot{\ampath}_t}$ satisfying $\dot{\ampath}_t =  -\divp{ \ampath_t \nabla \lagr_{\dot{\ampath}_t} }$, the value of \cref{eq:action_matching} becomes
%\small
\begin{align}
\begin{split}
\cA[\ampath_t] &= \int_0^1 \frac{1}{2} \wmetricat{ \dot{\ampath}_t, \dot{\ampath}_t }{\ampath_t} \dt = \int_0^1 \frac{1}{2} \wcometricat{ \lagr_{\dot{\ampath}_t}, \lagr_{\dot{\ampath}_t}}{\ampath_t} \dt \\
&= \int_0^1 \int \frac{1}{2} \|\nabla \lagr_t \|^2 \mu_t \dx \dt.
\end{split} \label{eq:action_kinetic} 
\end{align}
\normalsize
Our proposed framework
%or Wasserstein Lagrangian Flows 
considers minimizing the action functional over distributional paths, and our computational approach will include \gls{AM} as a component.   

\vheader
%\vspace{-.2cm}
\section{Wasserstein Lagrangian Flows}\label{sec:wlm}
%\vspace{-.1cm}
\vheader
%\vheader
In this section, we develop computational methods for optimizing Lagrangian action functionals on the space of (unnormalized) densities $\cP(\cX)$.\footnote{For convenience, we describe our methods using a generic $\cP(\cX)$ (which may {represent $\wmanifold$ or $\wfrmanifold$).}}
%$\wmanifold$ (or $\wfrmanifold$).
% \footnote{For simplicity, we describe our methods as operating on a generic space of densities $\cP$.   However, approaches based on the continuity equation, such as \cref{eq:dynamical}, operate over normalized densities $\mu_0, \mu_1 \in \cP(\cX)$ (with additional conditions depending on the cost function, such as finite second moments $\mu_0, \mu_1 \in \cP_2(\cX)$) while including growth terms as in \cref{eq:wfr} suggests operating over unnormalized densities $\mu_0, \mu_1 \in \wfrmanifold$.}
%We proceed using the notation $\cP(\cX)$ to represent 
Lagrangian actions are commonly used to define a cost function on the ground space $\cX$, which is then `lifted' to the space of densities via an optimal transport distance (\citet{villani2009optimal} Ch. 7).   

We instead propose to formulate Lagrangians $\cL[\qm_t, \dot{\qm}_t, t]$ \textit{directly} in the density space, which 
includes \gls{OT} with ground-space Lagrangian costs as a special case (\cref{app:lagr}), but \textit{also} allows us to consider kinetic and potential energy functionals which depend on the density and thus cannot be expressed using a ground-space Lagrangian. 
{In particular, we consider kinetic energies capturing space-dependent birth-death terms (as in $WFR_{\lambda}$, \cref{example:wfr}) 
and potential energies capturing global information about the distribution of particles or cells (as in the \gls{SB} problem,
%Fisher divergence, 
\cref{example:sb}).
%such as \hl{non-local birth-death terms} (as in $WFR_{\lambda}$, \cref{example:wfr}) or global information about the distribution of particles or cells (such as the entropy, \cref{example:sb}).
}

\begin{figure*}
\centering
% \vspace*{-1.20cm}
  % \begin{minipage}[c]{0.48\textwi\dth}
 %   \vspace*{-.5cm}  
    \includegraphics[width=.95\textwidth]{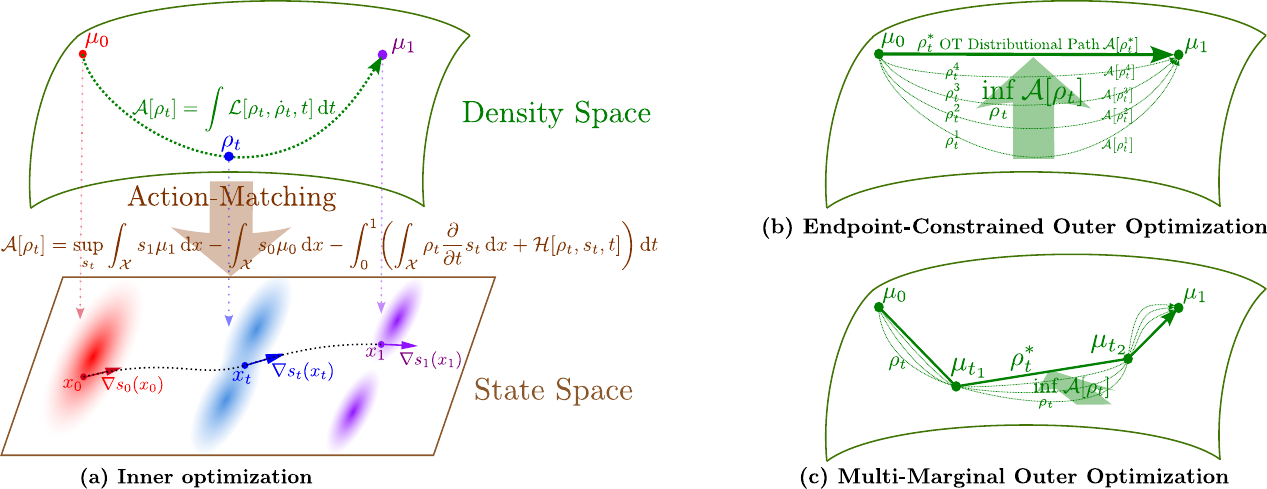}
  % \end{minipage}\hfill
  % \begin{minipage}[c]{0.5\textwi\dth}
  \vspace*{-.15cm}  
    \caption{{
    For different definitions of Lagrangian $\mathcal{L}[\qm_t, \dot{\qm}_t,t]$ or Hamiltonian $\mathcal{H}[\qm_t, \lagr_t,t]$
    on the space of densities, 
    we obtain different action functionals $\mathcal{A}[\qm_t]$.   Here, we show state-space velocities and optimal density paths for the $W_2$ geometry and \gls{OT} problem.
    %The action functional minimizing curve is called \textit{Wasserstein Lagrangian Flows}, that includes the standard \gls{OT} path, and different variations of \gls{OT}. 
    (a) The action functional for each curve can be evaluated using Action Matching (inner optimization in \cref{prop:dual_obj}), which is performed in the state-space.  (b,c)  Minimization of the action functional (outer optimization in \cref{prop:dual_obj}) is performed on the space of densities satisfying two endpoint constraints and possible intermediate constraints.}
    }
    % \caption{{
    % For different definitions of a Lagrangian $\mathcal{L}_i[\qm_t, \dot{\qm}_t,t]$ on the space of densities, we obtain different \textit{Wasserstein Lagrangian Flows}, including the standard \gls{OT} path.   However, our Lagrangian framework can also modify the density path to match observed intermediate data marginals $\qm_{t_i} \approx \mu_{t_i}$ and to yield different density space interpolations or state-space dynamics.  We optimize our Lagrangians using cotangent vectors $\lagr_t$, which yield vector fields based on $\nabla \lagr_t$, and a neural network sampler for intermediate $\qm_t$.}
    % }
    % \caption{Straightforward Lagrangian optimization on the density manifold would require us to parameterize the tangent vector $\frac{\partial \qm_t}{\partial t}$, which introduces multiple parameterization constraints. 
    % Instead, we derive the dual objective involving the cotangent vector $\phi_t$, which corresponds to the vector field on the state-space and can be easily parametrized via a neural network.
    % Moreover, the objective is linear in $\qm_t$, hence, allows for an efficient Monte Carlo estimation.}
    \label{fig:ot-lagrangian}
  % \end{minipage}
  \vspace*{-.2cm} 
\end{figure*}

% \vheader
 \vheader
 \vheader
\subsection{Wasserstein Lagrangian and Hamiltonian Flows}\label{sec:abstract_approach} 
 \vheader
We consider Lagrangian action functionals on the space of densities, defined in terms of a \textit{kinetic energy} $\cK[\qm_t, \dot{\qm}_t]$, which captures any dependence on the velocity of a curve $\dot{\qm}_t$, and a \textit{potential energy} $\cU[\qm_t, t]$ which depends only on the position $\qm_t$,
%\small
\begin{align}\label{eq:lagr0}
 \mathcal{L}[\qm_t, \dot{\qm}_t,t]  &= \ifhl{\cK[\qm_t, \dot{\qm}_t]} - \cU[\qm_t, t]  .
\end{align}
\normalsize
% \begin{align}\label{eq:lagr0}
% \mathcal{L}[\qm_t, \dot{\qm}_t,t]  &= \cK[\qm_t, \dot{\qm}_t] - \cU[\qm_t, t] 
% \end{align}
Throughout, we will assume $\mathcal{L}[\qm_t, \dot{\qm}_t,t]$ is lower semi-continuous (lsc) and strictly convex in $\dot{\qm}_t$.

% \begin{figure}[t]
% \centering
% \vspace*{-.55cm}
%   \begin{minipage}[c]{0.4\textwi\dth}
%    \vspace*{-.5cm}  
%     \includegraphics[wi\dth=\textwi\dth]{pics/ot-lagrangian.pdf}
%   \end{minipage}\hfill
%   \begin{minipage}[c]{0.53\textwi\dth}
%   \vspace*{-.1cm}  
%     \caption{
%     Schematic illustration that showing that different choices of Lagrangians $\mathcal{L}_i[\qm_t, \dot{\qm}_t,t]$ yield different \textit{Wasserstein Lagrangian Flows}, corresponding to %different
%     action-minimizing 
%     curves on the density space.   
% These curves on the density space model the marginal evolution $\qm_t$ of population dynamics on the state space.   
% Using the geometric structure of the density manifold, one can associate the tangent vector $\dot{\qm}_t$ to a cotangent vector $\lagr_t$ and a vector field in the state-space (e.g., $v_t = \nabla \lagr_t$ in the Wasserstein-$2$ geometry).   The cotangent vector yields a model of the dynamics and allows us to simulate the trajectories of individual particles.
%     %Note that every curve on the density space defines population marginals $\qm_t$ on the state space. Furthermore, using the geometric structure of the density manifold, one can associate the tangent vector $\dot{\qm}_t$ to the cotangent vector $\lagr_t$, which, in turn, defines the state-space vector field (e.g., $\nabla \lagr_t$ in the Wasserstein-$2$ geometry), and consequently defines the trajectories of individual samples.
%     }
%     \label{fig:lagrangians}
%    \end{minipage}
%   %\vspace*{-.4cm} 
% \end{figure}

Our goal is to solve for \textit{Wasserstein Lagrangian Flows}, by optimizing the given Lagrangian over curves of densities $\qm_t : [0,1] \rightarrow \cP(\cX)$ which are constrained to pass through $M$ given points $\mu_{t_i} \in \cP(\cX)$ at times $t_i$.
%constraints such as matching a pair of given endpoints $\qm_0 = \mu_0$ and $\qm_1 = \mu_1$.   More generally, in trajectory inference applications, we are interested in matching multiple intermediate constraints $\{ \mu_{t_i}\}_{i=1}^{M-1}$ at times , reflecting a dataset of marginal distributions sampled at discrete time points $t_i$.
%In analogy with \cref{eq:lagr_ground_cost}, 
We define the \textit{action} of a curve $\cA_{\cL}[\qm_t] = \int_0^1 \mathcal{L}[\qm_t, \dot{\qm}_t,t]\dt$
% \begin{align}
% \cA_{\cL}[\qm_t] = \int_0^1 \mathcal{L}[\qm_t, \dot{\qm}_t,t]\dt
% \end{align}
%$\cA_{\cL}[\qm_t]$ 
as the time-integral of the Lagrangian and seek the action-minimizing curve subject to the constraints 
%on the endpoint and intermediate densities.
%\RB{rather, value of the action? WLF are $\arg \inf$s}
%subject to a set of marginal constraints 
%for example constraints $\qm_0 = \mu_0$ and $\qm_1 = \mu_1$
%$(\qm_t, \dot{\qm}_t)$
% \begin{align}\label{eq:general_lagr_min}
% \cS_{\cL}(\{\mu_{t_i}\}_{i=0}^{M-1}) 
% &\coloneqq  \inf
% \limits_{\qm_t \in \mmarginalcouple} \cA_{\cL}[\qm_t] ~= \inf
% \limits_{\qm_t}
% %\limits_{(\qm_t, \dot{\qm}_t)} 
% \int_0^1 \mathcal{L}[\qm_t, \dot{\qm}_t,t]\dt 
% %\,\, \text{s.t. } \,\, &\dot{\qm}_t = - \divp{ \qm_t \vel_t } + \qm_t \action_t, \, 
% %\\ \text{ and } \,\,  
% \st \,
% %\qm_0 = \mu_0, \qquad \qm_1 = \mu_1
% \qm_{t_i}   = \mu_{t_i} \quad \text{$\forall \,\, 0\leq i \leq M-1$}.
% \end{align}
% \normalsize
% where $\mmarginalcouple = \{ \qm_t : [0,1] \rightarrow \cP(\cX) ~|~  \qm_0 = \mu_0, ~ \qm_1  = \mu_1, ~ \qm_{t_i}   = \mu_{t_i} \quad (\forall \,\, 1\leq i \leq M-2) \}$ indicates the set of density curves matching the given constraints.   
%\small
\begin{align}
%\begin{split}
&\cS_{\cL}(\{\mu_{t_i}\}_{i=0}^{M-1}) \coloneqq  \inf
\limits_{\qm_t \in \mmarginalcouple} \cA_{\cL}[\qm_t] \label{eq:general_lagr_min} \\
&= \inf \limits_{\qm_t}
\int_0^1 \mathcal{L}[\qm_t, \dot{\qm}_t,t]\dt 
%\\ 
\st \,
%\qm_0 = \mu_0, \qquad \qm_1 = \mu_1
\qm_{t_i}   = \mu_{t_i} \,\, \scalebox{.9}{ \text{$\forall \,\, 0\leq i \leq M-1$} } \nonumber
%\\ &\coloneqq  \inf \limits_{\qm_t \in \mmarginalcouple} \cA_{\cL}[\qm_t] ~ 
%\end{split}
%\raisetag{6\normalbaselineskip}
\end{align}
\normalsize
where $\mmarginalcouple = \{ \qm_t : [0,1] \rightarrow \cP(\cX) ~|~  \qm_0 = \mu_0, ~ \qm_1  = \mu_1, ~ \qm_{t_i}   = \mu_{t_i} \quad (\forall \,\, 1\leq i \leq M-2) \}$ indicates the set of curves matching the given constraints.  
%When clear from context, we proceed by writing the optimal value as $\cS$.
%simplify notation by writing $\cS$ and 
We proceed writing the value as $\cS$,
leaving the dependence on $\cL$ and $\mu_{t_i}$ implicit, but note $\cS(\mu_{0,1})$ for $M=2$ as an important special case. 
Our objectives for solving \cref{eq:general_lagr_min} 
are based on the Hamiltonian $\cH$ associated with the chosen Lagrangian. 
%in \cref{eq:lagr0}.  
In particular,  consider a cotangent vector $\lagr_t \in T_{\qm_t}^*\wmanifold$ {or $\lagr_t \in T_{\qm_t}^*\wfrmanifold$}, which is identified with a linear functional on the tangent space $\lagr_t[\cdot] : \dot{\qm}_t \mapsto \int \lagr_t \dot{\qm_t} \dx$ via the canonical duality bracket.  We define the \textit{Hamiltonian} $\cH[\qm_t, \lagr_t, t]$ via the Legendre transform
%which plays the role of a Lagrange multiplier enforcing the appropriate continuity equation.
%\small
\begin{align}
\begin{split} \label{eq:hamiltonian}
 \cH[\qm_t, \lagr_t, t] =~& \sup \limits_{\dot{\qm}_t \in T_{\qm_t}\cP} \int \lagr_t \dot{\qm_t} \dx - \mathcal{L}[\qm_t, \dot{\qm}_t,t]  \\
 =~&  \ifhl{\cK^*[\qm_t, \lagr_t] } + \cU[\qm_t, t] ,
 \end{split}
\end{align}
\normalsize
where the sign of $\cU[\qm_t, t]$ changes and $\cK^*[\qm_t, \lagr_t]$ translates the kinetic energy to the dual space.   
A primary example is when $\cK[\qm_t, \dot{\qm}_t] = \frac{1}{2}\tmetricat{\dot{\qm}_t, \dot{\qm}_t}{\qm_t}$ is given by a Riemannian metric in the tangent space (such as for $W_2$ or {$WFR_\lambda$}), then $\cK^*[\qm_t, \lagr_t] = \frac{1}{2}\cometricat{\lagr_t, \lagr_t}{\qm_t}$ is the same metric written in the cotangent space  
%(see $W_2$ in \cref{eq:w2_metric}, $WFR_{\lambda}$ in \cref{eq:wfr_metric}.  
%We focus on the $W_2$ metric in \cref{eq:w2_metric} and $WFR_{\lambda}$ in \cref{eq:wfr_metric} in the main text, but we more general kinetic energies and provide derivations in \cref{app:dual_kinetic}.
(see \cref{app:dual_kinetic} for detailed derivations for all examples considered in this work).
%\RB{To do:  add to or reference appendix?}

Finally, under our assumptions, $\mathcal{L}[\qm_t, \dot{\qm}_t,t]$ can be written using the Legendre transform, 
%$\mathcal{L}[\qm_t, \dot{\qm}_t,t]  = \sup_{\lagr_t \in T^*_{\qm_t}\cP} \int \lagr_t \dot{\qm_t} \dx - \cH[\qm_t, \lagr_t, t]$.   
%\small 
\begin{align}
\mathcal{L}[\qm_t, \dot{\qm}_t,t]  = \sup \limits_{\lagr_t \in T^*_{\qm_t}\cP} \int \lagr_t \dot{\qm_t} \dx - \cH[\qm_t, \lagr_t, t] . \label{eq:l_legendre}
\end{align}
\normalsize
The following theorem forms the basis for our computational approach, and can be derived using
this Legendre transform
%\cref{eq:l_legendre} 
and integration by parts in time (see \cref{app:dual} for proof and \cref{fig:ot-lagrangian} for visualization).
%Along with integration by parts in time, this leads to the following theorem which forms the basis for our approach .as the basis for our computational approach. %\RB{Used Legendre Eq. later, but could put in-line}

\begin{restatable}{theorem}{wlfdual}\label{prop:dual_obj}
For a Lagrangian $\mathcal{L}[\qm_t, \dot{\qm}_t,t]$ which is lsc and strictly convex in $\dot{\qm}_t$, the optimization
% \small
\begin{align}
%\begin{split}
%\cS_\cL(\{\mu_{t_i}\}) 
\cS
&= \inf \limits_{\qm_t \in \mmarginalcouple} \cA_{\cL}[\qm_t] = \inf \limits_{\qm_t \in \mmarginalcouple} \int_0^1 \mathcal{L}[\qm_t, \dot{\qm}_t,t]  \dt  \nonumber
%\label{eq:lagr_def_thm1}
%\, \st \,\, \qm_{t_i}   = \mu_{t_i} \,\, \text{\small $(\forall \,\, 0\leq i \leq M-1)$} \nonumber
% \end{align}
% \normalsize
\intertext{is equivalent to the following dual}
% \small
%\begin{align}
 %\cS_{\cL}(\{\mu_{t_i}\}) 
 %\begin{split}
 \cS =&~\inf \limits_{\qm_t \in \mmarginalcouple}  \sup \limits_{\lagr_t}  \int \lagr_1 \mu_1 \dx -  \int \lagr_0 \mu_0 \dx \label{eq:dual_intermediate} \\
 &
 \hphantom{\inf \limits_{\qm_t \in \mmarginalcouple}} -  \int_0^1  \left( \int  \frac{\partial \lagr_t }{\partial t}  \qm_t \dx  + \cH[\qm_t, \lagr_t, t] \right)  \dt \nonumber
% \end{split}
%\end{split}%\raisetag{4\normalbaselineskip}
\end{align}
\normalsize
where, for $\lagr_t \in T^*_{\qm_t}\cP$, the Hamiltonian $\cH[\qm_t, \lagr_t, t] $ is the Legendre transform of $\mathcal{L}[\qm_t, \dot{\qm}_t,t]$ (Eq \ref{eq:hamiltonian}).
%In particular, 

The action $\cA_{\cL}[\qm_t]$ of a given 
curve is the solution to the inner optimization, %\RB{Omit?}
%\small 
\begin{align}
\begin{split}
\cA_{\cL}[\qm_t] =&~ \sup \limits_{\lagr_t}   \int \lagr_1 \mu_1 \dx -  \int \lagr_0 \mu_0 \dx \\
&~~-  \int_0^1  \left( \int  \frac{\partial \lagr_t }{\partial t}  \qm_t \dx  + \cH[\qm_t, \lagr_t, t] \right)  \dt .
\end{split}\label{eq:action}
\end{align}
\normalsize
\end{restatable}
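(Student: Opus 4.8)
The plan is to establish the inner identity (Eq \ref{eq:action}) for an arbitrary fixed curve $\qm_t \in \mmarginalcouple$ and then recover the full dual (Eq \ref{eq:dual_intermediate}) by taking the infimum over $\qm_t$ on both sides; the outer $\inf_{\qm_t}$ passes through the equality unchanged, so the entire content lies in the inner statement. The starting point is the Legendre representation of the Lagrangian in Eq \ref{eq:l_legendre}, which is legitimate precisely because $\mathcal{L}[\qm_t, \dot{\qm}_t, t]$ is lsc and strictly convex in $\dot{\qm}_t$: by Fenchel--Moreau, $\mathcal{L}$ coincides with its biconjugate, so it can be written as the supremum over cotangent vectors $\lagr_t$ of $\int \dot{\qm}_t \lagr_t\, dx_t - \cH[\qm_t, \lagr_t, t]$.

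First I substitute this representation into the action to obtain
\begin{align}
\cA_{\cL}[\qm_t] = \int_0^1 \sup_{\lagr_t \in T^*_{\qm_t}\cP} \left( \int \dot{\qm}_t \lagr_t\, dx_t - \cH[\qm_t, \lagr_t, t] \right) dt.
\end{align}
The crucial move is to exchange the pointwise-in-time supremum with the time integral, replacing $\int_0^1 \sup_{\lagr_t}(\cdot)\, dt$ by $\sup_{\lagr_\cdot} \int_0^1 (\cdot)\, dt$, where the supremum on the right now ranges over entire cotangent paths $t \mapsto \lagr_t$. Next I integrate by parts in time in the pairing term, using $\int \dot{\qm}_t \lagr_t\, dx_t = \tfrac{d}{dt}\!\int \qm_t \lagr_t\, dx_t - \int \qm_t \tfrac{\partial \lagr_t}{\partial t}\, dx_t$, so that
\begin{align}
\int_0^1\!\! \int \dot{\qm}_t \lagr_t\, dx_t\, dt = \int \qm_1 \lagr_1\, dx_1 - \int \qm_0 \lagr_0\, dx_0 - \int_0^1\!\! \int \qm_t \frac{\partial \lagr_t}{\partial t}\, dx_t\, dt.
\end{align}
Invoking the endpoint constraints $\qm_0 = \mu_0$ and $\qm_1 = \mu_1$ satisfied by every curve in $\mmarginalcouple$ and substituting back gives exactly Eq \ref{eq:action}; taking $\inf_{\qm_t \in \mmarginalcouple}$ then yields Eq \ref{eq:dual_intermediate}.

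The main obstacle is rigorously justifying the supremum--integral interchange. One inequality is free: for any fixed path $\lagr_\cdot$ the integrand is bounded above by its pointwise supremum, so $\sup_{\lagr_\cdot} \int \leq \int \sup$. The reverse requires exhibiting a path that attains (or approximates) the pointwise optimum at almost every $t$. Here the strict convexity hypothesis does the work: it makes $\cH$ differentiable in $\lagr_t$ and the maximizer in Eq \ref{eq:l_legendre} unique, determined by the Hamilton relation $\dot{\qm}_t = \partial \cH / \partial \lagr_t$, so the optimal $\lagr_t$ is pinned down by the instantaneous velocity $\dot{\qm}_t$ and varies measurably with $t$ along a fixed curve. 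I would assemble these pointwise maximizers into an admissible path via a measurable selection argument, and separately check the integrability needed for the boundary terms and the time derivative $\partial_t \lagr_t$ to be well defined. This is the only step where the lsc/strict-convexity assumptions and the function-space nature of the cotangent fibers genuinely enter; the remaining manipulations are formal and mirror the structure of the Action Matching objective (Eq \ref{eq:action_matching}), specialized by the choice of $\cH$.
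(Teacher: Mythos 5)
Your proposal is correct and follows essentially the same route as the paper's proof: write $\mathcal{L}$ via its Legendre representation (Eq.~\ref{eq:l_legendre}), integrate the pairing $\int \dot{\qm}_t \lagr_t \, dx_t$ by parts in time, and invoke the endpoint constraints $\qm_0 = \mu_0$, $\qm_1 = \mu_1$ to replace $\qm_{0,1}$ in the boundary terms, with your choice to prove the inner identity (Eq.~\ref{eq:action}) first and then pass the outer infimum through being an immaterial reordering of the paper's argument. The only substantive difference is one of care rather than of method: the paper silently passes from $\int_0^1 \sup_{\lagr_t}(\cdot)\,dt$ to a supremum over cotangent paths, whereas you correctly identify this sup--integral interchange as the step requiring justification and sketch the measurable-selection argument (uniqueness of the pointwise maximizer from strict convexity) that fills it.
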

In line with our goal of defining Lagrangian actions directly on $\cP(\cX)$ instead of via $\cX$,  \cref{prop:dual_obj} operates only in the abstract space of densities.  See \cref{app:equivalence} for 
%a detailed 
discussion.

Finally, 
%in similar fashion to \cref{eq:hamiltonian_ground}, 
the solution to 
%the optimization in 
\cref{eq:general_lagr_min} can also be expressed in a Hamiltonian perspective, where the resulting \textit{Wasserstein Hamiltonian flows} \citep{chow2020wasserstein} satisfy optimality conditions $\frac{\partial \qm_t}{\partial t} = \frac{\delta }{\delta \lagr_t} \cH[\qm_t, \lagr_t, t]$ and $\frac{\partial \lagr_t}{\partial t} = -\frac{\delta }{\delta \qm_t} \cH[\qm_t, \lagr_t, t]$.

To further analyze \cref{prop:dual_obj} and set the stage for our computational approach in \cref{sec:computational}, we 
consider the two optimizations in \cref{eq:dual_intermediate} as (i)
\textbf{\textit{evaluating}} %(\cref{sec:eval}) 
the action functional $\cA_{\cL}[\qm_t]$ given a curve $\qm_t$, and (ii) \textbf{\textit{optimizing}} the action over curves $\qm_t \in \mmarginalcouple$ satisfying the desired constraints.   %We collect details of our parameterization and optimization in \cref{sec:parametrization} and \cref{algorithm:wlm}.

\vheader 
\subsubsection{Inner Optimization:  Evaluating $\cA_{\cL}[\qm_t]$ using Action Matching}\label{sec:eval}
\vheader
%\paragraph{Evaluating $\cA_{\cL}[\qm_t]$ using Action Matching:}
% For action-minimization problem in \cref{prop:dual_obj}, we now consider the inner optimization 
% problem of evaluating action functional $\cA_{\cL}[\qm_t]$ for a given $\qm_t \in \mmarginalcouple$.   
% For a dual linearizable Lagrangian in \cref{def:dual_linearizable}, the action in \cref{eq:action} is given by
% %\small
% \begin{equation}
% %\label{eq:dual_general} 
% %\nonumber
% \hspace*{-.22cm} \cA_{\cL}[\qm_t]=  \sup \limits_{\lagr_t}  \int \lagr_1 \mu_1 \dx -  \int \lagr_0 \mu_0 \dx  -\int_0^1\int \left( \frac{\partial \lagr_t }{\partial t} + K^*[x_t, \lagr_t, t] + U[x_t, \lagr_t, t] \right) \qm_t \dx \dt. \label{eq:am_general}
% \end{equation}
% \normalsize
%We treat the \gls{AM} optimization over $\lagr_t$ as an inner loop to evaluate $\cA_{\cL}[\qm_t]$ at each step in the optimization.
% \paragraph{Parameterization}
% Noting that cotangent vectors 
%  parameterize $\lagr_t$ as a neural network $\lagr_t(x,\theta)$ which takes $t$ and $x$ as inputs with parameters $\theta$.
% \paragraph{Relationship with Action Matching} 
We immediately recognize the similarity of \cref{eq:action} to the \gls{AM} objective in \cref{eq:action_matching} for $\cH[\qm_t,\lagr_t,t] = \int \frac{1}{2}\| \nabla \lagr_t\|^2 \qm_t \dx$, 
which suggests a generalized notion of
%suggesting that a generalized notion of
Action Matching as an inner loop to evaluate $\cA_{\cL}[\qm_t]$ for a given $\qm_t \in \mmarginalcouple$ in 
%the outer optimization of 
\cref{prop:dual_obj}.    %\hl{While \citet{neklyudov2023action} and our \cref{sec:computational} limit attention to $\cH[\qm_t,\lagr_t, t]$ which are in linear in $\qm_t$, \cref{eq:action} is well-defined generalization of \gls{AM}.}
For all $t$, the optimal cotangent vector $\lagr_{\dot{\qm}_t}$ corresponds to the tangent vector $\dot{\qm}_t$ of the given curve via the Legendre transform or \cref{eq:action}. %\hl{in \cref{eq:l_legendre}.}

%\paragraph{Relationship with Action Matching:} 
\citet{neklyudov2023action} assume access to samples from a \textit{continuous} curve of densities $\mu_{t}$ which, from our perspective, corresponds to the limit as the number of constraints $M \rightarrow \infty$.  Since $\qm_t \in \mmarginalcouple$ has no remaining degrees of freedom in this case, the outer optimization over $\qm_t$ can be ignored  and expectations in \cref{eq:action_matching} are written directly under $\mu_t$.
However, this assumption is often unreasonable in applications such as trajectory inference, where data is sampled discretely in time.

\vheader
\subsubsection{Outer Optimization over Constrained Distributional Paths}\label{sec:outer_curves}
\vheader

%\paragraph{Optimizing over Constrained Curves}
In our settings of interest, the outer optimization over curves 
$\cS_\cL(\{\mu_{t_i}\}) = \inf_{\qm_t \in \mmarginalcouple} \cA_{\cL}[\qm_t]$
%$\qm_t \in \mmarginalcouple$ 
is thus necessary to \textit{interpolate} between $M$ given marginals using the inductive bias encoded in the Lagrangian $\cL[\qm_t, \dot{\qm}_t, t]$.  
%In \gls{OT} settings, we are only interested in endpoint marginal constraints $\qm_0 = \mu_0$, $\qm_1 = \mu_1$. 
%However, in trajectory inference, the constraints $\mu_{t_i}$ may be given by a dataset of samples from the population marginals at discrete times $t_i$.
Crucially, our parameterization of $\qm_t$ in \cref{sec:parametrization} enforces $\qm_t \in \mmarginalcouple$ by design, given access to samples from $\mu_{t_i}$.
%, given access to samples from $\mu_{t_i}$, enforces $\qm_t \in \mmarginalcouple$ by design (see \cref{sec:parametrization}).
%  In settings such as optimal transport (\cref{sec:ot}-\ref{sec:wfr}), we are only interested in endpoint marginal constraints $\qm_0 = \mu_0$, $\qm_1 = \mu_1$. 
% However, in trajectory inference, the constraints $\mu_{t_i}$ may be given by a dataset of samples from the marginal distribution of the population dynamics at discrete times or intervals $t_i$.
% A key feature of our method is a parameterization of $\qm_t$ which, given access to samples from $\mu_{t_i}$, enforces $\qm_t \in \mmarginalcouple$ by design (see \cref{sec:parametrization}).
{Nevertheless, upon reaching an optimal $\qm_t$, our primary object of interest is the dynamics model corresponding to $\dot{\qm}_t$ and parameterized by the optimal $\lagr_{\dot{\qm}_t}$ in \cref{eq:action},
%\cref{eq:l_legendre} or (\ref{eq:action}), 
which may be used to transport particles or predict individual trajectories.}

%\vheader
%\vheader
\subsection{Computational Approach 
for Solving Wasserstein Lagrangian Flows
}\label{sec:computational}
%\vheader 
%\vheader
In this section, we describe our computational approach to solving for a class of Wasserstein Lagrangian Flows, which is summarized in \cref{algorithm:wlm}.
%, which proceeds in two steps corresponding to the inner and outer optimizations in \cref{eq:dual_intermediate}. 
%We first consider how to \textbf{\textit{evaluate}} %(\cref{sec:eval}) 
%the action functional $\cA_{\cL}[\qm_t]$ for a given curve $\qm_t$, and then propose an efficient method to \textbf{\textit{optimize}} the action over curves $\qm_t \in \mmarginalcouple$ satisfying the desired constraints.   
%We collect details of our parameterization and optimization in \cref{sec:parametrization} and \cref{algorithm:wlm}.

%\vheader
%\vspace*{-.1cm}
\subsubsection{Linearizable Objectives}
%Kinetic and Potential Energies}
%\vheader 
Despite the generality of \cref{prop:dual_obj}, we restrict attention to Lagrangians with the following property.
%\vheader
\begin{definition}[(Dual) Linearizability]\label{def:dual_linearizable}
 A Lagrangian $\cL[\qm_t, \dot{\qm}_t, t]$ is \textup{dual linearizable} if the corresponding Hamiltonian 
 $\cH[\qm_t, \lagr_t, t]$ can be written as a linear functional of the density $\qm_t$. \ifhl{In other words, $\cH[\qm_t, \lagr_t, t]$ is \textup{linearizable} if there 
 exists a function $\mathbb{H}(x,s_t, t)$}
 %exist functions $K^*(x,\lagr_t)$, and $U(x,t)$ 
 s.t. %such that %$\cH$ %the Hamiltonian $\cH[\qm_t, \lagr_t, t]$ 
 %associated with $L[\qm_t, \dot{\qm}_t, t]$ 
 %can be written
%\small
 \vspace*{-.1cm}
  \begin{align}
 % \vheader
\cH[\qm_t, \lagr_t, t] 
% =~&  
% \int \Big( K^*(x,\lagr_t) +  U(x,t) \Big)\qm_t(x) \dx \nonumber\\
% \coloneqq~& \int H(x,\lagr_t,t)\qm_t(x) \dx\,.  %\label{eq:linearizable}
% \\
% \coloneqq~& \int \mathds{H}(x,\lagr_t,t)\qm_t(x) \dx\,.  \\
\coloneqq~& \int \ifhl{\mathbb{H}(x,\lagr_t,t)\qm_t(x)} \dx\,. 
% \coloneqq~& \int \mathscr{H}(x,\lagr_t,t)\qm_t(x) \dx\,. 
 \end{align}
 % \footnotesize
%  \begin{align}
% \int_0^1 \cH[\qm_t, \lagr_t, t] \dt = 
% \hl{\int U[x_1, \lagr_1,1] \qm_1 \dx + \int U[x_0, \lagr_0,0] \qm_0 \dx  }+ 
% \int_0^1 \int \left( K^*[x_t, \lagr_t,t] +  U[x_t,\lagr_t,t] \right)\qm_t \dx  \dt \nonumber
%  \end{align}
\normalsize
 %can be rewritten or reparameterized to be of the form 
 % $\cH[\qm_t, \lagr_t, t] = \int \left( K^*[\lagr_t, t] +  U[\lagr_t, t] \right)\qm_t \dx~$ for all $0 \leq t \leq 1$.
 \end{definition}
  \vspace*{-.1cm}
%\RB{Confusion about ground space role?}
This property suggests that we only need to draw samples from $\qm_t$ and need not evaluate its density, which allows us to derive an efficient parameterization of curves 
satisfying $\qm_t \in \mmarginalcouple$ below. 
\footnote{In \cref{app:other_examples}, we highlight the Schrödinger Equation
%and dynamics following the porous medium equation (\cref{example:porous}) 
%as special cases 
as a special case 
of our framework which 
%do not appear to admit linear dual problems.
does not appear to admit a linear dual problem.   
%In these cases,
In this case, 
%In the absence of the dual linearizability condition, 
optimization of \cref{eq:dual_intermediate} may require explicit modeling of the density $\qm_t$ corresponding to a given set of particles $x_t$ (e.g. see \citet{pfau2020ab}).}

{As examples, note that the $WFR_{\lambda}$ or $W_2$ metrics as the Lagrangian yield a linear Hamiltonian $\cH[\qm_t, \lagr_t, t] = \cK^*[\qm_t, \lagr_t] 
%= \frac{1}{2}\cometricat{\lagr_t, \lagr_t}{\qm_t}^{WFR_\lambda}
=\int ( \frac{1}{2} \| \nabla \lagr_t\|^2 + \frac{\lambda}{2} \lagr_t^2 )\qm_t \dx$, with $\lambda=0$ for $W_2$.}
% As examples of linearizable objectives, it is clear that $\cH[\qm_t, \lagr_t, t] = \cK^*[\qm_t, \lagr_t] = \frac{1}{2}\cometricat{\lagr_t, \lagr_t}{\qm_t}$\hl{$=\int ( \frac{1}{2} \| \nabla \lagr_t\|^2 + \frac{\lambda}{2} \lagr_t^2 )\qm_t \dx$  is linear when using the $W_2$ (Eq. \ref{eq:w2_metric}) or $WFR_{\lambda}$ (Eq. \ref{eq:wfr_metric}) metrics.  
%This is also true of kinetic energies corresponding to \gls{OT} with ground-space Lagrangian costs (\cref{app:dual_kinetic}).
%In \cref{app:dual_kinetic}, we show that kinetic energies corresponding to \gls{OT} distances with ground-space Lagrangian costs are also linear.  %(\cref{eq:lagr_ground_cost}).
% In particular, we show in \cref{app:dual_kinetic} that a \hl{ kinetic energies of the form \cref{eq:linear_kinetic}, which are linear in the density 
% %with $K[x_t, v_t, g_t]$ strictly convex in $v_t$ and $g_t$, 
% are dual linearizable} (see \cref{example:ot}-\ref{example:lagr_ot} below).   
Potential energies $\cU[\qm_t,t] = \int V_t (x) \qm_t \dx$ which are linear in $\qm_t$ 
(\cref{example:pot}) clearly satisfy \cref{def:dual_linearizable}.   However, nonlinear potential energies (as in \cref{example:sb}) will require reparameterization to be linearizable.  %\hlb{See \cref{sec:examples}.}}

\newcommand{\myline}{\\[.9ex]}
\newcommand{\of}[1]{\big(#1\big)}
\setlength{\textfloatsep}{20pt}
% \resizebox{0.9\textwi\dth}{!}{
%\begin{figure}{t}
% \vspace*{-4.75em}
\begin{algorithm}[t]
%[H]
%\scriptsize
%\footnotesize
\small
  \begin{algorithmic}
    \caption{Learning Wasserstein Lagrangian Flows}\label{algorithm:wlm}
    \REQUIRE samples from the marginals $\mu_0, \mu_1$, parametric model $\lagr_t(x,\theta)$, generator from $\qm_t(x,\eta)$\myline
    \FOR{learning iterations}
        \STATE sample from marginals $\{x_0^i\}_{i=1}^n \sim \mu_0, \; \{x_1^i\}_{i=1}^n \sim \mu_1$ \myline
        \STATE sample time $\{t^i\}_{i=1}^n \sim \textsc{uniform}[0,1]$ \myline
        \STATE $x_t^i = (1-t^i)x_0^i + t^ix_1^i + t^i(1-t^i)\textsc{nnet}(t^i,x_0^i,x_1^i;\eta)$ \myline
        \STATE $\textsc{grad}_\eta = -\nabla_\eta \frac{1}{n}\sum_i^n\big[\frac{\partial \lagr_{t^i}}{\partial t}(x_t^i(\eta),\theta) +$ \myline
        \STATE $\qquad\qquad+ \mathbb{H}\of{x_t^i(\eta),\lagr_{t^i}(x_t^i(\eta),\theta),t^i}\big]$ \myline
        \FOR{Wasserstein gradient steps}
        \STATE $x_t^i \gets x_t^i + \alpha \cdot t^i (1-t^i)\nabla_x \big[\frac{\partial \lagr_{t^i}}{\partial t}(x_t^i,\theta)$ \myline
        \STATE $\qquad \quad +
        \mathbb{H}\of{x_t^i,\lagr_{t^i}(x_t^i,\theta),t^i}\big]$ \myline
        \ENDFOR
        \STATE $\textsc{grad}_\theta = \nabla_\theta \frac{1}{n}\sum_{i}^n\big[\lagr_1(x_1^i,\theta) - \lagr_0(x_0^i,\theta) - \frac{\partial \lagr_{t^i}(x_t^i,\theta)}{\partial t}$ \myline
        \STATE $\qquad\qquad-\mathbb{H}\of{x_t^i,\lagr_{t^i}(x_t^i,\theta),t^i}\big]$ \myline
        \STATE update parameters $\eta$ using gradient descent with $\textsc{grad}_\eta$ \myline
        \STATE update parameters $\theta$ using  gradient ascent with $\textsc{grad}_\theta$
    \ENDFOR
    \OUTPUT cotangent vectors $\lagr_t(x,\theta)$
  \end{algorithmic}
\end{algorithm}
 %\vspace*{-1em}
%\end{figure}

%\vheader
\subsubsection{Parameterization and Optimization}\label{sec:parametrization}
%\vheader
For any Lagrangian optimization with a linearizable dual objective as in \cref{def:dual_linearizable}, we consider
%We consider 
parameterizing the cotangent vectors $\lagr_t$ and the distributional path $\qm_t$.  %However, as mentioned above, we restrict 
% restricting attention to potential energies which admit a dual which is linear in $\qm_t$ as in \cref{def:dual_linearizable}.
We parameterize $\lagr_t$ as a neural network $\lagr_t(x,\theta)$ which takes $t$ and $x$ as inputs with parameters $\theta$, and outputs a scalar.
%\paragraph{Parametric NN Approach}
Inspired by the fact that we only need to draw samples from $\qm_t$ for these problems,
%potential energies with linearizable objectives as in \cref{def:dual_linearizable}, 
we parameterize the distribution path $\qm_t(x,\eta)$ as a generative model, where the samples are generated as follows
% For these cases, we propose a parameterization inspired by the fact that we only need to draw samples from $\qm_t$ and need not evaluate its density.
% We consider objectives of the following type
% \begin{align}
% \cS&(\cL, \{\mu_{0,1}\}) = \sup \limits_{\lagr_t} \inf \limits_{\qm_t} ~ \int \lagr_1 d\mu_1 -  \int \lagr_0 d\mu_0 -\int_0^1\int U[\lagr_t]\qm_t \dx \dt, 
% \end{align}
% where $U[\lagr_t]$ represents Hamilton-Jacobi equation.
%In particular, We parameterize the distribution path $\qm_t(x,\eta)$ as a generative model, where the samples are generated as follows
%\small 
\begin{align}
\begin{split}
    x_t &= (1-t)x_0 + tx_1 + t(1-t)\textsc{nnet}(t,x_0,x_1;
    %\mathbbm{1}[t < 0.5]; 
    \eta), \\ & x_0 \sim \mu_0, \qquad \;\; x_1 \sim \mu_1.
    \end{split} \label{eq:nn_two} \raisetag{8pt}
\end{align}
\normalsize
Notably, this preserves the endpoint marginals $\mu_0, \mu_1$. 
For multiple constraints, we can modify our sampling procedure to interpolate between two intermediate dataset marginals, with neural network parameters $\eta$ shared across timesteps
\small
% \frac{t_{i+1}-t}{t_{i+1}-t_i} \frac{t-t_{i}}{t_{i+1}-t_i}
% \begin{align}
%    &x_t= \frac{t_{i+1}-t}{t_{i+1}-t_i} x_{t_i} + \frac{t-t_{i}}{t_{i+1}-t_i} x_{t_{i+1}} \\ 
%     &+ \left(1- \left(\frac{t_{i+1}-t}{t_{i+1}-t_i}\right)^2 - \left(\frac{t-t_{i}}{t_{i+1}-t_i}\right)^2\right)\textsc{nnet}(t,x_{t_i},x_{t_{i+1}};\eta).
%     \nonumber
% \end{align}
\begin{align}
   x_t=~& \tilde{\beta}_t x_{t_i} + \hat{\beta}_t x_{t_{i+1}} + \left(1- \tilde{\beta}_t^2 - \hat{\beta}_t^2\right)\textsc{nnet}(t,x_{t_i},x_{t_{i+1}};\eta),\nonumber\\
    &\tilde{\beta}_t = \frac{t_{i+1}-t}{t_{i+1}-t_i}, \;\;\;\hat{\beta}_t=\frac{t-t_{i}}{t_{i+1}-t_i}\,.
\end{align}
\normalsize
%\RB{Never wrote the full linear minmax objective in this version}
For linearizable dual objectives as in \cref{{eq:dual_intermediate}} and \cref{def:dual_linearizable}, we optimize
\ifhl{
\small
\begin{align}
%\begin{split}
\label{eq:dual_for_optimization}
&\min \limits_{\eta}\max \limits_{\theta} %\textsc{loss}(\theta, \eta) \coloneqq 
\int \lagr_1(x,\theta) \mu_1 \dx -  \int \lagr_0(x,\theta) \mu_0 \dx \\
&- \int_0^1 \int 
\left( \frac{\partial \lagr_t}{\partial t}(x,\theta) + 
\mathbb{H} \of{x,\lagr_t(x,\theta),t} \right)
\qm_t(x,\eta)\dx \dt,    \nonumber
%\end{split}
\end{align}
} 
% \AM{\small
% \begin{align}
% \begin{split}\label{eq:dual_for_optimization}
% &\min \limits_{\eta}\max \limits_{\theta} \textsc{loss}(\theta, \eta) \coloneqq \int \lagr_1(x,\theta) \mu_1 dx -  \int \lagr_0(x,\theta) \mu_0 dx \\
% &- \int_0^1 \int 
% \left( \frac{\partial \lagr_t}{\partial t}(x,\theta) + 
% H\of{x,\lagr_t(x,\theta),t} \right)
% \qm_t(x,\eta)dx \dt,    
% \end{split}
% \end{align}}
\normalsize
%\RB{Swapped to $\min\max$ rather than $\max$$\min$}
where the optimization w.r.t. $\eta$ is performed via the re-parameterization trick.

\paragraph{Wasserstein Gradient Approach} An alternative to parametrizing the distributional path $\qm_t$ is 
to perform minimization of \cref{eq:dual_for_optimization}
%the minimization of \cref{eq:dual_general} \RB{WHAT should this be?}
%\cref{eq:dual_for_optimization}
via the Wasserstein gradient flow, i.e. the samples $x_t$ from the initial path $\qm_t$ are updated by %follows
% \small 
% \resizebox{\columnwi\dth}{!}{
% \begin{equation}
%     x_t' = x_t + \alpha \cdot t (1-t)\nabla_x \left[\frac{\partial \lagr_t}{\partial t}(x_t,\theta) + K^*\of{x,\lagr_t(x_t,\theta), t} + U\of{x,\lagr_t(x_t,\theta), t}\right]\,,
% \end{equation}
% }
% \normalsize
\small
\begin{align}
    x_t' = x_t + \alpha \cdot  t &(1-t)\nabla_x \bigg[\frac{\partial \lagr_t}{\partial t}(x_t,\theta) + \mathbb{H}\of{x_t,\lagr_t(x_t,\theta),t}\bigg]\,, \nonumber
\end{align}
\normalsize
where $\alpha$ is a hyperparameter regulating the step-size, and the coefficient $t(1-t)$ guarantees the preservation of the endpoints. 
In practice, we find that combining both 
the parametric and nonparametric
approaches works best.
The pseudo-code for the resulting algorithm is given in \cref{algorithm:wlm}. 

%\vheader
\paragraph{Discontinuous Interpolation}
%\vheader
The support of the optimal distribution path $\qm_t$ might be disconnected, as in \cref{fig:examples_a}.
Thus, it may be impossible to interpolate continuously between independent
%ly drawn 
samples from the marginals while staying in the support of the optimal path.
To allow for a discontinuous interpolation, we pass a discontinuous indicator variable $\mathbbm{1}[t < 0.5]$ to the model $\qm_t(x,\eta)$.
This indicator is crucial to ensure our parameterization is expressive enough to  approximate any suitable distributional path including, for example, the optimal \gls{OT} path (see proof in \cref{app:expressivity}).
%, including the optimal marginals for \textsc{ot} problems.}
%the parameterization in \cref{eq:nn_two} to approximate exact \textsc{ot} paths, which we consider in detail in \cref{app:}}.

%proving that the parameterization in \cref{eq:nn_two} is universal (see \cref{app:}).}

%\RB{ADD Expressivity}
%\paragraph{Expressivity of Parameterization}  In \cref{app:}, we prove 

\begin{restatable}{proposition}{express}%{\textup{\textbf{Expressivity of Parameterization}}} 
\label{prop:expressivity}
   For any absolutely-continuous distributional path $\rho_t :[0,1] \mapsto \wmanifold$ on the $W_2$ manifold, there exists a function $\textsc{nnet}^*(t, x_0, x_1, \mathbbm{1}[t < 0.5]; \eta)$ such that \cref{eq:nn_two} samples from $\rho_t$.
   %\hlb{ The neural network parameterization in \cref{eq:nn_two} 
   %is a universal approximator of absolutely-continuous distributional paths $\rho_t :[0,1] \mapsto \wmanifold$ on the Wasserstein-2 manifold.
   %can universally express any absolutely-continuous distributional path $\rho_t :[0,1] \mapsto \wmanifold$ on the Wasserstein-2 manifold.}
\end{restatable}

\vheader

\vheader
\vheader
\section{Examples of Wasserstein Lagrangian Flows}\label{sec:examples}
%\subsection{Special Cases}\label{sec:examples}
\vheader
% Having presented an abstract definition of Wasserstein Lagrangian flows and our approach to dual optimization, we now analyze the Lagrangians, dual objectives, and Hamiltonian optimality conditions corresponding to several important example problems. 
We now analyze the Lagrangians, dual objectives, and Hamiltonian optimality conditions corresponding to several important examples of Wasserstein Lagrangian flows.
We present various kinetic and potential energy terms using their motivating examples and $M=2$ endpoint constraints.
However, %note that 
we may combine 
various 
energy terms to construct Lagrangians $\cL[\qm_t,\dot{\qm}_t,t]$
%in \cref{eq:lagr0}  
and optimize subject to multiple constraints, as we do in 
%consider in 
our experiments in \cref{sec:experiments}.
%\RB{as is common in physics}

%As mentioned in \cref{sec:linearize}, 
%To facilitate our computational approach, we represent curves $(\qm_t, \dot{\qm}_t)$ using separate parameterizations of $\qm_t$ and any

%When considering quantities of the form $\int K[x_t, v_t, t] \qm_t \dx$ with optimization over curves in the ground space $(x_t, v_t)$, we replace the optimization over $x_t$ with an optimization over $\qm_t$. \RB{Improve}

\begin{example}[\textbf{$W_2$ Optimal Transport}]\label{example:w2ot}
The Benamou-Brenier formulation of $W_2$ optimal transport in \cref{eq:dynamical} is the simplest example of our framework, with no potential energy and the kinetic energy defined by the Otto metric $\cL[\qm_t, \dot{\qm}_t, t] = \frac{1}{2} \wmetricat{ \dot{\qm}_t, \dot{\qm}_t }{\qm_t} = \cH[\qm_t, \lagr_{\dot{\qm}_t}, t] = \frac{1}{2} \int \| \nabla \lagr_{\dot{\qm}_t} \|^2 \qm_t \dx$.   
In contrast to \cref{eq:dynamical}, note that our Lagrangian optimization in \cref{eq:general_lagr_min} is over $\qm_t$ only, while solving the dual objective introduces the second optimization to identify $\lagr_{\dot{\qm}_t}$ such that $\dot{\qm}_t = -\divp{ \qm_t \nabla \lagr_{\dot{\qm}_t}}$.
%$\lagr_{\dot{\qm}_t}$ .
%and parameterizing $\dot{\qm}_t = -\divp{ \mu \nabla \action }$ via \cref{eq:w2_tangent}, \cref{eq:q_lagr} recovers the Benamou-Brenier formulation 
% \begin{align}
%  W_2(\mu, \nu) \coloneqq \hl{\inf \limits_{\qm_t, \vel_t}} \hl{\text{or:} \inf \limits_{(\qm_t, \dot{\qm}_t)}} \int_0^1 
%  \frac{1}{2} \| \vel_t \|^2 
%  %\wmetricat{ \dot{\qm}_t, \dot{\qm}_t }{\qm_t}
%  %\cL[\qm_t, \dot{\qm}_t, t] 
%  \dt \st \,\, \qm_0 = \mu, \quad \qm_1 = \nu, \quad \dot{\qm}_t = -\divp{ \qm_t \hl{\vel_t} }.
% \end{align}
%\end{restatable}
%Setting $M=2$ to enforce endpoint marginal constraints $\{ \mu_0, \mu_1 \}$ only, and choosing \hl{$\lambda = \lagrv = \lagrsb = 0$}, 
Our objective for solving \gls{OT} %optimal transport 
problem with quadratic cost is % becomes
% \begin{align}
%     \cS_{OT} =  \inf \limits_{\qm_t} \sup \limits_{\lagr_t} ~ \int \lagr_1 \dd\mu_1 -  \int \lagr_0 \dd\mu_0 
%  - \int_0^1  \int \left( \frac{\partial \lagr_t }{\partial t}  + \frac{1}{2} \| \nabla \lagr_t \|^2 \right) \qm_t   \dx \dt ,
% \end{align}
%\small
\begin{align}
\begin{split}
    \cS_{OT} =&~  \inf \limits_{\qm_t \in \marginalcouple} \sup \limits_{\lagr_t} ~ \int \lagr_1 \mu_1 \dx-  \int \lagr_0 \mu_0 \dx
  \\
  &- \int_0^1  \int \left( \frac{\partial \lagr_t }{\partial t}  + \frac{1}{2} \| \nabla \lagr_t \|^2 \right) \qm_t   \dx \dt ,
  \end{split}
\end{align}
\normalsize
where the Hamiltonian optimality conditions 
$\frac{\partial \qm_t}{\partial t} = \frac{\delta }{\delta \lagr_t} \cH[\qm_t, \lagr_t, t]$ and $\frac{\partial \lagr_t}{\partial t} = -\frac{\delta }{\delta \qm_t} \cH[\qm_t, \lagr_t, t]$ \citep{chow2020wasserstein} recover the well-known characterization of the Wasserstein geodesics via the continuity and Hamilton-Jacobi equations \citep{benamou2000computational},
%where the optimality conditions in $\qm_t$ and $\lagr_t$ match the Hamiltonian characterization of the Wasserstein geodesics via the continuity and Hamilton-Jacobi equations (\citet{benamou2000computational}),
%\RB{$\frac{\delta}{\delta \qm_t}\cH$ optimality condition was referenced here previously}.   \hlb{Constant or 0?} 
%\small
\begin{align}
\dot{\qm_t} = - \divp{ \qm_t \nabla \lagr_t } \qquad  \frac{\partial \lagr_t }{\partial t}  + \frac{1}{2} \| \nabla \lagr_t \|^2 = 0 . \label{eq:ot_opt}
\end{align}
\normalsize
It is well known that optimal transport plans (or Wasserstein-2 geodesics) are `straight-paths' in the Euclidean space \citep{villani2009optimal}. For the flow induced by a vector field $\nabla \lagr_t$, we calculate the acceleration, or second time derivative, as
%with respect to time, as 
% \begin{align}
%   \frac{d X_t}{d t}(x)  = \nabla \lagr_t(x), ~X_0 = x_0 \,\, \implies\,  \ddot{X}_t &= \frac{d}{\dt }\nabla \lagr_t(x) = \frac{\partial }{\partial t}\nabla \lagr_t(x)  + \langle \frac{\dx}{\dt}, \frac{d}{\dx}\nabla \lagr_t(x)\rangle \nonumber \\
%   %&= \nabla \frac{\partial  \lagr_t(x)}{\partial t}  + \hl{\langle \nabla \lagr_t(x), \nabla^2 \lagr_t(x) \rangle} \nonumber \\
%discuss acceleration... 
%\small
\begin{align}
   \ddot{X}_t &= \nabla \left[ \frac{\partial  \lagr_t}{\partial t} + \frac{1}{2} \| \nabla \lagr_t \|^2 \right] = 0 ,  \label{eq:ot_accel1}
\end{align}
\normalsize
where zero acceleration is achieved if $\frac{\partial \lagr_t }{\partial t}  + \frac{1}{2} \| \nabla \lagr_t \|^2 = c, \forall t$, as occurs at optimality in \cref{eq:ot_opt}.
%If $\frac{\partial \lagr_t }{\partial t}  + \frac{1}{2} \| \nabla \lagr_t \|^2 = 0$ for all $t$ at optimality in \cref{eq:ot_opt}, then its gradient is everywhere zero and we obtain the well-known condition of zero-acceleration in the ground space $\ddot{X}_t = 0$ at optimality.
\end{example}

\begin{table*}[t]
    \caption{Results for high-dim PCA representation of single-cell data for corresponding datasets. We report Wasserstein-$1$ distance averaged over left-out marginals, averaged over $5$ independent runs. Results with citations are taken from corresponding papers.}
    \vspace{-10pt}
    \label{tab:manydim}
    \begin{center}
    \resizebox{\textwidth}{!}{
    \begin{tabular}{llllll}
         & \multicolumn{1}{c}{\bf dim=5} & \multicolumn{2}{c}{\bf dim=50} & \multicolumn{2}{c}{\bf dim=100} \\
         \multicolumn{1}{c}{\bf Method} & 
         \multicolumn{1}{c}{\bf EB} &
         \multicolumn{1}{c}{\bf Cite} & \multicolumn{1}{c}{\bf Multi} & \multicolumn{1}{c}{\bf Cite} & \multicolumn{1}{c}{\bf Multi} \\
          \hline \\
         exact OT & $0.822 $& $37.569$ & $47.084$ & $42.974$ & $53.271$ \\
         WLF-OT (ours) & $0.814 \pm 0.002 $ & $38.253 \pm 0.071$ & $47.736 \pm 0.110$ & $44.769 \pm 0.054$ & $55.313 \pm 0.754$ \\
         OT-CFM (more parameters) & $0.822 \pm 3.0\text{e-}4 $ & $37.821 \pm 0.010$ & $47.268 \pm 0.017$ & $44.013 \pm 0.010$ & $54.253 \pm 0.012$ \\
         OT-CFM \citep{tong2023improving} & $\mathbf{0.790 \pm 0.068}$  & $38.756 \pm 0.398$ & $47.576 \pm 6.622$ & $45.393 \pm 0.416$ & $54.814 \pm 5.858$ \\
         I-CFM \citep{tong2023improving} & $0.872 \pm 0.087$ & $41.834 \pm 3.284$ & $49.779 \pm 4.430$ & $48.276 \pm 3.281$ & $57.262 \pm 3.855$ \\
         \hline \\
         WLF-UOT ($\lambda = 1$, ours) &  $\mathbf{0.800 \pm 0.002 }$ & $\mathbf{37.035 \pm 0.079}$ & $45.903 \pm 0.161$ & $\mathbf{43.530 \pm 0.067}$ & $53.403 \pm 0.168$ \\
         WLF-SB (ours) &  $0.816 \pm 7.7\text{e-}4$ & $39.240 \pm 0.068$ & $47.788 \pm 0.111$ & $46.177 \pm 0.083$ & $55.716 \pm 0.058$ \\
         $[\text{SF}]^2$ M-Geo \citep{tong2023simulation} & 
         $0.879 \pm 0.148$
         %$1.221 \pm 0.38$ 
         & $38.524 \pm 0.293$ & $\mathbf{44.795 \pm 1.911}$ & $44.498 \pm 0.416$ & $\mathbf{52.203 \pm 1.957}$ \\
         $[\text{SF}]^2$ M-Exact \citep{tong2023simulation} & $\mathbf{0.793 \pm 0.066}$ & $40.009 \pm 0.783$ & $45.337 \pm 2.833$ & $46.530 \pm 0.426$ & $52.888 \pm 1.986$ \\ \hline \\
         WLF-(OT + potential, ours)  & $0.651 \pm 0.002$ & $36.167 \pm 0.031$ & $38.743 \pm 0.060$ & $42.857 \pm 0.045$ & $47.365 \pm 0.051$ \\
         WLF-(UOT + potential, $\lambda = 1$, ours)  & $\mathbf{0.634 \pm 0.001}$ & $\mathbf{34.160 \pm 0.041}$ & $\mathbf{36.131 \pm 0.023}$ & $\mathbf{41.084 \pm 0.043}$ & $\mathbf{45.231 \pm 0.010}$ \\\hline
    \end{tabular}}
    \end{center}
\end{table*}

\begin{example}[\textbf{Unbalanced Optimal Transport}]\label{example:wfr}
The \textit{unbalanced} \textsc{ot} problem arises from the $WFR_\lambda$ geometry, and is useful for modeling mass teleportation and changes in total probability mass when cell birth and death occur as part of the underlying dynamics \citep{schiebinger2019optimal, lu2019accelerating}.  Viewing the dynamical formulation of \textsc{wfr} in \cref{eq:wfr} as a Lagrangian optimization,
% \begin{align}
%     WFR_\lambda(\mu, \nu)^2 = \inf \limits_{\qm_t} \int_0^1 \cL[\qm_t, \dot{\qm}_t, t] \dt \quad \st \quad 
%     \begin{aligned}
%         \cL[\qm_t, \dot{\qm}_t, t] &= \frac{1}{2} \wfrmetricat{ \dot{\qm}_t, \dot{\qm}_t }{\qm_t}, \\ 
%        %\exists \action_t \text{ s.t.}\,\,\, 
%        \dot{\qm}_t &= -\divp{ \qm_t \nabla \action_t } + \qm_t \action_t  %\text{   for some $\action_t$}
%     \end{aligned}
% \end{align}
%\small
\begin{align}
  \hspace*{-.2cm} 
  \frac{1}{2} WFR_\lambda(\mu_0, \mu_{1}&)^2 =~ \inf \limits_{\qm_t \in \marginalcouple} \int_0^1 \cL[\qm_t, \dot{\qm}_t, t] \dt \\
  = \int_0^1 \frac{1}{2} ~&\wfrmetricat{ \dot{\qm}_t, \dot{\qm}_t }{\qm_t}  \dt 
  \;\text{ s.t. }\; \qm_0 = \mu_0, \qm_1 = \mu_1 . \nonumber
\end{align}
\normalsize
Compared to \cref{eq:wfr}, our Lagrangian formulation again optimizes over $\qm_t$ only, and solving the dual requires finding $\lagr_{\dot{\qm}_t}$ such that $\dot{\qm}_t = -\divp{ \qm_t \nabla \lagr_{\dot{\qm}_t}} + \lambda \qm_t \lagr_{\dot{\qm}_t}$ 
as in \cref{eq:wfr_tangent}.
%\end{example}
We optimize the objective % for solving \cref{example:wfr} becomes
% \small
% \begin{align}
%     \cS_{uOT} = \inf \limits_{\qm_t} \sup \limits_{\lagr_t} ~ \int \lagr_1 \dd\mu_1 -  \int \lagr_0 \dd\mu_0 
%  - \int_0^1 \left( \int \frac{\partial \lagr_t}{\partial t}  + \frac{1}{2} \| \nabla \lagr_t \|^2 + \frac{\lambda}{2} \lagr_t^2 \right) \qm_t   \dx \dt ,
% \end{align}
%\small
\begin{align}
  \hspace*{-.2cm}  \cS_{uOT} =&~ \inf \limits_{\qm_t \in \marginalcouple} \sup \limits_{\lagr_t} ~ \int \lagr_1 \mu_1 \dx-  \int \lagr_0 \mu_0 \dx
  \\
 &- \int_0^1  \int \left( \frac{\partial \lagr_t}{\partial t}  + \frac{1}{2} \| \nabla \lagr_t \|^2 + \frac{\lambda}{2} \lagr_t^2 \right) \qm_t   \dx \dt , \nonumber
\end{align}
\normalsize
where we recognize the $WFR_{\lambda}$ cotangent metric from \cref{eq:wfr_metric} in the final term,
$\cH[\qm_t, \lagr_t, t] = \cK^*[\qm_t, \lagr_t] = \frac{1}{2}\cometricat{\lagr_t, \lagr_t}{\qm_t}^{WFR_\lambda} = \frac{1}{2} \int \big(\|\nabla \lagr_{t}\|^2  + \lambda ~ \lagr_{t}^2 \big) \qm_t ~\dx$.
\end{example}

\begin{example}[\textbf{Physically-Constrained Optimal Transport}]\label{example:pot}
A popular technique for incorporating inductive bias from biological or geometric prior information into trajectory inference methods is to consider spatial potentials $\cU[\qm_t, t] = \int V_t(x) \qm_t \dx$ \citep{tong2020trajectorynet, koshizuka2022neural, pooladian2023neural}, which are already linear in the density.   In this case, we may consider \textit{any} linearizable kinetic energy  (see \cref{app:dual_kinetic}).
%, such as those arising from $W_2$ or $WFR_\lambda$ metrics above.
%The potential energy and Hamiltonian are naturally linear in $\qm_t$ for this case. 
For the $W_2$ transport case, our objective is
%\RB{For W2 dynamics, also could hold for WFR}
% linear functional into the potential energy 
%\RB{W2 case, but also extends to WFR}
% \small
% \begin{align}
%     \cS_{pOT} =  \inf \limits_{\qm_t} \sup \limits_{\lagr_t} ~ \int \lagr_1 \dd\mu_1 -  \int \lagr_0 \dd\mu_0 
%  - \int_0^1 \left( \int \frac{\partial \lagr_t (x)}{\partial t}  + \frac{1}{2} \| \nabla \lagr_t(x) \|^2 + \lagrv V_t(x) \right) \qm_t   \dx \dt, 
% \end{align}
% \normalsize
\small
\begin{align}
    \cS_{pOT}& =  \inf \limits_{\qm_t\in \marginalcouple} \sup \limits_{\lagr_t} ~ \int \lagr_1 \mu_1 \dx -  \int \lagr_0 \mu_0 \dx
  \\
  &- \int_0^1  \int \left( \frac{\partial \lagr_t (x)}{\partial t}  + \frac{1}{2} \| \nabla \lagr_t(x) \|^2 + \lagrv V_t(x) \right) \qm_t   \dx \dt,  \nonumber
\end{align}
\normalsize
%The first order 
%\hl{where we emphasize the arguments }
with the optimality conditions
\small
\begin{align}
 \dot{\qm_t} &= - \divp{ \qm_t \nabla \lagr_t }, 
 %\qquad  \frac{\partial \lagr_t }{\partial t}  + \frac{1}{2} \| \nabla \lagr_t \|^2  + \lagrv V_t = 0, 
 \quad \ddot{X}_t = \nabla \left[ \frac{\partial \lagr_t }{\partial t}  + \frac{1}{2} \| \nabla \lagr_t \|^2 \right] = - \lagrv \nabla V_t. \nonumber
\end{align}
\normalsize
%Using similar reasoning 
As in \cref{eq:ot_accel1}, the latter condition
implies that the acceleration is given by the gradient of the spatial potential $V_t(x)$.   We describe 
%our usage of 
the 
%spatial or physical
potentials 
used in our experiments
on scRNA datasets in \cref{sec:experiments}.
% \begin{align}
% \ddot{X}_t = \nabla \left[ \frac{\partial \lagr_t }{\partial t}  + \frac{1}{2} \| \nabla \lagr_t \|^2 \right] = - \lagrv \nabla V_t
% \label{eq:pot_accel}
% \end{align}
\end{example}
%\paragraph{Physical Potentials:}

\begin{example}[\textbf{Schrödinger Bridge}]\label{example:sb}
For many problems of interest, such as scRNA sequencing \citep{schiebinger2019optimal}, it may be useful to incorporate stochasticity into the dynamics as prior knowledge.  For Brownian-motion diffusion processes with known coefficient $\sigma$, 
the 
%stochastic transport problem \citep{mikami2008optimal} or 
dynamical Schrödinger Bridge (\gls{SB}) problem \citep{mikami2008optimal, leonard2013survey, chen2021stochastic} is given by
%it is well known that we can view these dynamics as arising from the Wasserstein-2 gradient flow of the Shannon entropy \citep{jordan1998variational} (see \cref{app:sb}).   
%The classical \gls{SB} 
%\RB{Check citations}.   
%Deformed dynamics correspond to the porous medium 
%\citet{leger2021hopf} analyze `generalized' \gls{SB} problem, where the dynamics include an additional drift corresponding to the gradient flow of a functional $\cF[\qm_t]$,
%\small
\begin{align}
%\begin{split}
\cS_{SB} &= \inf \limits_{\qm_t, v_t}
%{(\qm_t, \dot{\qm}_t)} 
\int_0^1 \int 
 \frac{1}{2} \| v_t \|^2 \qm_t \dx \dt
 %\frac{1}{2} \wmetricat{ \dot{\qm}_t, \dot{\qm}_t }{\qm_t} 
\label{eq:sb_std}  \\
 \st \, \dot{\qm}_t &= -\divp{ \qm_t \vel_t } + \sqrttwolagrsbat{} \Delta \qm_t, \,\quad \qm_0 = \mu_0, \quad \qm_1 = \mu_1 . \nonumber
 %\\
% &=  -\divp{ \qm_t \nabla \action_t } -\divp{ \qm_t \nabla \frac{\delta \cF[\qm_t]}{\delta \qm_t}  } 
%\end{split}\raisetag{25pt}
\end{align}
\normalsize
%While it well known that the Fokker-Planck dynamics in \cref{eq:sb_std} can be viewed as arising from the Wasserstein-2 gradient flow of the Shannon entropy \citep{jordan1998variational},     
%\citet{leger2021hopf} analyze a `generalized' \gls{SB} problem, where the dynamics include an additional drift corresponding to the $W_2$ gradient of a functional $\cF[\qm_t]$.  We consider this case in \cref{app:sb}.
%We analyze a more general family of dynamics \citep{leger2021hopf}.

% which recovers the Fokker-Planck equation for the Shannon entropy as $\cF[\qm_t]$ and is reminiscent of the dynamical \gls{SB} problem \citep{mikami2006duality, leonard2013survey, chen2016relation, chen2021stochastic}.
%\RBB{This generalized SB case is perhaps more general than just stochastic dynamics, e.g. interaction potentials in \citet{leger2021hopf}}

%More generally, and inspired by \citet{leger2021hopf}, we consider the Wasserstein-2 gradient norm as the potential energy at intermediate time points, along with additional endpoint energy terms,

To model the \gls{SB} problem, 
%in \cref{eq:sb_std} within our framework, 
we consider the 
following potential energy with the $W_2$ kinetic energy,
%Wasserstein-2 gradient norm as the potential energy at intermediate time points, along with additional endpoint energy terms (see \cref{app:sb} for detailed derivations).   In the case of 
% \small
% \begin{align}
%  \mathcal{K}[\qm_t, \dot{\qm}_t] = \frac{1}{2} \wmetricat{ \dot{\qm}_t, \dot{\qm}_t }{\qm_t}, \quad \,\, \cU[\qm_t, t] &=  -\lagrsbat{} \wnormat{ 
%     \gradw \cF[\qm_t] 
%     }{\qm_t} 
%     = -\lagrsbat{} \int \Big\|\nabla \frac{\delta \cF[\qm_t] }{\delta \qm_t} \Big\|^2 \qm_t \dx \qquad 
%    \label{eq:sb_potential0}\\
%     \cU[\qm_1, 1] &=  -\sqrttwolagrsbat{} \int \frac{\delta \cF[\qm_1]}{\delta \qm_1} \dd\mu_1, \quad  \text{and} \quad \cU[\qm_0, 0] =  \sqrttwolagrsbat{}\int\frac{\delta \cF[\qm_0]}{\delta \qm_0} \dd\mu_0 .\nonumber
% \end{align}
% \normalsize
% As seen from \cref{eq:gsb} (also see \cref{app:sb}), we recover the classical \gls{SB} problem for the Shannon entropy $\cF[\qm_t] = -H[\qm_t] = \int  (\log \qm_t -1) ~ d\qm_t $ with $\nabla \frac{\delta}{\delta \qm_t} \cF[\qm_t] = \nabla \log \qm_t$ and
%\small
\begin{align}
\cU[\qm_t, t] = -\lagrsbat{} \int \big\|\nabla  \log \qm_t \big\|^2 \qm_t \dx, \quad
   % \cU[\qm_1, 1] =  -\sqrttwolagrsbat{} \int \log \qm_1 \dd\mu_1, \,\,\, \text{and} \,\,\, \cU[\qm_0, 0] =  \sqrttwolagrsbat{}\int \log \qm_0 \dd\mu_0 .\nonumber
\end{align}
\normalsize
which arises from the negative entropy $\cF[\qm_t] 
%= -H[\qm_t] 
= \int  (\log \qm_t -1) ~ \qm_t \dx $ via the $W_2$ gradient
%$ \frac{\delta}{\delta \qm_t} \cF[\qm_t] =  \log \qm_t$ and 
$\nabla \frac{\delta}{\delta \qm_t} \cF[\qm_t] = \nabla \log \qm_t$ \citep{figalli2021invitation}.
% As seen from \cref{eq:gsb} (also see \cref{app:sb}), we recover the classical \gls{SB} problem for the Shannon entropy $\cF[\qm_t] = -H[\qm_t] = \int  (\log \qm_t -1) ~ \qm_t \dx $ with $\nabla \frac{\delta}{\delta \qm_t} \cF[\qm_t] = \nabla \log \qm_t$ and
We assume time-independent $\sigma$ to simplify $\cU[\qm_t, t]$, but consider time-varying $\sigma_t$ in  \cref{example:sb_app}.

To transform the potential energy term into a dual-linearizable form for the \gls{SB} problem, we consider the reparameterization $\hclagr_t =\lagr_t + \sqrttwolagrsbat{} \log \qm_t$, which 
%corresponds to the Hopf-Cole transform and 
%which 
translates between the drift $\nabla \lagr_t$ of the probability flow \textsc{ode} and the drift $\nabla \hclagr_t$ of the Fokker-Planck equation \citep{song2020score}.
%$\cU[\qm_t, t] = \int U[\lagr_t, t] \qm_t \dx$,
%\begin{example}[\textbf{Schrödinger Bridge}]\label{example:sb_dual}
%For the Schrödinger Bridge problem, we reparameterize $\lagr_t$ in order to transform the potential energy term into a dual-linear form, %$\cU[\qm_t, t] = \int U[\lagr_t, t] \qm_t \dx$,
%$U[\qm_t, x_t] = \|\nabla \log \qm_t \|^2 $ into a function of $\lagr_t$ only,  %$\hclagr_t = \lagr_t + \sqrttwolagrsbat{} \log \qm_t$
% \begin{align}
% \hclagr_t = \lagr_t + \sqrttwolagrsbat{} \frac{\delta \cF[\qm_t]}{\delta \qm_t}, \qquad \quad  \hclagr_t =\lagr_t + \sqrttwolagrsbat{} \log \qm_t  \quad \text{for  } \cF[\qm_t] = -\cH[\qm_t] .
% \end{align}
% \begin{align}
% $\hclagr_t =\lagr_t + \sqrttwolagrsbat{} \log \qm_t$ .
% \end{align}
% This reparameterization 
% %$\hclagr_t = \lagr_t + \sqrttwolagrsbat{} \frac{\delta \cF[\qm_t]}{\delta \qm_t}$ 
% is natural for potential energies defined via $W_2$ gradients \citep{leger2021hopf} and, for the case of the entropy and Schrödinger Bridge, translates between the drift $\nabla \lagr_t$ of the probability flow \textsc{ode} and the drift $\nabla \hclagr_t$ of the Fokker-Planck equation \citep{song2020score}.
With derivations in \cref{app:sb}, the dual objective becomes
%\small
% \begin{align}
%  \hspace*{-.2cm} \cS_{SB} = \inf \limits_{\qm_t}\sup \limits_{\hclagr_t}  ~ \int \hclagr_1 \dd\mu_1 
%   -  \int \hclagr_0 \dd\mu_0   -\int_0^1 \int  \Bigg( \frac{\partial \hclagr_t }{\partial t}  +\frac{1}{2} \big\| \nabla \hclagr_t \big\|^2  + \sqrttwolagrsbat{t}  \Delta \hclagr_t
% \Bigg) ~ \qm_t \dx \dt \label{eq:sb_t_final}.
% \end{align}
% \normalsize
%\small
\begin{align}
 \hspace*{-.2cm} &\cS_{SB} = \inf \limits_{\qm_t \in \marginalcouple}\sup \limits_{\hclagr_t}  ~ \int \hclagr_1 \mu_1 \dx
  -  \int \hclagr_0 \mu_0 \dx  \label{eq:sb_t_final} \\
  &\qquad -\int_0^1 \int  \Bigg( \frac{\partial \hclagr_t }{\partial t}  +\frac{1}{2} \big\| \nabla \hclagr_t \big\|^2  + \sqrttwolagrsbat{}  \Delta \hclagr_t
\Bigg) ~ \qm_t \dx \dt . \nonumber
\end{align}
\normalsize
%\begin{align}
% \cS_{SB} &= \sup \limits_{\hclagr_t} \inf \limits_{\qm_t} ~ \int \hclagr_1 \dd\mu_1 -  \int \hclagr_0 \dd\mu_0 +  \sqrttwolagrsbat{}  D_{KL}[\mu_1 : \qm_1] - \sqrttwolagrsbat{} D_{KL}[\mu_0 : \qm_0] \\
% &   \phantom{\sup \limits_{\hclagr_t} \inf \limits_{\qm_t}~~}
% -\int_0^1 \int  \left( \frac{\partial \hclagr_t }{\partial t}  + \frac{1}{2} \| \nabla \hclagr_t \|^2 + \sqrttwolagrsbat{}  ~ \Delta \hclagr_t \right)~  \qm_t 
% ~  \dx \dt \nonumber
% \end{align}
%\hl{We continue writing $\lagr_t$ to describe our methods below, although we optimize $\hclagr_t$ in the \gls{SB} case.}   %We limit our attention to the classical \gls{SB} problem with $\cF[\qm_t] = -\cH[\qm_t]$ in experiments, but refer to \cref{app:sb} \cref{example:porous} and \citet{leger2021hopf} for additional examples.
\end{example}
\section{Experiments  \protect\footnote{The code reproducing the experiments is available at \href{https://github.com/necludov/wl-mechanics}{https://github.com/necludov/wl-mechanics}}
}\label{sec:experiments}
We apply our methods for trajectory inference of single-cell RNA sequencing data, including the Embryoid body (\textbf{EB}) dataset \citep{moon2019visualizing}, CITE-seq (\textbf{Cite}) and Multiome (\textbf{Multi}) datasets \citep{burkhardt2022multimodal}, and melanoma treatment dataset of \citep{bunne2021learning, pariset2023unbalanced}.
%We provide code at the following 
%\href{https://github.com/necludov/wl-mechanics.git}{link}.
%which test (i) our ability to approximate interpolating marginals using various Lagrangians, 
%%%%% REMOVED From ARXIV %%%%%
% %As discussed in \cref{sec:intro}, while scRNAseq data allows unprecedented insight into the cell state by sampling RNA from many cells in parallel, it is impossible in this modality to track the trajectories of individual cells, which are crucial for inferring the model of cell development. Instead of individual cell trajectories, this data consists of a number of population measurements over time.
% Significant progress has been made in designing models which can infer cell trajectories from this datatype, primarily based on optimal transport (OT) assumptions \citep{schiebinger2019optimal,hashimoto2016learning,bunne2022proximal,tong2023improving} and other priors \citep{tong2020trajectorynet,huguet2022geodesic,tong2023simulation,koshizuka2022neural}.
% The Lagrangian perspective gives a principled way to incorporate various inductive biases into the dynamics, which do not change the given marginal distributions but change the underlying trajectories.

\begin{table}[t]
\vspace*{-5pt}
        \captionof{table}{Results for train/test splits of $5$-dim PCA on EB dataset, with the setting and baseline results taken from \citet[Table 1]{koshizuka2022neural}.   We report W1 distance between test $\mu_{t_i}$ and $\rho_{t_i}$ obtained by running dynamics from $\mu_{t_{i-1}}$.
        %\RB{ADDED TONG}
        }\label{tab:lagr_sb}
    %\vspace*{.2cm}
    %We report Wasserstein-$1$ distance averaged over left-out marginals. All results are averaged over $5$ independent runs. Results with citations are taken from corresponding papers.}
    \label{tab:5dim}
    \vspace{10pt}
    %\begin{center}
    \resizebox{0.99\columnwidth}{!}{
    \begin{tabular}{lccccc}
         % &  \multicolumn{3}{c}{\bf dim=5} \\
         \multicolumn{1}{c}{\bf Model} & \multicolumn{1}{c}{\bf $t_1$} & \multicolumn{1}{c}{\bf $t_2$} & \multicolumn{1}{c}{\bf $t_3$} & \multicolumn{1}{c}{\bf $t_4$} & Mean
         \\ \hline \\
 Neural SDE \citep{li2020scalable}             & 0.69 & 0.91 & 0.85 & 0.81 & 0.82 \\
 TrajectoryNet \citep{tong2020trajectorynet}        & 0.73 & 1.06 & 0.90 & 1.01 & 0.93 \\
 IPF (GP) \citep{vargas2021solving}               & 0.70 & 1.04 & 0.94 & 0.98 & 0.92 \\
 IPF (NN) \citep{de2021diffusion}     & 0.73 & 0.89 & 0.84 & 0.83 & 0.82 \\
 SB-FBSDE \citep{chen2021likelihood}               & 0.56 & 0.80 & 1.00 & 1.00 & 0.84\\ 
 NLSB \citep{koshizuka2022neural}           & 0.68 & 0.84 & 0.81 & 0.79 & 0.78\\ 
 OT-CFM \citep{tong2023improving}           & 0.78 & 0.76 & 0.77 & 0.75 & 0.77 \\
 \hline \\
 WLF-OT                  & 0.65 & 0.78 & 0.76 & 0.75 & 0.74 \\
 WLF-SB                  & 0.63 & 0.79 & 0.77 & 0.74 & \textbf{0.73} \\
 WLF-(OT + potential)    & 0.64 & 0.77 & 0.76 & 0.76 & \textbf{0.73} \\
 WLF-UOT ($\lambda = 0.1$)               & 0.64 & 0.84 & 0.80 & 0.81 & 0.77 \\
 WLF-(UOT + potential, $\lambda = 0.1$)  & 0.67 & 0.80 & 0.78 & 0.78 & 0.76\\ \hline
    \end{tabular}}
    %\end{center}
    %\vspace{10pt}
%\end{table}
\end{table}

%\vheader
%\vheader 
% \vspace*{-.12cm}
\paragraph{Potential for Physically-Constrained OT}
For all tasks, we consider the simplest possible model of the physical potential accelerating the cells.  For each marginal except the first and the last ones, we estimate the acceleration of its mean using finite differences.
The potential for the corresponding time interval is then $V_t(x) = -\inner{x}{a_t}$, where $a_t$ is the estimated acceleration of the mean value.
For leave-one-out tasks, we include the mean of the left out marginal since the considered data contains too few marginals ($4$ for Cite and Multi) to learn 
%for learning 
a meaningful model of the acceleration.

% \vheader
% \vheader 
% \vheader
\vspace*{-.12cm}
\paragraph{Leave-One-Out Marginal Task}
%Since the ground truth cell trajectories are unavailable in this data, 
To test the ability of our approaches to approximate interpolating marginal distributions, we follow \cite{tong2020trajectorynet} and evaluate models 
using a leave-one-timepoint-out strategy.   In particular, we train on all marginals except at time $t_i$, and evaluate by
%\ we interpolate time $t$ using all other marginals 
%in the training data)
%between the train data marginals 
%and compute 
computing
the Wasserstein-$1$ distance between the predicted marginal $\rho_{t_i}$ and the left-out marginal $\mu_{t_i}$. For preprocessing and baselines, we follow \citet{tong2023simulation,tong2023improving} (see App.~\ref{app:sc_experiments_details} for details).

In \cref{tab:manydim}, we report results on EB, Cite, and Multi datasets.
First, we see that our proposed 
WLF-OT method achieves comparable results to related approaches: OT-CFM and I-CFM \citep{tong2023improving}, which use minibatch OT couplings or independent samples of the marginals, respectively.
For OT-CFM, we reproduce the results using a larger model to match the performance of the exact OT solver \citep{flamary2021pot}.  These models represent dynamics with minimal prior knowledge, and thus serve as a baseline when compared against dynamics incorporating additional priors.

% We test our approach for Embryoid body (\textbf{EB}) data from \citet{moon2019visualizing}, and CITE-seq (\textbf{Cite}) and Multiome (\textbf{Multi}) datasets from \citet{burkhardt2022multimodal}.
% For preprocessing and baselines, we follow \citet{tong2023simulation,tong2023improving} (see App.~\ref{app:sc_experiments_details} for details).
%. For further details, see App.~\ref{app:sc_experiments_details}.

% In \cref{tab:5dim,tab:manydim}, we report the results for all models.
% First, we report the performance of the proposed optimal transport flow along with analogous approaches: OT-CFM and I-CFM \citep{tong2023improving}, which use the OT couplings and independent couplings of the marginals.
% For OT-CFM, we reproduce the results using a larger model to match the performance of the exact OT solver.
% These models represent dynamics with minimal prior knowledge, and thus serve as a baseline when compared against dynamics incorporating additional priors.

Next, we consider Lagrangians which encode various prior information.
%several priors of different nature.
WLF-SB (ours) and $[\text{SF}]^2$ M-Geo and -Exact \citep{tong2023simulation}
%and SB-CFM \citep{tong2023improving} 
incorporate stochasticity into the dynamics by solving the \gls{SB} problem; $[\text{SF}]^2$ M-Geo takes advantage of the data manifold geometry by learning from \gls{OT} couplings generated with the approximate geodesic cost; our WLF-UOT incorporates probability mass teleportation using the $WFR$ kinetic energy.
%probability mass teleportation by solving the dynamical unbalanced optimal transport problem (using the $WFR$ metric on $\wfrmanifold$).
In \cref{tab:manydim}, we 
see that 
%introducing mass teleportation 
%using 
WLF-UOT yields
%into the dynamical formulation yields 
consistent performance improvements across datasets.
Finally, we observe that a good model of the potential function can 
notably
%drastically 
improve 
 performance, using either $W_2$ or $WFR$ kinetic energy.
 %and can be combined with either $W_2$ or $WFR$ kinetic energies.

% Finally, we incorporate the simplest possible model of the physical potential accelerating the cells.
% For every marginal except the first and the last ones, we estimate the acceleration of its mean using finite differences.
% The potential for the corresponding time interval is then $V_t(x) = -\inner{x}{a_t}$, where $a_t$ is the estimated acceleration of the mean value.
% Here, we use the mean of the left out marginal since the considered data contains too few marginals ($4$ for Cite and Multi) for learning a meaningful model of the acceleration.
% \cref{tab:5dim,tab:manydim} demonstrate that a good model of the potential function can drastically improve the performance, with WLF-(UBOT + potential) achieving the best performance overall. 

\begin{table}[t]
\vspace*{-5pt}
\captionof{table}{Results in the setting of \citet[Table 1]{pariset2023unbalanced} (uDSB) for melanoma treatment data with 3 marginals and train/test splits.  We report test MMD and W2 distance between $\mu_1$ and $\rho_1$ obtained by running dynamics from $\mu_0$. }\label{tab:ub_ot}
%Models are evaluated by propagating test samples from $\mu_0$ to obtain $\rho_1$, and calculating the MMD and W2 distance with the final test marginal $\mu_1$.}
\vspace{10pt}
% \small
\resizebox{0.99\columnwidth}{!}{
\begin{tabular}{lcc}
 \multicolumn{1}{c}{\bf Model} & \multicolumn{1}{c}{MMD} & \multicolumn{1}{c}{$W_2$}
         \\ \hline \\
 \text{\small SB-FBSDE \citep{chen2021likelihood}}          & 1.86e-2 & 6.23 \\
 uDSB  (no growth) \citep{pariset2023unbalanced} & 1.86e-2 & 6.27 \\
 uDSB (w/growth) \citep{pariset2023unbalanced} & 1.75e-2 & 6.11 \\ \hline \\
 WLF-OT (no growth) & \textbf{5.04e-3} & 5.20 \\
 WLF-UOT ($\lambda=0.1$) & 9.16e-3 & \textbf{5.01} \\ \hline
\end{tabular}
}
%\vspace{10pt}
\end{table}

%\vheader
\textbf{Comparison with SB Baselines on EB Dataset}   To compare against further 
%baselines for the 
\gls{SB} baselines, we consider the setting of \citet[Table 1]{koshizuka2022neural} on the EB dataset.   Instead of leaving out one marginal, we divide the data using a train/test split and evaluate the W1 distance between the test $\mu_{t_i}$ and $\rho_{t_i}$ obtained by running dynamics from the previous $\mu_{t_{i-1}}$.  In \cref{tab:lagr_sb}, we find that WLF-SB outperforms several \gls{SB} methods from recent literature (see \cref{sec:related}). 
%, including Neural Lagrangian \gls{SB} (NLSB) which incorporates physical potentials \citep{koshizuka2022neural}.

%\vheader
\textbf{Comparison with UOT Baseline on Melanoma Dataset}   To test the ability of our WLF-OT approach to account for cell birth and death, we consider the 50-dim. setting of \citet[Table 1]{pariset2023unbalanced} for melanoma cells undergoing treatment with a cancer drug.   
%This dataset contains three marginals, with total mass at $\mu_{t_1}$ and $\mu_1$ equal to $135\%$ and $75\%$ of the total mass at the initial $\mu_0$.
In \cref{tab:ub_ot}, we show that WLF-OT and WLF-UOT can outperform the unbalanced baseline (uDSB) from \citet{pariset2023unbalanced}.
%, albeit with a small coefficient $\lambda = 0.1$ for the growth term. 

%increases to $135\%$ of the original total mass at $\mu_0$, before decreasing to $75\%$ of the original mass at $\mu_1$.

% \vheader
% \vheader
%\vspace{-.1cm}
\section{Related Work}\label{sec:related}

%\section{Discussion}
% We thus represent the learned $\qm_t$ using a sampler which is parameterized by a neural network
% and is guaranteed to match the observed marginals, 
% Our method bypasses the need to simulate or backpropagate through learned dynamics as in existing methods for \gls{SB} \citep{de2021diffusion, chen2021likelihood} or unbalanced \textsc{ot} \citep{pariset2023unbalanced}.  In contrast to previous work on normalizing flows \citep{liu2020potential, tong2020trajectorynet, koshizuka2022neural}, we do not introduce regularization to enforce optimality conditions and, in contrast to previous work in (Rectified) Flow Matching or dynamical (regularized) optimal transport \citep{lipman2022flow, pooladian2023multisample, tong2023improving, tong2023simulation, liu2022flow, liu2022rectified}, we do not require informative initialization of couplings using \textsc{ode} simulation or minibatch \gls{OT} solvers. \RB{Copied from Intro.  Probably best to omit from Intro and rewrite here}
%\vspace{-.125cm}
\paragraph{Wasserstein Hamiltonian Flows}
\citet{chow2020wasserstein} develop the notion of a Hamiltonian flow on the Wasserstein manifold and consider several of the same examples discussed here.
While the Hamiltonian and Lagrangian formalisms describe the same integral flow through optimality conditions for $(\qm_t, \dot{\qm}_t)$ and $(\qm_t, \lagr_t)$, \citet{chow2020wasserstein, wu2023parameterized} emphasize solving the Cauchy problem suggested by the Hamiltonian perspective (see also \citet{taghvaei2019accelerated, wang2022accelerated} for accelerated flows). 
Our approach recovers the Hamiltonian flow $(\qm_t, \lagr_t)$ in the cotangent bundle at optimality, but does so by solving a variational problem.
Finally, \citet{conforti2019second, gentil2020dynamical, conforti2018extremal} provide theoretical analysis related to Lagrangian flows on the Wasserstein space.

%\RB{Mean Field Games}

% {\citet{alvarez2021optimizing, mokrov2021large, bunne2022proximal, fan2022variational} consider solving Wasserstein gradient flows using the \textsc{jko} scheme \citep{jordan1998variational}, but these approaches do not directly address marginal-constrained problems.
% }
% \begin{table}
%    \vspace{-10pt}
% \captionof{table}{Results in the setting of \citet[Table 1]{pariset2023unbalanced} (uDSB) for melanoma treatment data with 3 marginals and train/test splits.  We report test MMD and W2 distance between $\mu_1$ and $\rho_1$ obtained by running dynamics from $\mu_0$. }\label{tab:ub_ot}
% %Models are evaluated by propagating test samples from $\mu_0$ to obtain $\rho_1$, and calculating the MMD and W2 distance with the final test marginal $\mu_1$.}
% %\vspace*{.1cm}
% % \vspace*{-.05cm}
% \small
% \resizebox{0.99\columnwidth}{!}{
% \begin{tabular}{lcc}
%  \multicolumn{1}{c}{\bf Model} & \multicolumn{1}{c}{MMD} & \multicolumn{1}{c}{$W_2$}
%          \\ \hline \\
%  \text{\small SB-FBSDE \citep{chen2021likelihood}}          & 1.86e-2 & 6.23 \\
%  uDSB  (no growth) \citep{pariset2023unbalanced} & 1.86e-2 & 6.27 \\
%  uDSB (w/growth) \citep{pariset2023unbalanced} & 1.75e-2 & 6.11 \\ \hline \\
%  WLF-OT (no growth) & \textbf{5.04e-3} & 5.20 \\
%  WLF-UOT ($\lambda=0.1$) & 9.16e-3 & \textbf{5.01} \\ \hline
% \end{tabular}
% }
% \end{table}

%(in the absence of learned energy functionals \citep{bunne2022proximal}.)
\paragraph{Flow Matching and Diffusion SB Methods}
Flow Matching methods \citep{liu2022rectified, lipman2022flow, albergo2022building,albergo_stochastic_2023, tong2023improving, tong2023simulation} learn a marginal vector field corresponding to a mixture-of-bridges process parameterized by a coupling and interpolating bridge \citep{shi2023diffusion}.   When samples from the endpoint marginals are coupled via an \gls{OT} plan, Flow Matching solves a dynamical optimal transport problem \citep{pooladian2023multisample}.
Rectified Flow obtains couplings using ODE simulation with the goal of straight-path trajectories for generative modeling \citep{liu2022rectified, liu2022flow}, which is extended to  SDEs in bridge matching methods \citep{ peluchetti2022nondenoising, peluchetti2023diffusion,shi2023diffusion}.  Diffusion Schrödinger Bridge (DSB) methods \citep{de2021diffusion, chen2021likelihood} also update the couplings iteratively based on learned forward and backward \textsc{sde}s, and have recently been adapted to solve the unbalanced \gls{OT} problem in \citet{pariset2023unbalanced}.
Finally, \citet{liu2022deep, liu2023generalized} consider extending DSB or bridge matching methods to solve physically-constrained \gls{SB} problems. 
Unlike the above methods, our approach does not require optimal couplings to sample from the intermediate marginals, and thus avoids both simulating ODEs or SDEs and running minibatch (regularized) \gls{OT} solvers.

\vheader
\vspace*{-.2cm}
\paragraph{Optimal Transport with Lagrangian Cost}
Input-convex neural networks \citep{amos2017input} provide an efficient approach to static \gls{OT} \citep{makkuva2020optimal, korotin2021neural, bunne2021learning, bunne2022supervised} but are limited to the Euclidean cost.
Several works extend to other costs using static \citep{fan2022scalable, pooladian2023neural, uscidda2023monge} or dynamical formulations \citep{liu2021learning, koshizuka2022neural}.
The most general way to define a transport cost is via a Lagrangian action in the state-space (\citet{villani2009optimal} Ch. 7).   While we focus on lifted Lagrangians in the density space, our framework encompasses \textsc{ot} with state-space Lagrangian costs (\cref{app:lagr}).

%\vspace*{-.2cm}
\section{Conclusion}\label{sec:conclusion}
%\vspace*{-.1cm}
In this work, we demonstrated that many variations of optimal transport, such as Schrödinger Bridge, unbalanced \gls{OT}, or \gls{OT} with physical constraints can be formulated as Lagrangian action %functional 
minimization on the density manifold. 
%We proposed a computational framework for solving this optimization, by deriving its dual objective involving cotangent vectors that can be easily parameterized via a neural network. 
%The cotangent vectors, in the transport examples considered in this work, correspond to a vector field on the state-space and
%The cotangent vectors in some examples such as $WFR$ geometry correspond to a vector field on the state-space.
% by deriving its dual, and parameterizing the dual vectors (corresponding to the state space movements of particles) with neural networks. 
We proposed a computational framework for this minimization by deriving a dual objective in terms of cotangent vectors, which correspond to a vector field on the state-space 
%in our transport examples 
and can be parameterized via a neural network.
As an application,
%of our method, 
we studied the problem of 
%We applied our method for
trajectory inference in biological systems, and showed that we can incorporate prior knowledge of the dynamics while respecting marginal constraints on the observed data, resulting in significant improvement in several benchmarks. We expect our approach can be extended to other natural science domains such as quantum mechanics and social sciences by incorporating new prior 
information for learning the underlying dynamics.

\section*{Impact Statement} 
This paper presents work whose goal is to advance the field of machine learning. There are many potential societal consequences of our work, none which we feel must be specifically highlighted here.

\bibliography{icml2024}
\bibliographystyle{icml2024}

%%%%%%%%%%%%%%%%%%%%%%%%%%%%%%%%%%%%%%%%%%%%%%%%%%%%%%%%%%%%%%%%%%%%%%%%%%%%%%%
%%%%%%%%%%%%%%%%%%%%%%%%%%%%%%%%%%%%%%%%%%%%%%%%%%%%%%%%%%%%%%%%%%%%%%%%%%%%%%%
% APPENDIX
%%%%%%%%%%%%%%%%%%%%%%%%%%%%%%%%%%%%%%%%%%%%%%%%%%%%%%%%%%%%%%%%%%%%%%%%%%%%%%%
%%%%%%%%%%%%%%%%%%%%%%%%%%%%%%%%%%%%%%%%%%%%%%%%%%%%%%%%%%%%%%%%%%%%%%%%%%%%%%%
\newpage
\appendix
\onecolumn

\section{General Dual Objectives for Wasserstein Lagrangian Flows}\label{app:dual}
%We proceed with derivations by including all terms under consideration, where particular cases in \cref{sec:cases1} may be obtained by choosing the appropriate tradeoff parameters $\lambda, \beta$ or marginal constraints.  
In this section, we derive the general forms for the Hamiltonian dual objectives arising from Wasserstein Lagrangian Flows.    We prove \cref{prop:dual_obj} and derive the general dual objective in \cref{eq:dual_intermediate} of the main text, before considering the effect of multiple marginal constraints in \cref{app:multiple-marginal}.
We defer explicit calculation of Hamiltonians for important special cases to \cref{app:additional_examples}.
\wlfdual*
% \begin{theorem} 
% For a Lagrangian $\mathcal{L}[\qm_t, \dot{\qm}_t,t]$ which is strictly convex in $\dot{\qm}_t$, the optimization 
% \begin{align}
% \cS_{\cL}( \{\mu_{0,1}\}) = \inf \limits_{\qm_t} \int_0^1 \mathcal{L}[\qm_t, \dot{\qm}_t,t] \dt 
% \st \quad  \qm_0 = \mu_0, \qquad \qm_1 = \mu_1
% %\, \st \,\, \qm_{t_i}   = \mu_{t_i} \,\, \text{\small $(\forall \,\, 0\leq i \leq M-1)$} \nonumber
% \end{align}
% is equivalent to the following dual
% \begin{align}
% \cS_{\cL}&(\{\mu_{0,1}\}) = \inf \limits_{\qm_t} \sup \limits_{\lagr_t}  ~ \int \lagr_1 \mu_1 \dx -  \int \lagr_0 \mu_0 \dx -  \int_0^1  \left( \int   \frac{\partial \lagr_t }{\partial t}  \qm_t \dx  + \cH[\qm_t, \lagr_t, t] \right)  \dt
% \label{eq:dual_intermediate1}
% \end{align}
% where, for $\lagr_t \in \cT^*_{\qm_t}\cP$, the Hamiltonian $\cH[\qm_t, \lagr_t, t] $ is the Legendre transform of $\mathcal{L}[\qm_t, \dot{\qm}_t,t]$
% \begin{align}
% \cH[\qm_t, \lagr_t, t]  &= \sup \limits_{\dot{\qm}_t \in \cT_{\qm_t}\cP} ~ \int \lagr_t \dot{\qm_t} \dx - \mathcal{L}[\qm_t, \dot{\qm}_t,t] \label{appeq:hamiltonian} \qquad \\
% \mathcal{L}[\qm_t, \dot{\qm}_t,t]  &= \sup \limits_{\lagr_t \in \cT^*_{\qm_t}\cP} ~ \int \lagr_t \dot{\qm_t} \dx -  \cH[\qm_t, \lagr_t, t]\label{eq:lagr_hamil}
% \end{align}
% \end{theorem}
Recall the definition of the Legendre transform for $\cL[\qm_t,\dot{\qm}_t, t]$ strictly convex in $\dot{\qm}_t$,
\begin{align}
\cH[\qm_t, \lagr_t, t]  &= \sup \limits_{\dot{\qm}_t \in \cT_{\qm_t}\cP} ~ \int \lagr_t \dot{\qm_t} \dx - \mathcal{L}[\qm_t, \dot{\qm}_t,t] \label{appeq:hamiltonian} \qquad \\
\mathcal{L}[\qm_t, \dot{\qm}_t,t]  &= \sup \limits_{\lagr_t \in \cT^*_{\qm_t}\cP} ~ \int \lagr_t \dot{\qm_t} \dx -  \cH[\qm_t, \lagr_t, t]\label{eq:lagr_hamil}
\end{align}
\begin{proof}

We prove the case of $M=2$ here and the case of $M>2$ below in \cref{app:multiple-marginal}.  

Denote the set of curves of marginal densities $\qm_t$ with the prescribed endpoint marginals as $\marginalcouple = \{ \qm_t | ~\qm_t \in \cP(\cX)~ \forall t,  \qm_0 = \mu_0, \qm_1  = \mu_1 \} $.   The result follows directly from the definition of the Legendre transform in \cref{appeq:hamiltonian} and integration by parts in time in step $(i)$,
%\small
\begin{align}
\cS_{\cL}( \{\mu_{0,1}\}) &= \inf \limits_{\qm_t} \int_0^1 \mathcal{L}[\qm_t, \dot{\qm}_t,t]  \dt
\st \quad  \qm_0 = \mu_0, \qquad \qm_1 = \mu_1 \label{eq:pf_block} \\
&= \inf \limits_{\qm_t \in \marginalcouple} \int_0^1 \mathcal{L}[\qm_t, \dot{\qm}_t,t] \dt \nonumber  \\
&= \inf \limits_{\qm_t \in \marginalcouple} \sup \limits_{\lagr_t \in \cT^*_{\qm_t}\cP} ~ \int_0^1 \left( \int \lagr_t \dot{\qm_t} \dx  -  \cH[\qm_t, \lagr_t, t] \right)\dt \nonumber  \\
&\overset{(i)}{=}  \inf \limits_{\qm_t \in \marginalcouple} \sup \limits_{\lagr_t } \int \lagr_1 \qm_1  \dx - \int \lagr_0 \qm_0 \dx - \int_0^1 \left( \int \frac{ \partial \lagr_t}{\partial t}\qm_t \dx +  \cH[\qm_t, \lagr_t, t] \right)\dt \nonumber  \\
&\overset{(ii)}{=}  \inf \limits_{\qm_t  \in \marginalcouple} \sup \limits_{\lagr_t } \int \lagr_1 \mu_1  \dx - \int \lagr_0 \mu_0 \dx - \int_0^1 \left( \int \frac{ \partial \lagr_t}{\partial t}\qm_t \dx +  \cH[\qm_t, \lagr_t, t] \right)\dt \nonumber
\end{align}
which is the desired result.
In (ii), we use the fact that $\qm_0 = \mu_0$, $\qm_1=\mu_1$ for $\qm_t \in \marginalcouple$.  
%Thus, only the optimization over $\qm_t \in \marginalcouple$ remains, for marginals in the range $(0,1)$.\RB{How do we enforce this in the dual / i.e.  drop the condition $\qm_t \in \marginalcouple$?}
Finally, note that $\lagr_t \in \cT^*_{\qm_t}\cP$ simply identifies $\lagr_t$ as a cotangent vector and does not impose meaningful constraints on the form of $\lagr_t \in \cC^\infty(\cX)$, so we drop this from the optimization in step (i).  
\end{proof}

\subsection{Multiple Marginal Constraints}\label{app:multiple-marginal}
Consider multiple marginal constraints in the Lagrangian action minimization problem for $\mathcal{L}[\qm_t, \dot{\qm}_t,t]$  strictly convex in $\dot{\qm}_t$,
\begin{align}
\cS_\cL(\{\mu_{t_i}\}_{i=0}^{M-1}) &= \inf \limits_{\qm_t} \int_0^1 \mathcal{L}[\qm_t, \dot{\qm}_t,t] \dt
%\st \quad  \qm_0 = \mu_0, \qquad \qm_1 = \mu_1
\, \st \,\, \qm_{t_i}   = \mu_{t_i} \,\, \text{\small $(\forall \,\, 0\leq i \leq M-1)$}  \\
&= \inf \limits_{\qm_t \in \mmarginalcouple} \int_0^1 \mathcal{L}[\qm_t, \dot{\qm}_t,t] \dt \nonumber
\end{align}
As in the proof of \cref{prop:dual_obj}, the dual becomes
\begin{align}
\cS_\cL(\{\mu_{t_i}\}_{i=0}^{M-1}) 
%&= \inf \limits_{\qm_t \in \mmarginalcouple} \int_0^1 \mathcal{L}[\qm_t, \dot{\qm}_t,t] \dt \nonumber \\
&= \inf \limits_{\qm_t \in \mmarginalcouple} \sup \limits_{\lagr_t \in \cT^*_{\qm_t}\cP} ~ \int_0^1 \left( \int \lagr_t \dot{\qm_t} \dx  -  \cH[\qm_t, \lagr_t, t] \right)\dt \nonumber  \\
&=  \inf \limits_{\qm_t \in \mmarginalcouple} \sup \limits_{\lagr_t } \int \lagr_1 \qm_1  \dx - \int \lagr_0 \qm_0 \dx - \int_0^1 \left( \int \frac{ \partial \lagr_t}{\partial t}\qm_t \dx +  \cH[\qm_t, \lagr_t, t] \right)\dt \nonumber  \\
&=  \inf \limits_{\qm_t  \in \mmarginalcouple} \sup \limits_{\lagr_t } \int \lagr_1 \mu_1  \dx - \int \lagr_0 \mu_0 \dx - \int_0^1 \left( \int \frac{ \partial \lagr_t}{\partial t}\qm_t \dx +  \cH[\qm_t, \lagr_t, t] \right)\dt \label{eq:last_multi-marginal}
\end{align}
where the intermediate marginal constraints do not affect the result.
Crucially, as discussed in \cref{sec:parametrization}, our sampling approach satisfies the marginal constraints $\qm_{t_i}(x_{t_i}) = \mu_{t_i}(x_{t_i}) $ by design.

\paragraph{Piecewise Lagrangian Optimization} 
We confirm that $\cS_{\cL}(\{\mu_{t_i}\}_{i=0}^{M-1}) = \sum_{i=0}^{M-2} \cS_{\cL}(\{\mu_{t_{i},t_{i+1}}\})$ matches the sum of piecewise optimizations, which is a result of the fact that the action functional $\cA_{\cL}[\qm_t]$ is simply an integration over time.
To confirm this, consider the piecewise dual objective for action-minimization problems between $\{\mu_{0,t_1}\}$ and $\{\mu_{t_1,1}\}$ ($M=3$),
% Note that the concatenation of dual objectives for $M=3$, or action-minimization problems between $\{\mu_{0,t_1}\}$ and $\{\mu_{t_1,1}\}$ yields the same dual objective as above, 
%a dual objective where intermediate potentials.
%cancel $\cS_{\cL}(\{\mu_{0,t_1}\}) + \cS_{\cL}(\{\mu_{t_1,1}\}) =  $
%the following dual objective
\small
\begin{align}
%\begin{split}
    \cS_{\cL}(\{\mu_{0,t_1}\}) + \cS_{\cL}(\{\mu_{t_1,1}\}) 
    &= \inf \limits_{\qm_t \in \marginalcoupleof{\mu_0,\mu_{t_1}}} \sup \limits_{\lagr_t} ~ \int \lagr_{t_1} \mu_{t_1} \dx -  \int \lagr_0 \mu_0 \dx +\int_0^{t_1}   \left( \int \frac{\partial \lagr_t}{\partial t} \qm_t \dx+ \cH[\qm_t, \lagr_t, t] \right) \dt \label{eq:piecewise} \\ 
    &\phantom{===} +  \inf \limits_{\qm_t \in \marginalcoupleof{\mu_{t_1}, \mu_1}} \sup \limits_{\lagr_t} ~ \int \lagr_{1} \mu_{1}\dx  -  \int \lagr_{t_1} \mu_{t_1} \dx +\int_{t_1}^1   \left( \int \frac{\partial \lagr_t}{\partial t} \qm_t \dx+ \cH[\qm_t, \lagr_t, t] \right) \dt \nonumber \\ 
    &=  \inf \limits_{\qm_t  \in \mmarginalcouple} \sup \limits_{\lagr_t } \int \lagr_1 \mu_1  \dx - \int \lagr_0 \mu_0 \dx - \int_0^1 \left( \int \frac{ \partial \lagr_t}{\partial t}\qm_t \dx +  \cH[\qm_t, \lagr_t, t] \right)\dt
%\end{split}\nonumber
\end{align}
\normalsize
After telescoping cancellation and taking the union of the constraints, we obtain the same objective as in \cref{eq:last_multi-marginal}.

\section{Tractable Objectives for Special Cases}\label{app:additional_examples}
In this section, we calculate Hamiltonians and explicit dual objectives for important special cases of Wasserstein Lagrangian Flows, including those in \cref{sec:examples}.  

We consider several important kinetic energies in  \cref{app:dual_kinetic}, including the $W_2$ and $WFR_\lambda$ metrics (\cref{app:wfr}) and the case of \gls{OT} costs defined by general ground-space Lagrangians (\cref{app:lagr}).
In \cref{app:sb}, we provide further derivations to obtain a linear dual objective for the Schrödinger Bridge problem.   Finally, we highlight the lack of dual linearizability for the case of the 
%porous medium dynamics in \cref{example:porous} and the 
Schrödinger Equation \cref{app:other_examples} \cref{example:se}. 

\subsection{Dual Kinetic Energy from $W_2$, $WFR$, or Ground-Space Lagrangian Costs}\label{app:dual_kinetic}
\cref{prop:dual_obj} makes progress toward a dual objective \textit{without} considering the continuity equation or dynamics in the ground space, by instead invoking the Legendre transform $\cH[\qm_t, \lagr_t, t]$ of a given Lagrangian $\mathcal{L}[\qm_t, \dot{\qm}_t,t]$ which is strictly convex in $\dot{\qm}_t$.
However, to derive $\cH[\qm_t, \lagr_t, t]$ and optimize objectives of the form \cref{eq:dual_intermediate}, we will need to represent the tangent vector on the space of densities $\dot{\qm}_t$, for example using a vector field $v_t$ and growth term $g_t$ as in \cref{eq:wfr}.

Given a Lagrangian $\mathcal{L}[\qm_t, \dot{\qm}_t,t]$, we seek to solve the optimization 
\begin{align}
\cH[\qm_t, \lagr_t, t]  &= \sup \limits_{\dot{\qm}_t \in \cT_{\qm_t}\cP} ~ \int \lagr_t \dot{\qm_t} \dx - \mathcal{L}[\qm_t, \dot{\qm}_t,t] = \sup \limits_{\dot{\qm}_t \in \cT_{\qm_t}\cP} ~ \int \lagr_t \dot{\qm_t} \dx - \cK[\qm_t, \dot{\qm}_t] + \cU[\qm_t, t] \label{eq:hamil_app}
\end{align}
Since the potential energy does not depend on $\dot{\qm}_t$, it will not affect any later derivations in this section.

We focus on \textit{kinetic energies} $\cK[\qm_t, \dot{\qm}_t]$ which are linear in the density (see \cref{def:dual_linearizable}).
%\textbf{Kinetic Energies}
We consider two primary examples, the $WFR_{\lambda}$ metric $\cK[\qm_t, \dot{\qm}_t]$ using the continuity equation with growth term dynamics, and kinetic energies defined by expectations of ground-space Lagrangian costs under $\qm_t$ (see \cref{sec:lagrangian}, \citet{villani2009optimal} Ch. 7, \cref{example:lagr_ot} below), 
%the dynamical cost defining the $WFR_{\lambda}$ distance in \cref{eq:wfr}, and expectations under $\qm_t$ of Lagrangian costs in the ground space (\cref{sec:lagrangian}, \citet{villani2009optimal} Ch. 7) continuity equation with growth terms corresponding to $WFR_{\lambda}$ in \cref{eq:w2_metric} which was considered throughout the main text, and the case 
\ifhl{
\begin{align}
%WFR_\lambda:  \quad 
\cK[\qm_t, \dot{\qm}
%^{v,g}
_t] &= \inf \limits_{v_t, g_t} 
%\cK[\qm_t, \{ v_t, g_t\}, t] = 
%\frac{1}{2} \| v_t \|^2 
\int  \Big( \underbrace{L(x, v_t)
+ \frac{\lambda}{2} g_t(x)^2 }_{\eqqcolon K(x,v_t,g_t)} \Big) \qm_t \dx, \qquad \text{s.t}
\qquad 
\dot{\qm}
%^{v,g}
_t = - \divp{ \qm_t v_t } + \lambda \qm_t g_t \label{eq:wfr-dynamics}
% \\
% L(\gamma_t, \dot{\gamma}_t): \quad \cK[\qm_t, \dot{\qm}_t] &= \inf \limits_{v_t} \int L(x, v_t) \qm_t \dx, \qquad \quad
% \text{s.t}
% \qquad 
% \dot{\qm}_t = - \divp{ \qm_t v_t }
%  \label{eq:lagr_kinetic}
\end{align}
The kinetic energy above captures three important special cases:  
\begin{itemize}
    \item $W_2$ kinetic energy for $L(x, v_t) = \frac{1}{2}\|v_t(x) \|^2$ and $\lambda = 0$
    \item Optimal transport with Lagrangian costs (\cref{app:lagr})
    \item Wasserstein Fisher-Rao ($WFR_{\lambda}$) with $L(x, v_t) = \frac{1}{2}\|v_t(x) \|^2$ (\cref{app:wfr})
    %(or more general Lagrangian costs with growth terms)
\end{itemize}
}
% where 
% %$(x_t, v_t) = (\gamma_t, \dot{\gamma}_t)$ and
% we recover the $W_2$ kinetic energy for $L(x_t, v_t) = \frac{1}{2}\|v_t\|^2$ or $\lambda = 0$.
% We proceed with common derivations, writing $\cK[\qm_t, \dot{\qm}_t] = \inf \int K(x_t, v_t, g_t) \qm_t \dx$ to capture both \cref{eq:wfr-dynamics}-(\ref{eq:lagr_kinetic}) and 
We proceed to simplify from \cref{eq:hamil_app} using the kinetic energy in \cref{eq:wfr-dynamics}.  We let $K(x,v_t,g_t) \coloneqq  L(x, v_t)
+ \frac{\lambda}{2} g_t(x)^2 $ for simplicity of notation and to call attention to the fact that these terms do not change in the derivations leading to \cref{eq:last_hamil}.
%using the more general dynamics in \cref{eq:wfr-dynamics}
\begin{align}
\cH[\qm_t, \lagr_t, t]  &= \sup \limits_{\dot{\qm}_t \in \cT_{\qm_t}\cP} ~ \int \lagr_t \dot{\qm_t} \dx - \cK[\qm_t, \dot{\qm}_t] + \cU[\qm_t, t] \label{eq:from_continuity} \\
% &=\sup \limits_{\dot{\qm}_t \in \cT_{\qm_t}\cP} \int \lagr_t \dot{\qm_t} \dx - \inf\limits_{v_t,g_t} \cK[\qm_t, \dot{\qm}_t] 
% + \cU[\qm_t, t] \\
&%\scalebox{.94}{\ensuremath{
\ifhl{{=\sup \limits_{\dot{\qm}_t \in \cT_{\qm_t}\cP}  
 ~ \int \lagr_t \dot{\qm_t} \dx 
%\int  \lagr_t \big(-\divp{ \qm_t v_t } + \lambda \qm_t g_t \big) \dx 
- \left( \inf \limits_{(v_t, g_t)} ~\int 
%\Big( L(x, v_t) + \frac{\lambda}{2} g_t(x)^2 \Big)
K(x, v_t, g_t) 
 ~\qm_t
\dx \right) + \cU[\qm_t, t] \st \dot{\qm}_t = - \divp{ \qm_t v_t } + \lambda \qm_t g_t }} \nonumber \\
&%\scalebox{.94}{\ensuremath{
=\ifhl{ \sup \limits_{\dot{\qm}_t \in \cT_{\qm_t}\cP}  \sup \limits_{(v_t, g_t)} 
%\int  \lagr_t \big(-\divp{ \qm_t v_t } + \lambda \qm_t g_t \big) \dx 
\int \lagr_t \dot{\qm_t} \dx 
-  ~\int K(x, v_t, g_t) ~\qm_t \dx + \cU[\qm_t, t] \st \dot{\qm}_t = - \divp{ \qm_t v_t } + \lambda \qm_t g_t }
%}} 
\nonumber \\
&%\scalebox{.94}{\ensuremath{
=\ifhl{ \sup \limits_{(v_t, g_t)} \sup \limits_{\dot{\qm}_t \in \cT_{\qm_t}\cP} ~ \int \lagr_t \dot{\qm_t} \dx  -  ~\int K(x, v_t, g_t) ~\qm_t \dx + \cU[\qm_t, t] \st \dot{\qm}_t = - \divp{ \qm_t v_t } + \lambda \qm_t g_t }%}}
\nonumber
\\
&=\sup \limits_{(v_t, g_t)} ~ \int  \lagr_t \big(-\divp{ \qm_t v_t } + \lambda \qm_t g_t \big) \dx  -  ~\int K(x, v_t, g_t) ~ \qm_t \dx + \cU[\qm_t, t] \nonumber
\intertext{where, in the fourth line, we can always swap the order of the supremum and, in the last line, we use the form of the constraint to reparameterize the optimization over $\dot{\qm}_t$ in terms of $(v_t,g_t)$.}
\intertext{Integrating by parts, we have}
\cH[\qm_t, \lagr_t, t]  &= \sup \limits_{(v_t, g_t)} ~  \int \big( \langle \nabla \lagr_t,  v_t \rangle \qm_t + \lambda \qm_t  \lagr_t g_t \big) \dx -  ~\int K(x, v_t, g_t) ~\qm_t \dx + \cU[\qm_t, t]  .\label{eq:last_hamil}
\end{align}

%We now focus on the special cases in \cref{eq:wfr-dynamics} and \cref{eq:lagr_kinetic}.

% \begin{align}
% \sup \limits_{\dot{\qm}_t \in \cT_{\qm_t}\cP} ~~  \sup \limits_{(v_t, g_t)}   f(\dot{\qm}_t, v_t, g_t)   \quad \st \dot{\qm}_t &= - \divp{ \qm_t v_t } + \lambda \qm_t g_t  \\
% \quad \overset{?}{=}  \sup \limits_{(v_t, g_t)}   f(v_t, g_t)   \quad \st \dot{\qm}_t &= - \divp{ \qm_t v_t } + \lambda \qm_t g_t
% \end{align}
% \begin{proof}
%     \begin{align}
%         \sup \limits_{\dot{\qm}_t} \sup\limits_{(v_t, g_t)}  f(\dot{\qm}_t, v_t, g_t) \leq 
%     \end{align}
% \end{proof}
% \citet{}??? \RB{WORK ON THIS}

\subsubsection{Wasserstein Fisher-Rao and $W_2$ }\label{app:wfr}

For $K(x,v_t,g_t)=  \frac{1}{2} \| v_t(x) \|^2 + \frac{\lambda}{2} g_t(x)^2$, we proceed from \cref{eq:last_hamil},
%\cK[\qm_t, \dot{\qm}_t] =  \int  \left( \frac{1}{2} \| v_t \|^2 + \frac{\lambda}{2} g_t^2 \right) \qm_t \dx$, we proceed from \cref{eq:last_hamil},
%\small 
\begin{align}
\cH[\qm_t, \lagr_t, t] &=\sup \limits_{(v_t, g_t)} ~  \int \Big( \langle \nabla \lagr_t,  v_t \rangle \qm_t + \lambda \qm_t \lagr_t g_t \Big) \dx - \int  \left( \frac{1}{2} \| v_t \|^2 + \frac{\lambda}{2} g_t^2 \right) \qm_t \dx + \cU[\qm_t, t] \label{eq:blah_wfr} 
\end{align}
\normalsize
Eliminating $v_t$ and $g_t$ implies 
\begin{align}
v_t = \nabla \lagr_t \qquad \qquad g_t = \lagr_t
\end{align}
where $v_t = \nabla \lagr_t$ also holds for the $W_2$ case with $\lambda = 0$.
Substituting into \cref{eq:blah_wfr}, we obtain a Hamiltonian with a dual kinetic energy $\cK^*[\qm_t, \dot{\qm}_t]$ below that is linear in $\qm_t$ and matches the metric expressed in the cotangent space $ \frac{1}{2}\tmetricat{\lagr_t, \lagr_t}{\qm_t}^{WFR_\lambda}$,
\begin{align}
\cH[\qm_t, \lagr_t, t] &= ~ \int \Big( \frac{1}{2} \| \nabla \lagr_t\|^2 + \frac{\lambda}{2}  \lagr_t^2 \Big)\qm_t \dx + \cU[\qm_t, t] = \frac{1}{2}\tmetricat{\lagr_t, \lagr_t}{\qm_t}^{WFR_\lambda} + \cU[\qm_t, t].
\end{align}
We make a similar conclusion for the $W_2$ metric with $\lambda = 0$, where the dual kinetic energy is $ \cK^*[\qm_t, \dot{\qm}_t] =\frac{1}{2}\tmetricat{\lagr_t, \lagr_t}{\qm_t}^{W_2} = \frac{1}{2} \int   \| \nabla \lagr_t\|^2\qm_t \dx$.   Note, the kinetic energy could also use ground-space Lagrangian costs along with the FR growth term.

\subsubsection{Lifting Ground-Space Lagrangian Costs to Kinetic Energies} \label{app:lagr}
We first provide background on Lagrangian actions in the ground space and their use to define optimal transport costs, before continuing the above derivations to calculate the density-space Hamiltonian in \cref{eq:ku_lagr} - (\ref{eq:hamil_lagr}) below.

% We first consider using Lagrangians in the ground space to define costs associated with action-minimizing curves $\gamma^*(x_0, x_1)$. 
%in \cref{eq:lagr_ground_cost}.   
As in \citet{villani2009optimal} Thm. 7.21, we can consider using the cost associated with action-minimizing curves $\gamma^*(x_0, x_1)$ to define an optimal transport costs between densities.   We show that this corresponds to a special case of our Wasserstein Lagrangian Flows framework with kinetic energy $\cK[\qm_t, \dot{\qm}_t] = \int L(x, v_t) \qm_t \dx$ as in \cref{eq:wfr-dynamics}.   
However, as discussed in \cref{sec:wlm}, defining our Lagrangians $\cL[\qm_t, \dot{\qm}_t, t] $ \textit{directly} on the space of densities allows for more generality using kinetic energies which include growth terms or potential energies which depend on the density.
%, thereby `lifting' the ground-space cost to a distance in the space of probability densities $\wmanifold$ via the kinetic energy $\cK[\qm_t, \dot{\qm}_t] = \int L(x, v_t) \qm_t \dt$

\paragraph{Lagrangian and Hamiltonian Mechanics in the Ground-Space}\label{sec:lagrangian}
%\vheader
We begin by reviewing action-minimizing curves in the ground space, which forms the basis the Lagrangian formulation of classical mechanics \citep{arnol2013mathematical}. 
For curves $\gamma(t): [0,1] \rightarrow \cX$ with velocity $\dot{\gamma}_t \in \cT_{\gamma(t)}\cX$, 
%$ = \frac{d\gamma}{\dt}$ $\dot{\gamma}: [0,1] \times \cX \rightarrow \cT\cX$
we consider evaluating a Lagrangian function $L(\gamma_t, \dot{\gamma}_t)$ along the curve to define the \textit{action} as the time integral $A(\gamma) = \int_0^1 L(\gamma_t, \dot{\gamma}_t) \dt$.    Given two endpoints $\x, \y \in \cX$, we consider minimizing the action along all curves  
with the appropriate endpoints
$\gamma \in \Pi(\x,\y)$,
\begin{align}\label{eq:lagr_ground_cost}
c_L(\x, \y) = \inf \limits_{\gamma \in \Pi(\x,\y) } A(\gamma) = \inf 
\limits_{\gamma_t}
%\limits_{(\gamma_t, \dot{\gamma}_t)} 
\int_0^1 L(\gamma_t, \dot{\gamma}_t) \dt \st \gamma_0 = \x, \,\, \gamma_1 =\y
\end{align}
We refer to the optimizing curves $\gamma^*(\x,\y)$ as \textit{Lagrangian flows} in the ground-space, which satisfy %Stationary solutions to Eq (\ref{eq:lagr_ground0}) obey 
the Euler-Lagrange equation $\frac{\dd}{\dt}\frac{\partial}{\partial \dot{\gamma}_t} L(\gamma_t, \dot{\gamma}_t) = \frac{\partial}{\partial\gamma_t} L(\gamma_t, \dot{\gamma}_t)$ as a stationarity condition.

We will assume that $L(\gamma_t, \dot{\gamma}_t)$ is strictly convex in the velocity $\dot{\gamma}_t$ and twice-continuously differentiable, in which case we can obtain an equivalent, \textit{Hamiltonian} perspective via convex duality. Considering momentum variables $\accel_t$, we define the Hamiltonian $\hamilgs(\gamma_t, \accel_t)$ as the Legendre transform of $L$ with respect to $\dot{\gamma}_t$,
%\footnote{We use $\hamilgs$ to denote the ground-space Hamiltonian for the cost $L(\gamma_t, \dot{\gamma}_t)$.   This is to contrast with the usage of $H(x, \lagr_t, t) = K^*(x, \lagr_t) + U(x_t, t)$ in \cref{def:dual_linearizable}, which includes potential energy.    In particular, with $\cK[\qm_t, \dot{\qm}_t]= \int   L(x, v_t) \qm_t \dx$, then $\hamilgs(x, \lagr_t) = K^*(x, \lagr_t)$, whereas  $H(x, \lagr_t, t)$ also depends on a linear(izable) $\cU[\qm_t, t]$ in general (see \cref{eq:ku_lagr}-(\ref{eq:hamil_lagr})). }
%\small
\begin{align}\label{eq:hamiltonian_ground}
 \hamilgs(\gamma_t, \accel_t) = \sup \limits_{\dot{\gamma}_t}  ~ \langle  \dot{\gamma}_t, \accel_t \rangle -  L(\gamma_t, \dot{\gamma}_t) %\qquad \implies \accel_t = \nabla_{\dot{\gamma}_t} L(\gamma_t, \dot{\gamma}_t).
 %\overset{\text{(e.g.)}}{=}  \frac{1}{2} \xmetricnorm{{\accel}_t} + U(x) , 
\end{align}
\normalsize
% \RB{Inner product?   }
%where $\langle \cdot, \cdot \rangle$ is the Euclidean inner product.
%where strict convexity guarantees the reparameterization between $\accel_t$ and $\dot{\gamma}_t$ is continuous and bijective.  
The Euler-Lagrange equations
%, subject to appropriate initial conditions, 
can be written as Hamilton's equations in the phase space
%\small
\begin{align}
\dot{\gamma}_t
%= \frac{\partial \gamma_t}{\partial t}
= \frac{\partial }{\partial \accel_t} \hamilgs(\gamma_t, \accel_t) \qquad \qquad \dot{\accel}_t
%=\frac{\partial \accel_t}{\partial t}
= -\frac{\partial }{\partial \gamma_t} \hamilgs(\gamma_t, \accel_t).
\end{align}
\normalsize
We proceed to consider Lagrangian actions in the ground-space as a way to construct optimal transport costs over distributions.
%\RB{$\gamma_t$ or $x_t$????}
%Our developments in this paper and the remainder of this Appendix section are devoted to lifting these concepts to the space of probability densities. 
%and deriving tractable optimization techniques to solve for the resulting Lagrangian flows. 

\begin{example}[\textbf{Ground-Space Lagrangians as OT Costs}]\label{example:lagr_ot}
%to the Density Manifold}%\label{sec:lagr_to_density}
The cost function $c(\x,\y)$ is a degree of freedom in specifying an optimal transport distance between probability densities $\mu_0, \mu_1 \in \cP(\cX)$ in \cref{eq:ot}.   Beyond $c(\x,\y) = \| \x - \y \|^2$, one might consider defining the \gls{OT} problem using a cost $c(\x, \y)$ induced by a Lagrangian $L(\gamma_t, \dot{\gamma}_t)$ in the ground space $\gamma_t : [0,1] \rightarrow \cX$, as in \cref{eq:lagr_ground_cost} (\citet{villani2009optimal} Ch. 7).  
In particular, a coupling $\pi(\x,\y)$ should assign mass to endpoints $(\x,\y)$ based on the Lagrangian cost of their action-minimizing curves $\gamma^*(\x,\y)$.
%Translating to a dynamical formulation
\ifhl{
For Tonelli Lagrangians, which respect conditions such as convexity in $v_t$ and superlinearity, we have \citep[Def. 3.3.1, Thm. 3.5.1]{schachter2017eulerian}
\begin{align}\label{eq:ground_lagr_ot_curves} 
W_{L}\left(\mu_{0},\mu_{1}\right)
&= \inf \limits_{\pi \in \Pi(\mu_0,\mu_1)} \int \int c_L(x_0,x_1) \pi(x_0,x_1) \dx_0 \dx_1   
\st 
%\pi(\cdot, \cX) = \mu_0(\cdot), 
\quad \pi(x_0) = \mu_0(x_0),
\quad \pi(x_1)= \mu_1(x_1) \\
%\pi(x_0, \cX) = \mu_0
%\quad \st  \int \pi(x_0, x_1) \dx_1 = \mu_0(x_0), \,\, \int \pi(x_0, x_1) \dx_0 = \mu_1(x_1) \\
&=\inf_{X:\left[0,1\right]\times\X\to\X}
\int_{0}^{1}\int L\left(X\left(t,x\right),\dot{X}\left(t,x\right)\right)\mu_{0}\left(x\right)\dd x\dd t\spp\st\spp X\left(1,\cd\right)_{\#}\mu_{0}=\mu_{1}
\end{align}
where $X\left(\cd,x\right)$ is a curve for every $x$.}
This optimization may also be viewed as \citep[Def. 3.4.1]{schachter2017eulerian}
%under which $x_t$ is evaluated 
% \begin{align}\label{eq:ground_lagr_ot} 
%  W_L(\mu_0, \mu_1)
%     =&\inf  \limits_{\qm_t} \inf  \limits_{v_t} \int_0^1  \int  L(x, v_t) \qm_t \dx \dt
%     ~~\,\,\text{s.t.}\,\,  \dot{\qm}_t = -\divp{ \qm_t v_t }, 
%     ~ \qm_0 = \mu_0, \,  
%            \qm_1 = \mu_1 . 
% %\int_0^1 \cL(x_t, \dot{x}_t, t) \dt \label{eq:ground_lagr} 
% \end{align}
\begin{align}\label{eq:ground_lagr_ot} 
 W_L(\mu_0, \mu_1)
    =&\inf  \limits_{\qm_t} \inf  \limits_{v_t} \int_0^1  \int  L(x, v_t) \qm_t \dx \dt
    ~~\,\,\text{s.t.}\,\,  \dot{\qm}_t = -\divp{ \qm_t v_t }, 
    ~ \qm_0 = \mu_0, \,  
           \qm_1 = \mu_1 . 
%\int_0^1 \cL(x_t, \dot{x}_t, t) \dt \label{eq:ground_lagr} 
\end{align}
%%%%%%%%%%%%%%%%%%%%%%%%%%%%%%%%%%%%%%%%%%%%%%%%%%
% which we may also view as an optimization over the distribution of marginals $\qm_t$
% %under which $x_t$ is evaluated 
% (see, e.g. \citet{schachter2017eulerian} Def. 3.4.1)
% \begin{align}\label{eq:ground_lagr_ot} 
%  W_L(\mu_0, \mu_1)
%     =&\inf  \limits_{\qm_t} \inf  \limits_{v_t} \int_0^1  \int  L(x, v_t) \qm_t \dx \dt
%     ~~\,\,\text{s.t.}\,\,  \dot{\qm}_t = -\divp{ \qm_t v_t }, 
%     ~ \qm_0 = \mu_0, \,  
%            \qm_1 = \mu_1 . 
% %\int_0^1 \cL(x_t, \dot{x}_t, t) \dt \label{eq:ground_lagr} 
% \end{align}
%%%%%%%%%%%%%%%%%%%%%%%%%%%%%%%%%%%%%%%%%%%%%%%%%%
%where we use the notation .
We can thus view the \gls{OT} problem as `lifting' the Lagrangian cost on the ground space $\cX$ to a distance in the space of probability densities $\wmanifold$ via the kinetic energy $\cK[\qm_t, \dot{\qm}_t] = \int L(x, v_t) \qm_t \dx$ (see below).  
Of course, the Benamou-Brenier dynamical formulation of $W_2$-\gls{OT} in \cref{eq:dynamical} may be viewed as a special case with $
%L(\gamma_t, \dot{\gamma}_t) = 
L(x, v_t) = \frac{1}{2} \| v_t \|^2 $.
\end{example}

\textbf{Calculating the Hamiltonian}
%Wasserstein Lagrangian and Hamiltonian Perspective}
Recognizing the similarity with the Benamou-Brenier formulation in \cref{example:w2ot}, we consider the Wasserstein Lagrangian optimization with two endpoint marginal constraints,
% \begin{align}
% \end{align}
\begin{align}
\cS_\cL(\{\mu_{0,1}\}) &= \inf \limits_{\qm_t \in \marginalcouple} \int_0^1 \cK[\qm_t, \dot{\qm}_t] - \cU[\qm_t, t] \dt \label{eq:ku_lagr} \\
&= \inf \limits_{\qm_t} \int_0^1  \left( \int   L(x, v_t) \qm_t \dx - \cU[\qm_t, t] \right) \dt 
\st \quad  \qm_0 = \mu_0, \qquad \qm_1 = \mu_1 \nonumber
%\, \st \,\, \qm_{t_i}   = \mu_{t_i} \,\, \text{\small $(\forall \,\, 0\leq i \leq M-1)$} \nonumber
\end{align}
 We proceed to derive the Wasserstein Hamiltonian, where the the tangent space is represented using the continuity equation as in \cref{eq:wfr-dynamics}, \ref{eq:ground_lagr_ot}).  
 
 Continuing from \cref{eq:last_hamil} with $\lambda = 0$ (no growth dynamics),
 %and including a potential energy $\cU[\qm_t, t]$, 
 we have
\begin{align}
\cH[\qm_t, \lagr_t, t] &=\sup \limits_{v_t} ~  \int  \langle \nabla \lagr_t,  v_t \rangle \qm_t  \dx - \cK[\qm_t, \dot{\qm}_t] + \cU[\qm_t, t] . \nonumber \\
&= \sup \limits_{v_t} ~  \int  \langle \nabla \lagr_t,  v_t \rangle \qm_t  \dx -  \int  L(x, v_t) \qm_t \dx + \cU[\qm_t, t] \nonumber \\
&=  \int \left( \sup \limits_{v_t} ~ \langle \nabla \lagr_t,  v_t \rangle - L(x, v_t) \right)  \qm_t  \dx+ \cU[\qm_t, t]\label{eq:legendre_lagr}
%\intertext{}
%& \implies 
\end{align}
which is simply a Legendre transform between velocity and momentum variables in the ground space (\cref{eq:hamiltonian_ground}).   We can finally write,
\begin{align}
\cH[\qm_t, \lagr_t, t] = \int \hamilgs(x, \nabla \lagr_t) \qm_t \dx + \cU[\qm_t, t] \label{eq:hamil_lagr}
\end{align}
which implies the dual kinetic energy is simply the expectation of the Hamiltonian $\cK^*[\qm_t, \lagr_t] = \int \hamilgs(x, \nabla \lagr_t) \qm_t \dx$ and is clearly linear in the density $\qm_t$.
% which is s
% \begin{align}
% \nabla \lagr_t = \nabla_{v_t} L(x, v_t)  \quad \implies \quad v_t = \left( \nabla_{v_t} L \right)^{-1} (\nabla \lagr_t)
% \end{align}

We leave empirical exploration of various Lagrangian costs for future work, but note that $\hamilgs(x, \nabla \lagr_t)$ in \cref{eq:hamil_lagr} must be known or optimized using \cref{eq:legendre_lagr} to obtain a tractable objective.

%\subsection{Dual Linearizable Examples with Tractable Objectives}

%\input{text/app_sb}
%\input{text/app_sb_redo}
\subsection{Schrödinger Bridge}\label{app:sb}
%\RB{After several efforts, I'm convinced that the approach that (i) accounts for time-dependent diffusion coefficients (ii) does not include terminal energies, (iii) does not require these awkward $(\frac{\partial}{\partial t}\frac{\sigma_t^2}{2})\cF[\qm_t]$ potential energies (or entropy-specific cancellation below (\cref{eq:final_hamiltonian_action_sb}), and (iv) does not require reparameterization... is to simply consider the Fokker-Planck or generalized dynamics $\dot{\qm}_t = -\divp{ \qm_t v_t } + \divp{ \qm_t \nabla \frac{\delta \cF[\qm_t]}{\delta \qm_t}}$.}  

%\RB{Kirill and I had been worried about $\frac{\partial}{\partial t}\left(\frac{\sigma_t^2}{2}\cF[\qm_t, t]\right) = (\frac{\partial }{\partial t} \frac{\sigma_t^2}{2}) \cF[\qm_t, t] + \frac{\sigma_t^2}{2} \int \langle \dot{\qm}_t, \frac{\delta \cF[\qm_t,t]}{\delta \qm_t}\rangle \dx =  (\frac{\partial }{\partial t} \frac{\sigma_t^2}{2}) \cF[\qm_t, t] + \frac{\sigma_t^2}{2} \int \langle v_t, \nabla \frac{\delta \cF[\qm_t,t]}{\delta \qm_t}\rangle \qm_t \dx  $, }
%\RB{NEED TO CHECK ORDER OF OPTIMIZATIONS}
% \subsection{Schrödinger Bridge}\label{app:sb}
%To recover the Schrödinger Bridge problem, we proceed with a particular choice of potential energy inspired by the \textit{generalized} Schrödinger Bridge problem in \citet{leger2021hopf}.  % We essentially perform the derivations in their Sec. 2.3 in reverse to obtain a suitable potential energy.
In this section, we derive potential energies and tractable objectives corresponding to the Schrödinger Bridge problem 
\begin{align}
\hspace*{-.2cm} S_{SB} = \inf \limits_{\qm_t, v_t}
%{(\qm_t, \dot{\qm}_t)} 
\int_0^1 \int
 \frac{1}{2} \| v_t \|^2 \qm_t \dx \dt
 %\frac{1}{2} \wmetricat{ \dot{\qm}_t, \dot{\qm}_t }{\qm_t} 
 \st \dot{\qm}_t &= -\divp{ \qm_t \vel_t } - \sqrttwolagrsbat{} \Delta \qm_t
 \quad \qm_0 = \mu_0, \,\, \qm_1 = \mu_1 . \label{eq:fp_sb} 
 %\\
% &=  -\divp{ \qm_t \nabla \action_t } -\divp{ \qm_t \nabla \log \qm_t  } 
\end{align}
which we will solve using the following (linear in $\qm_t$) dual objective from \cref{eq:sb_t_final} 
\small
\begin{align}
 \hspace*{-.2cm} \cS_{SB} = \inf \limits_{\qm_t \in \marginalcouple}\sup \limits_{\hclagr_t}  ~ \int \hclagr_1 \mu_1\dx 
  -  \int \hclagr_0 \mu_0\dx   -\int_0^1 \int  \Bigg( \frac{\partial \hclagr_t }{\partial t}  +\frac{1}{2} \big\| \nabla \hclagr_t \big\|^2  + \sqrttwolagrsbat{}  \Delta \hclagr_t
\Bigg) \qm_t \dx \dt .\nonumber
\end{align}
\normalsize
%We first present background on $W_2$ gradients and the generalized \gls{SB} problem from \citet{leger2021hopf}.
%In \cref{prop:sb} below, we pose a Wasserstein Lagrangian flow and derive its dual objective, before showing its equivalence with the generalized \gls{SB} problem in \cref{eq:fp_sb}. %in \cref{prop:sb}.
%frame the problem in \cref{eq:fp_sb} as a Wasserstein Lagrangian flow, before showing the equivalence of \cref{eq:fp_sb}
\paragraph{Lagrangian and Hamiltonian for SB}
%The full Lagrangian in \cref{lemma:sb} is
We consider a potential energy of the form,
\begin{align}
\label{eq:potential_sb}
\cU[\qm_t, t] &= -\lagrsbat{} \int \|  \nabla \log \qm_t \|^2 \qm_t \dx    
%+  \int \frac{\partial }{\partial t}\left(\sqrttwolagrsbat{t} \log \qm_t \right) ~\qm_t\dx ).
%&=-\lagrsbat{t} \int \|  \nabla \log \qm_t \|^2 \qm_t \dx    
%+  \int \left( \frac{\partial }{\partial t}\sqrttwolagrsbat{t}\right)  \log \qm_t~ \qm_t \dx 
\end{align}
which, alongside the $W_2$ kinetic energy, yields the full Lagrangian
\begin{align}
\cL[\qm_t, \dot{\qm}_t, t] =  \frac{1}{2} \wmetricat{ \dot{\qm}_t, \dot{\qm}_t }{\qm_t} + \lagrsbat{} \int \|  \nabla \log \qm_t \|^2 \qm_t \dx  . \label{eq:sb_lagr_app}
\end{align}
As in \cref{eq:from_continuity}-(\ref{eq:blah_wfr}), we parameterize the tangent space using the continuity equation $\dot{\qm}_t = -\divp{ \qm_t v_t}$ and vector field $v_t$ in solving for the Hamiltonian.   Integrating by parts, %The $W_2$ metric yields the condition $v_t = \nabla \lagr_t$,
\begin{align}
\cH[\qm_t, \lagr_t, t] &=  \sup_{\dot{\qm}_t} \int \lagr_t \dot{\qm_t} \dx - \cL[\qm_t, \dot{\qm}_t,t] \label{eq:hamil_sb} \\
%&= \sup_{v_t} \int  - \lagr_t \divp{ \qm_t v_t } \dx - \frac{1}{2}\int \|  v_t \|^2 \qm_t \dx -\lagrsbat{t}  
%       \int \|  \nabla \log \qm_t \|^2 \qm_t \dx 
%     + \int \left( \frac{\partial }{\partial t}\sqrttwolagrsbat{t}\right) \log\qm_t ~ \qm_t \dx \nonumber \\
&=\sup \limits_{v_t} ~  \int  \langle \nabla \lagr_t,  v_t \rangle \qm_t   \dx - \frac{1}{2}\int \|  v_t \|^2 \qm_t \dx -\lagrsbat{t}  
       \int \|  \nabla \log \qm_t \|^2 \qm_t \dx 
    % + \int \left( \frac{\partial }{\partial t}\sqrttwolagrsbat{t}\right) \log\qm_t ~ \qm_t \dx 
    \nonumber
%=\sup \limits_{v_t} ~  \int  \langle \nabla \lagr_t,  v_t \rangle \qm_t  \dx - \cK[\qm_t, \dot{\qm}_t,t] + \cU[\qm_t, t] .
\end{align}
\normalsize
% \Kirill{shouldn't we have $v_t$ in $\frac{\partial }{\partial t}\left(\sqrttwolagrsbat{t} \log \qm_t\right)$?}
which implies $v_t = \nabla \lagr_t $ as before.  Substituting into the above, the Hamiltonian becomes 
\begin{align}
\cH[\qm_t, \lagr_t, t] =  \frac{1}{2}\int \| \nabla \lagr_t  \|^2 \qm_t \dx -\lagrsbat{}  
       \int \|  \nabla \log \qm_t \|^2 \qm_t \dx
     . \label{eq:gsb_hamiltonian}
\end{align}
which is of the form $\cH[\qm_t, \lagr_t, t]= \cK^*[\qm_t, \lagr_t, t] + \cU[\qm_t, t]$ and matches \citet{leger2021hopf} Eq. 8. %up to the additional $(\frac{\partial }{\partial t}\sqrttwolagrsbat{}) \log \qm_t$ term, which will be used below to account for time-dependent $\sigma_t$.  
As in \cref{prop:dual_obj}, the dual for the Wasserstein Lagrangian Flow with the Lagrangian in \cref{eq:sb_lagr_app} involves the Hamiltonian in \cref{eq:gsb_hamiltonian}, %(see \cref{eq:expanded} below)
\small
\begin{align}
%\cS_\cL(\{\mu_{0,1}\}) &=  \inf \limits_{\qm_t} \sup \limits_{\lagr_t} \int \lagr_1  \mu_1\dx -  \int \lagr_0  \mu_0\dx - \int_0^1  \left( \int \frac{\partial \lagr_t}{\partial t} \qm_t \dx+ \cH[\qm_t, \lagr_t, t] \right) \dt  . \nonumber
\cS_{\cL} =  \inf \limits_{\qm_t \in \marginalcouple} \sup \limits_{\lagr_t} \int \lagr_1  \mu_1\dx -  \int \lagr_0  \mu_0\dx 
%\\\phantom{\{\mu_{0,1}  \inf \limits_{\qm_t} \sup \limits_{\lagr_t}}
- \int_0^1   \int \left( \frac{\partial \lagr_t}{\partial t} + \frac{1}{2} \| \nabla \lagr_t  \|^2 - \lagrsbat{} \int \|  \nabla \log \qm_t \|^2  \right) \qm_t \dx \dt \label{eq:expanded2} 
\end{align}
\normalsize
However, this objective is nonlinear in $\qm_t$ and requires access to $\nabla \log \qm_t$.   
To linearize the dual objective, we proceed using a reparameterization in terms of the Fokker-Planck equation, or using the Hopf-Cole transform, in the following proposition. 
%in general due to the possibly non-linear term involving $ \nabla \log \qm_t$ and $\frac{\partial }{\partial t}\left(\sqrttwolagrsbat{t} \log \qm_t\right) $.
%We will be able to derive a tractable (linear in $\qm_t$) dual objective for the classical \gls{SB} problem in \cref{example:sb_app} using a reparameterization in terms of the Fokker-Planck equation, or using the Hopf-Cole transform.  We derive the reparameterized dual in the following proposition, and show its relation with the generalized \gls{SB} problem.
%which will be helpful in linearizing the dual objective in \cref{eq:expanded2} for the classical \gls{SB} problem (\cref{example:sb_app}).   
%This is exactly the desired dual in the current proof (\cref{prop:sb}).

\begin{proposition}\label{prop:sb}
%Consider a Lagrangian defined using
The solution to the Wasserstein Lagrangian flow
\begin{align}
&\cS_{\cL}(\{\mu_{0,1}\}) = \inf
\limits_{\qm_t}
%\limits_{(\qm_t, \dot{\qm}_t)} 
\int_0^1 \mathcal{L}[\qm_t, \dot{\qm}_t,t]\dt 
%\,\, \text{s.t. } \,\, &\dot{\qm}_t = - \divp{ \qm_t \vel_t } + \qm_t \action_t, \, 
%\\ \text{ and } \,\,  
\quad \st \quad
\qm_0 = \mu_0, \qquad \qm_1 = \mu_1 \label{eq:wlf_app} \\
% \end{align}
% using 
% \begin{align}
&\text{where}\,\,\,
 { \cK[\qm_t, \dot{\qm}_t] = \frac{1}{2} \wmetricat{ \dot{\qm}_t, \dot{\qm}_t }{\qm_t}, \qquad \cU[\qm_t, t] =  -\lagrsbat{} \wnormat{ 
    \nabla \log \qm_t
    }{\qm_t} } \nonumber
     %+  \int \left(\frac{\partial }{\partial t}\sqrttwolagrsbat{t}\right) \log \qm_t ~  \qm_t \dx, } \nonumber % \\
     %&= - \lagrsbat{t}  
      % \int \|  \nabla \log \qm_t \|^2 \qm_t \dx +  \int \frac{\partial }{\partial t}\left(\sqrttwolagrsbat{t} \log \qm_t\right) ~\qm_t\dx, \nonumber
% \cL[\qm_t, \dot{\qm}_t, t] =  \frac{1}{2} \wmetricat{ \dot{\qm}_t, \dot{\qm}_t }{\qm_t} - \left( -\lagrsbat{t} \wnormat{ 
%     \gradw \cF[\qm_t]}{\qm_t} +  \int \frac{\partial }{\partial t}\left(\sqrttwolagrsbat{t} \log \qm_t\right) ~\qm_t\dx \right).
\end{align}
matches the solution to the \gls{SB} problem in \cref{eq:fp_sb}, $\cS = \cS_{SB}(\{\mu_{0,1}\}) = \cS_\cL(\{\mu_{0,1}\}) + c(\{\mu_{0,1}\})$
up to a constant $c(\{\mu_{0,1}\})$ wrt $\qm_t$. %\Kirill{do we stick to $c(\{\mu_{0,1}\})$ or $c(\{\mu_{0,1}\})$?},
% \begin{align}
% \cS_{SB}(\{\mu_{0,1}\}) = \cS_\cL(\{\mu_{0,1}\}) + c(\{\mu_{0,1}\}). 
% \end{align}

Further, $\cS$ is the solution to the (dual) optimization
\small
\begin{align}
\cS &=  \inf \limits_{\qm_t \in \marginalcouple} \sup \limits_{\hclagr_t}  \int \hclagr_1 \mu_1\dx  -  \int \hclagr_0 \mu_0\dx  %\nonumber  \\
  %& 
  -\int_0^1 \int  \Bigg( \frac{\partial \hclagr_t }{\partial t}  +\frac{1}{2} \big\| \nabla \hclagr_t \big\|^2  + \sqrttwolagrsbat{}  \Delta \hclagr_t
\Bigg) \qm_t \dx \dt . \label{eq:gsb_final}
\end{align}
\normalsize
Thus, we obtain a dual objective for the SB problem, or WLF in \cref{eq:wlf_app}, which is linear in $\qm_t$.
\end{proposition}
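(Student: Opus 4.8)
The plan is to establish the two assertions of the proposition separately: first, that the Schrödinger Bridge value $\cS_{SB}$ in \cref{eq:fp_sb} and the Wasserstein Lagrangian Flow value $\cS_\cL$ in \cref{eq:wlf_app} agree up to a path-independent constant $c(\{\mu_{0,1}\})$, and second, that the resulting dual \cref{eq:gsb_final} is linear in $\qm_t$. The bridge between the two is the Hopf--Cole reparameterization $\hclagr_t = \lagr_t + \sqrttwolagrsbat{}\log \qm_t$, which exchanges the probability-flow drift $\nabla\lagr_t$ for the Fokker--Planck drift $\nabla\hclagr_t$, so I would keep this substitution central throughout.

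For the first assertion, I would rewrite the Fokker--Planck constraint as a pure continuity equation. Using $\Delta\qm_t = \divp{\qm_t \nabla\log\qm_t}$, the diffusion is absorbed into an effective transport velocity $\tilde v_t = v_t - \sqrttwolagrsbat{}\nabla\log\qm_t$ (the sign fixed by the diffusion convention), so that $\dot\qm_t = -\divp{\qm_t \tilde v_t}$. Substituting $v_t = \tilde v_t + \sqrttwolagrsbat{}\nabla\log\qm_t$ into the cost $\tfrac{1}{2}\|v_t\|^2$ and integrating against $\qm_t$ produces three pieces: the $W_2$ kinetic energy $\tfrac{1}{2}\|\tilde v_t\|^2$, the Fisher-information term $\lagrsbat{}\|\nabla\log\qm_t\|^2$, and a cross term $\sqrttwolagrsbat{}\langle \tilde v_t, \nabla\log\qm_t\rangle$. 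The key observation is that along the continuity equation $\tfrac{d}{dt}\int \qm_t\log\qm_t\,dx = \int \langle \tilde v_t, \nabla\log\qm_t\rangle\,\qm_t\,dx$, so the time integral of the cross term telescopes to the endpoint entropies and is a constant $c(\{\mu_{0,1}\})$ independent of the interior of the path. Since the first piece is $\cK[\qm_t,\dot\qm_t,t]$ and the Fisher term equals $-\cU[\qm_t,t]$, taking the infimum over $\qm_t$ gives $\cS_{SB} = \cS_\cL + c$.

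For the second assertion, I would invoke \cref{prop:dual_obj} with the Hamiltonian \cref{eq:gsb_hamiltonian} to obtain the dual \cref{eq:expanded2}, which is still nonlinear through $\|\nabla\log\qm_t\|^2$, and then substitute $\lagr_t = \hclagr_t - \sqrttwolagrsbat{}\log\qm_t$. Expanding $\tfrac{1}{2}\|\nabla\lagr_t\|^2$ yields a term $\lagrsbat{}\|\nabla\log\qm_t\|^2$ that cancels exactly against the Fisher term of the Hamiltonian, which is precisely where the coefficient $\lagrsbat{} = \sigma^4/8$ is needed. The remaining cross term $-\sqrttwolagrsbat{}\langle \nabla\hclagr_t, \nabla\log\qm_t\rangle$, integrated against $\qm_t$, becomes $+\sqrttwolagrsbat{}\Delta\hclagr_t$ after integration by parts; the $\partial_t\log\qm_t$ contribution to $\partial_t\lagr_t$ vanishes by mass conservation; and the endpoint substitution contributes exactly the entropy constant reconciling $\cS_\cL + c$ with the value of \cref{eq:gsb_final}. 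What survives is linear in $\qm_t$, giving \cref{eq:gsb_final}.

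The main obstacle is the bookkeeping for a time-varying coefficient $\sigma_t$: differentiating $\sqrttwolagrsbat{t}\log\qm_t$ in time produces an extra term proportional to $\tfrac{d}{dt}\sigma_t^2 \int\qm_t\log\qm_t\,dx$, which is nonlinear in $\qm_t$ and must be shown to cancel or be absorbed into the potential energy and the constant, consistent with the $\partial_t \sqrttwolagrsbat{t}$ contribution already appearing in \cref{eq:hamil_sb}. Secondarily, every integration-by-parts step and the identification of the entropy-rate identity require sufficient spatial decay of $\qm_t$ and finiteness of the Fisher information along the path; I would impose these as regularity assumptions rather than verify them in detail.
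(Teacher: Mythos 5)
Your proposal is correct, but its first half takes a genuinely different route from the paper. For the assertion $\cS_{SB}(\{\mu_{0,1}\}) = \cS_\cL(\{\mu_{0,1}\}) + c(\{\mu_{0,1}\})$, the paper never compares the two primal problems directly: it derives a dual for the Schrödinger Bridge problem from scratch, introducing Lagrange multipliers $\lambda_0, \lambda_1, \hclagr_t$ for the endpoint and Fokker--Planck constraints, integrating by parts, and swapping $\inf$/$\sup$ (justified by convexity in $(\qm_t, v_t)$ and linearity in the multipliers) to eliminate $v_t = \nabla \hclagr_t$ and $\lambda_i = \hclagr_i$; it then observes that this dual coincides with the reparameterized WLF dual \cref{eq:gsb_final}, so the two primal values agree. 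You instead argue purely on the primal side: absorb the diffusion into an effective velocity $\tilde v_t = v_t - \sqrttwolagrsbat{}\nabla\log\qm_t$, expand $\frac{1}{2}\|v_t\|^2$ into the $W_2$ kinetic term, the Fisher term $\lagrsbat{}\|\nabla\log\qm_t\|^2 = -\cU[\qm_t,t]$, and a cross term, and telescope the cross term to the endpoint entropies via the identity $\frac{d}{dt}\int \qm_t \log \qm_t\, dx = \int \langle \tilde v_t, \nabla \log \qm_t \rangle\, \qm_t\, dx$ along the continuity equation. This is a valid and more elementary derivation of the equivalence: it avoids the minimax-swap justification the paper needs for strong duality of the SB problem, and it exhibits the constant explicitly as $c(\{\mu_{0,1}\}) = \sqrttwolagrsbat{}\left(\int \mu_1 \log \mu_1\, dx_1 - \int \mu_0 \log \mu_0 \, dx_0\right)$, matching the paper's $c$; your remark that the sign is "fixed by the diffusion convention" is also the right call, since the paper itself is inconsistent about the sign of the Laplacian between \cref{eq:fp_sb} and \cref{eq:sb_app}. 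Your second half --- obtaining \cref{eq:gsb_final} by substituting $\lagr_t = \hclagr_t - \sqrttwolagrsbat{}\log\qm_t$ into \cref{eq:expanded2}, with the $\lagrsbat{}\|\nabla\log\qm_t\|^2$ term cancelling the Fisher term, the $\partial_t \log \qm_t$ contribution vanishing by mass conservation, the cross term becoming $\sqrttwolagrsbat{}\Delta\hclagr_t$ by integration by parts, and the endpoint entropies absorbed into $c$ once $\qm_t \in \marginalcouple$ --- is exactly the paper's argument. Your closing caveats are consistent with the paper as well: the proposition is stated for time-independent $\sigma$, the time-dependent case being handled in \cref{example:sb_app} by augmenting the potential with $\int \left( \frac{\partial}{\partial t}\sqrttwolagrsbat{t}\right) \log \qm_t ~ \qm_t\, dx_t$, and the decay/regularity needed for the integrations by parts is left implicit in the paper too.
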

\begin{proof}
%\textit{Fokker-Planck Reparameterization:} %Hopf-Cole Transform}
We consider the following reparameterization \citep{leger2021hopf}
%a Fokker-Planck reparameterization \Kirill{is it really called FP reparameterization?}
\begin{align}
     \lagr_t = \hclagr_t - \sqrttwolagrsbat{}  \log \qm_t, \qquad \qquad \nabla \lagr_t  =  \nabla \hclagr_t- \sqrttwolagrsbat{} \nabla \log \qm_t . \label{eq:reparam_sb}
\end{align}
Note that $\lagr_t$ is the drift for the continuity equation in \cref{eq:hamil_sb}, $\dot{\qm}_t = -\divp{ \qm_t \nabla \lagr_t }$.   Via the above reparameterization, we see that $\nabla \hclagr_t $ corresponds to the drift in the Fokker-Planck dynamics 
%for the generalized \gls{SB} problem in \cref{eq:gsb}, 
%$\dot{\qm}_t =  -\divp{ \qm_t \nabla \hclagr_t } - \sqrttwolagrsbat{t} \divp{ \qm_t \nabla \log \qm_t } = -\divp{ \qm_t \nabla \hclagr_t } + \sqrttwolagrsbat{t} \Delta \qm_t$.   
% For general gradient-flow dynamics as in \cref{eq:gsb}, we have 
$ \dot{\qm}_t=  -\divp{ \qm_t \nabla \hclagr_t } + \sqrttwolagrsbat{} \divp{ \qm_t \nabla \log \qm_t } =  -\divp{ \qm_t \nabla \hclagr_t } + \sqrttwolagrsbat{} \Delta \qm_t$.

\textit{Wasserstein Lagrangian Dual Objective after Reparameterization:}
Starting from the dual objective in \cref{eq:expanded2}, we perform the reparameterization in \cref{eq:reparam_sb}, $\lagr_t = \hclagr_t - \sqrttwolagrsbat{}  \log \qm_t$,
\small
\begin{align}
&\cS_\cL = \inf \limits_{\qm_t \in \marginalcouple} \sup \limits_{\hclagr_t}  \int \hclagr_1 \mu_1\dx - { \sqrttwolagrsbat{}  \int  \log \qm_1 ~ \mu_1\dx } -  \int \hclagr_0 \mu_0\dx + { \sqrttwolagrsbat{} \int  \log \qm_0 ~ \mu_0\dx}  \label{eq:cancel} \\
& -\int_0^1 \int  \Bigg( \frac{\partial \hclagr_t }{\partial t}  + \frac{\partial}{\partial t}\left(  \sqrttwolagrsbat{}\log \qm_t \right) 
+\frac{1}{2} \left\langle \nabla \hclagr_t - \sqrttwolagrsbat{} \nabla  \log \qm_t, \nabla \hclagr_t - \sqrttwolagrsbat{} \nabla  \log \qm_t \right\rangle -
\lagrsbat{} \left\| \nabla \log \qm_t\right\|^2
\Bigg) \qm_t \dx \dt
\nonumber
%\\
%& \phantom{ \inf \limits_{\qm_t \in \marginalcouple} \sup \limits_{\lagr_t}}  \nonumber %\label{eq:gsb_final}
\end{align}
\normalsize
%With $\int  \frac{\partial }{\partial t}(\sqrttwolagrsbat{} \log \qm_t )\qm_t \dx = \int  
% (\frac{\partial }{\partial t}\sqrttwolagrsbat{t} ) \log \qm_t ~ \qm_t \dx +  \int  
% \sqrttwolagrsbat{t} (\frac{\partial }{\partial t} \log \qm_t) ~ \qm_t \dx  $, we 
 Noting that the $ \int  
\sqrttwolagrsbat{} (\frac{\partial }{\partial t} \log \qm_t) ~ \qm_t \dx $ cancels since
$\frac{\dd }{\dd t} \int \qm_t \dx = 0$, we simplify to obtain
% Thus, the $(\frac{\partial }{\partial t}\sqrttwolagrsbat{t} ) \log \qm_t $ terms in \cref{eq:cancel} cancel and we obtain,
%yielding the reparameterized dual objective as
\begin{align}
\cS_\cL &=  \inf \limits_{\qm_t \in \marginalcouple} \sup \limits_{\hclagr_t}  \int \hclagr_1 \mu_1\dx - { \sqrttwolagrsbat{}  \int  \log \qm_1 ~ \mu_1\dx } -  \int \hclagr_0 \mu_0\dx + { \sqrttwolagrsbat{} \int  \log \qm_0\mu_0\dx} \nonumber  \\
  & -\int_0^1 \int  \Bigg( \frac{\partial \hclagr_t }{\partial t}  +\frac{1}{2} \big\| \nabla \hclagr_t \big\|^2  - \sqrttwolagrsbat{}  \big\langle \nabla \hclagr_t, \nabla  \log \qm_t \big\rangle 
\Bigg) \qm_t \dx \dt \nonumber
\end{align}
where the Hamiltonian now matches Eq. 7 in \citet{leger2021hopf}.
%, where $\nabla \log \qm_t = \nabla \frac{\delta \cF[\qm_t]}{\delta \qm_t}$.   
Taking $\nabla \log \qm_t = \frac{1}{\qm_t} \nabla \qm_t$ and integrating by parts, the final term becomes
\begin{align}
\cS_\cL =  \inf \limits_{\qm_t \in \marginalcouple} \sup \limits_{\hclagr_t}  &\int \hclagr_1 \mu_1\dx - { \sqrttwolagrsbat{}  \int  \log \qm_1 ~ \mu_1\dx } -  \int \hclagr_0 \mu_0\dx + { \sqrttwolagrsbat{} \int  \log \qm_0\mu_0\dx} \nonumber  \\
  & -\int_0^1 \int  \Bigg( \frac{\partial \hclagr_t }{\partial t}  +\frac{1}{2} \big\| \nabla \hclagr_t \big\|^2  + \sqrttwolagrsbat{} \Delta \hclagr_t
\Bigg) \qm_t \dx \dt \nonumber
\end{align}
Finally, we add terms  $c(\{\mu_{0,1}\}) = \sqrttwolagrsbat{}  \int  (\log \mu_1) ~ \mu_1\dx -\sqrttwolagrsbat{}  \int ( \log \mu_0) ~ \mu_0\dx $ which are constant with respect to $\qm_{0,1}$,
\begin{align}
\cS_\cL(\{\mu_{0,1}\}) + c(\{\mu_{0,1}\}) &=  \inf \limits_{\qm_t \in \marginalcouple} \sup \limits_{\hclagr_t}  \int \hclagr_1 \mu_1\dx + { \sqrttwolagrsbat{1}  \int \left( \log \mu_1 ~ - \log \qm_1 \right) \mu_1\dx } \label{eq:wlf_dual_sb} \\
&\phantom{ \inf \limits_{\qm_t \in \marginalcouple} \sup \limits_{\hclagr_t} } -  \int \hclagr_0 \mu_0\dx -  \sqrttwolagrsbat{0} \int  \left( \log \mu_0 ~ - \log \qm_0\right)\mu_0 \dx  \nonumber \\
  &\phantom{ \inf \limits_{\qm_t \in \marginalcouple} \sup \limits_{\hclagr_t} } -\int_0^1 \int  \Bigg( \frac{\partial \hclagr_t }{\partial t}  +\frac{1}{2} \big\| \nabla \hclagr_t \big\|^2  + \sqrttwolagrsbat{t}  \Delta \hclagr_t
\Bigg) \qm_t \dx \dt \nonumber
\end{align}
The endpoint terms now vanish for $\qm_t \in \marginalcouple$ satisfying the endpoint constraints,
\begin{align}\label{eq:wlf_dual_sb2}
&\cS_\cL(\{\mu_{0,1}\}) + c(\{\mu_{0,1}\})  \\
&=  \inf \limits_{\qm_t \in \marginalcouple} \sup \limits_{\hclagr_t}  \int \hclagr_1 \mu_1\dx  -  \int \hclagr_0 \mu_0\dx -\int_0^1 \int  \Bigg( \frac{\partial \hclagr_t }{\partial t}  +\frac{1}{2} \big\| \nabla \hclagr_t \big\|^2  + \sqrttwolagrsbat{}  \Delta \hclagr_t
\Bigg) \qm_t \dx \dt , \nonumber
\end{align}
which matches the dual in \cref{eq:gsb_final}.   We now show that this is also the dual for the \gls{SB} problem.

%Thus, we have shown the equivalence $\cS_{SB}(\{\mu_{0,1}\})  = \cS_\cL(\{\mu_{0,1}\}) + c(\{\mu_{0,1}\})$ of \gls{SB} and this WLF up to a constant, as desired.

%We now show that the dual in \cref{eq:wlf_dual_sb} corresponds to the solution to the generalized \gls{SB} problem.

% \begin{align}
% \cS_{\cL}(\{\mu_{0,1}\}) = \inf
% \limits_{\qm_t}
% %\limits_{(\qm_t, \dot{\qm}_t)} 
% \int_0^1 \mathcal{L}[\qm_t, \dot{\qm}_t,t]\dt 
% %\,\, \text{s.t. } \,\, &\dot{\qm}_t = - \divp{ \qm_t \vel_t } + \qm_t \action_t, \, 
% %\\ \text{ and } \,\,  
% \quad \st \quad
% \qm_0 = \mu_0, \qquad \qm_1 = \mu_1
% \end{align}

\textit{Schrödinger Bridge Dual Objective:}
Consider the optimization in \cref{eq:fp_sb} (here, $t$ may be time-dependent)
%\small 
\begin{align}
\cS_{SB}(\{\mu_{0,1}\}) = \inf \limits_{\qm_t, v_t}
\int_0^1 
  \int \frac{1}{2} \| v_t \|^2 \qm_t \dx \dt
 \st \dot{\qm}_t  &= -\divp{ \qm_t \vel_t } + \sqrttwolagrsbat{t} \divp{ \qm_t ~\nabla \log \qm_t },\,  \qm_0 = \mu_0, \,\, \qm_1 = \mu_1  
\label{eq:sb_app}
\end{align}
\normalsize
We treat the optimization over $\qm_t$ as an optimization over a vector space of functions, which is later constrained be normalized via the $\qm_0 = \mu_0, \qm_1 = \mu_1$ constraints and continuity equation (which preserves normalization).   It is also constrained to be nonnegative, but we omit explicit constraints for simplicity of notation.  The optimization over $v_t$ is also over a vector space of functions.  See \cref{app:equivalence} for additional discussion.

Given these considerations, we may now introduce Lagrange multipliers $\lambda_0, \lambda_1$ to enforce the endpoint constraints and $\hclagr_t$ to enforce the dynamics constraint, 
%we follow similar derivations as in the proof of \cref{prop:dual_obj}-\ref{prop:dual_obj_app} a
\small 
\begin{align}
\cS_{SB}(\{\mu_{0,1}\}) &= \inf \limits_{\qm_t, v_t} \sup\limits_{\hclagr_t, \lambda_{0,1}}
\int_0^1 \int \frac{1}{2} \| v_t \|^2 \qm_t \dx \dt
 + \int_0^1 \int  \hclagr_t \left( \dot{\qm}_t + \divp{ \qm_t \vel_t } - \sqrttwolagrsbat{t} \divp{ \qm_t ~\nabla \log \qm_t }\right) \dx \dt \label{eq:gsb11} \\
 &\phantom{=\inf \limits_{\qm_t, v_t} \sup\limits_{\hclagr_t, \lambda_{0,1}}} + \int \lambda_1 \left(\qm_1 - \mu_1  \right)\dx + \int \lambda_0 \left(\qm_0 - \mu_0 \right)\dx \nonumber \\
&=  \inf \limits_{\qm_t, v_t}\sup\limits_{\hclagr_t, \lambda_{0,1}}
\int_0^1 \int \frac{1}{2} \| v_t \|^2 \qm_t \dx dt + \int \hclagr_1 \qm_1\dx - \int \hclagr_0 \qm_0\dx
 - \int_0^1 \int \frac{\partial \hclagr_t}{\partial t} \qm_t \dx \dt \\
 &\phantom{====} - \int_0^1 \int  \left\langle \nabla \hclagr_t, \vel_t - \sqrttwolagrsbat{t} \nabla \log \qm_t \right \rangle \qm_t \dx \dt 
 %\\&\phantom{=\inf \limits_{\qm_t, v_t} \sup\limits_{\hclagr_t, \lambda_{0,1}}} 
 + \int \lambda_1 \left(\qm_1 - \mu_1  \right)\dx + \int \lambda_0 \left(\qm_0 - \mu_0 \right)\dx \nonumber
\end{align}
\normalsize
Note that we can freely we can swap the order of the optimizations since the \gls{SB} optimization in \cref{eq:sb_app} is convex in $\qm_t, v_t$, while the dual optimization is linear in $\hclagr_t, \lambda$.

%Now, we can swap the order of the arguments where we can swap the order of the optimizations \hl{since \cref{eq:gsb11} is convex in $\qm_t, v_t$ and concave in $\hclagr_t, \lambda$}.  %\RB{WHY, INSPECT}
%Eliminating $\lambda_{0,1}$ implies $\lambda_1 = \hclagr_1$ and $\lambda_0 = \hclagr_0$.   
Swapping the order of the optimizations and eliminating $\qm_0$ and $\qm_1$ implies $\lambda_1 = \hclagr_1$ and $\lambda_0 = \hclagr_0$, while eliminating $v_t$ implies $v_t = \nabla \hclagr_t$.  Finally, we obtain
\small 
\begin{align}
\cS_{SB}(\{\mu_{0,1}\}) &= 
 \sup\limits_{\hclagr_t} \inf \limits_{\qm_t} 
\int \hclagr_1 \mu_1\dx - \int \hclagr_0 \mu_0\dx - \int_0^1 \int \left( \frac{\partial \hclagr_t}{\partial t} + \frac{1}{2} \| \hclagr_t \|^2 
- \sqrttwolagrsbat{t} \left\langle \nabla \hclagr_t, \nabla \log \qm_t \right\rangle
\right) \qm_t \dx \dt \nonumber \\
&= \inf \limits_{\qm_t}\sup\limits_{\hclagr_t}
\int \hclagr_1 \mu_1\dx - \int \hclagr_0 \mu_0\dx - \int_0^1 \int \left( \frac{\partial \hclagr_t}{\partial t} + \frac{1}{2} \| \hclagr_t \|^2 
+ \sqrttwolagrsbat{t} \Delta \hclagr_t
\right) \qm_t \dx \dt
\label{eq:dual_gsb22}
\end{align}
\normalsize
where we swap the order of optimization again in the second line.
This matches the dual in \cref{eq:wlf_dual_sb} for $\cS_\cL(\{\mu_{0,1}\}) + c(\{\mu_{0,1}\})$ if $\sqrttwolagrsbat{}$ is independent of time, albeit without the endpoint constraints.   However, we have shown above that the optimal $\lambda_0^* = \hclagr_0^*$, $\lambda_1^* = \hclagr_1^*$ will indeed enforce the endpoint constraints.    This is the desired result in \cref{prop:sb}. %\qed
\end{proof}
% \Kirill{I believe that the following two examples are outdated, right?}

%\RB{UPDATE TO POINT TO the NEW POTENTIAL TERM PLAYS A ROLE}
\begin{example}[\textbf{Schrödinger Bridge with Time-Dependent Diffusion Coefficient}]\label{example:sb_app}
To incorporate a time-dependent diffusion coefficient for the classical \gls{SB} problem, we modify the potential energy with an additional term 
\begin{align}
\label{eq:potential_sb2}
\cU[\qm_t, t] &= -\lagrsbat{t} \int \|  \nabla \log \qm_t \|^2 \qm_t \dx    
%+  \int \frac{\partial }{\partial t}\left(\sqrttwolagrsbat{t} \log \qm_t \right) ~\qm_t\dx ).
%&=-\lagrsbat{t} \int \|  \nabla \log \qm_t \|^2 \qm_t \dx    
+  \int \left( \frac{\partial }{\partial t}\sqrttwolagrsbat{t}\right)  \log \qm_t~ \qm_t \dx 
\end{align}
This potential energy term is chosen carefully to cancel with the term appearing after reparameterization using $ \lagr_t = \hclagr_t - \sqrttwolagrsbat{t}  \log \qm_t$ in \cref{eq:cancel}.  In this case, 
\small
\begin{align*}
\int \frac{\partial \lagr_t}{\partial t}\qm_t \dx &= \int   \left( \frac{\partial \hclagr_t}{\partial t} - \frac{\partial }{\partial t}\Big(\sqrttwolagrsbat{t} \log \qm_t \Big)\right)\qm_t \dx \\
&=\int   \left( \frac{\partial \hclagr_t}{\partial t} -
 \Big(\frac{\partial }{\partial t}\sqrttwolagrsbat{t} \Big) \log \qm_t -    
 \sqrttwolagrsbat{t} \Big(\frac{\partial }{\partial t} \log \qm_t \Big) \right) ~ \qm_t \dx \\
 &= \int    \left( \frac{\partial \hclagr_t}{\partial t} -
 \Big(\frac{\partial }{\partial t}\sqrttwolagrsbat{t} \Big) \log \qm_t  \right) ~ \qm_t \dx
\end{align*}
\normalsize
where the score term cancels as before.   The additional potential energy term is chosen to cancel the remaining term.   All other derivations proceed as above, which yields an identical dual objective 
\begin{align}
 \hspace*{-.2cm} \cS_{SB} =  \inf \limits_{\qm_t \in \marginalcouple} \sup \limits_{\hclagr_t} ~ \int \hclagr_1 \mu_1 \dx
  -  \int \hclagr_0 \mu_0 \dx  -\int_0^1 \int  \Bigg( \frac{\partial \hclagr_t }{\partial t}  +\frac{1}{2} \big\| \nabla \hclagr_t \big\|^2  + \sqrttwolagrsbat{t}  \Delta \hclagr_t
\Bigg) \qm_t \dx \dt \nonumber
\end{align}
\end{example}

\subsection{Schrödinger Equation}\label{app:other_examples}

\begin{example}[\textbf{Schrödinger Equation}]\label{example:se}
Intriguingly, we obtain the Schrödinger Equation via a simple change of sign in the potential energy
$\cU[\qm_t, t] = \lagrsbat{t} \int \|\nabla  \log \qm_t \|^2 \qm_t \dx$ compared to \cref{eq:potential_sb} or, in other words, an \text{imaginary} weighting $i \sigma_t$ of the gradient norm of the Shannon entropy,
\begin{align}
\cL[\qm_t, \dot{\qm}_t, t] =  \frac{1}{2} \wmetricat{ \dot{\qm}_t, \dot{\qm}_t }{\qm_t} - \int \left[\frac{1}{8} \| \nabla \log \qm_t \|^2  + V_t(x) \right] \qm_t \dx
%\int \left( \frac{1}{2}\|\nabla \action_t\|^2 -\frac{\sigma_t^2}{2} \| \nabla \log \qm_t \|^2 \right) d\qm_t
\end{align}
%which also corresponds to an \text{imaginary} weighting $i \sigma_t$ of the gradient norm in \cref{eq:potential_sb}.
This Lagrangian corresponds to a Hamiltonian $\cH[\qm_t, \lagr_t, t] = \frac{1}{2} \wcometricat{ \lagr_t, \lagr_t }{\qm_t} + \int \left[\frac{1}{8} \| \nabla \log \qm_t \|^2 + V_t(x)\right]  \qm_t \dx $,
which leads to the dual objective
\begin{align}
\begin{split}
\cS_{SE} =~& \sup \limits_{\lagr_t} \inf \limits_{\qm_t} ~ \int \lagr_1 \mu_1 \dx-  \int \lagr_0 \mu_0 \dx -\int_0^1 \int  \left( \frac{\partial \lagr_t }{\partial t}  + \frac{1}{2} \| \nabla \lagr_t \|^2  + \frac{1}{8} \left\| \nabla \log \qm_t\right\|^2 + V_t(x)  \right) \qm_t \dx \dt. \label{appeq:dual_SE}
\end{split}
\end{align}
Unlike the Schrödinger Bridge problem, the Hopf-Cole transform does not linearize the dual objective in density.
Thus, we cannot approximate the dual using only the Monte Carlo estimate.

The first-order optimality conditions for \cref{appeq:dual_SE} are
\begin{align}
    \dot{\qm_t} = -\divp{\qm_t\nabla\lagr_t}, \;\; \frac{\partial \lagr_t }{\partial t}  + \frac{1}{2} \| \nabla \lagr_t \|^2 = \frac{1}{8} \left\| \nabla \log \qm_t\right\|^2 + \frac{1}{4} \Delta \log \qm_t - V_t(x) \label{appeq:polar_SE}
\end{align}
Note, that \cref{appeq:polar_SE} is the Madelung transform of the Schrödinger equation, i.e. for the equation
\begin{align}
    \deriv{}{t}\psi_t(x) = -i \hat{H}\psi_t(x), \;\;\text{ where }\;\; \hat{H} = -\frac{1}{2}\Delta + V_t(x),
\end{align}
the wave function $\psi_t(x)$ can be written in terms $\psi_t(x) = \sqrt{\qm_t(x)}\exp(i\lagr_t(x))$.
Then the real and imaginary part of the Schrödinger equation yield \cref{appeq:polar_SE}.
\end{example}

\section{Lagrange Multiplier Approach}\label{app:equivalence}

Our \cref{prop:dual_obj} is framed completely in the abstract space of densities and the Legendre transform between functionals of $\dot{\qm}_t \in \cT_{\qm_t}\cP$ and $\lagr_{\dot{\qm}_t } \in \cT_{\qm_t}^*\cP$.   We contrast this approach with optimizations such as the Benamou-Brenier formulation in \cref{eq:dynamical}, which are formulated in terms of the state space dynamics such as the continuity equation $\dot{\qm}_t  = -\nabla \cdot(\qm_t v_t)$.   
In this appendix, we claim that the latter approaches require a potential energy $\cU[\qm_t,t]$ which is concave or linear in $\qm_t$.
We restrict attention to continuity equation dynamics in this section, although similar reasoning holds with growth terms.

In particular, consider optimizing $\qm_t, v_t$ over a topological vector space of functions.   The notable difference here is that $\qm_t: \cX \rightarrow \bbR$ is a function, which we later constrain to be a normalized probability density using $\qm_0 = \mu_0, \qm_1 = \mu_1$, the continuity equation $\dot{\qm}_t  = -\nabla \cdot(\qm_t v_t)$ (which preserves normalization), and nonnegativity constraints.  Omitting the latter for simplicity of notation, we consider a (Tonelli) Lagrangian kinetic energy, with an arbitrary potential energy (that is concave in $\qm_t$),
\begin{align}
\cS = \inf \limits_{\qm_t, v_t} \int_0^1 \int L(x, v_t) \qm_t \dx \dt - \int_0^1 \cU[\qm_t,t]\dt \st \dot{\qm}_t  = -\nabla \cdot(\qm_t v_t) \quad \qm_0 = \mu_0, \,\, \qm_1=\mu_1
\end{align}
Since we are now optimizing $\qm_t$ over a vector space, we introduce Lagrange multipliers $\lambda_{0,1}$ to enforce the endpoint constraints and $\lagr_t$ to enforce the continuity equation.  Integrating by parts in $t$ and $x$, we have
\small
\begin{align}
\cS &= \inf \limits_{\qm_t, v_t} \sup \limits_{\lambda_{0,1}, \lagr_t} \int_0^1 \int L(x, v_t) \qm_t \dx \dt - \int_0^1 \cU[\qm_t,t]\dt + \int_0^1 \int \lagr_t \dot{\qm}_t \dx\dt + \int_0^1 \int  \lagr_t ~ \nabla \cdot(\qm_t v_t) \dx \dt \nonumber \\
&\phantom{\inf \limits_{\qm_t, v_t} \sup \limits_{\lambda_{0,1}, \lagr_t} } + \int \lambda_0 (\qm_0 - \mu_0)  \dx  + \int \lambda_1 (\qm_1 - \mu_1)  \dx  \\
& = \inf \limits_{\qm_t, v_t} \sup \limits_{\lambda_{0,1}, \lagr_t} \int_0^1 \int L(x, v_t) \qm_t \dx \dt - \int_0^1 \cU[\qm_t,t]\dt - \int_0^1 \int \frac{\partial \lagr_t }{\partial t} {\qm}_t \dx\dt - \int_0^1 \int \langle \nabla \lagr_t , v_t\rangle \qm_t \dx \dt \nonumber \\
&\phantom{\inf \limits_{\qm_t, v_t} \sup \limits_{\lambda_{0,1}, \lagr_t} } + 
\int \lagr_1 \qm_1 \dx - \int \lagr_0 \qm_0 \dx +  
\int \lambda_0 \qm_0 \dx - \int \lambda_0 \mu_0\dx  + \int \lambda_1 \qm_1 \dx -  \int \lambda_1 \mu_1\dx  \label{eq:need_linear_h}
\end{align}
\normalsize
To make further progress by swapping the order of the optimizations, we require that \cref{eq:need_linear_h} is convex in $\qm_t, v_t$ and concave in $\lambda_{0,1}, \lagr_t$.   
However, to facilitate this, we require that $\cU[\qm_t,t]$ is concave in $\qm_t$, which is an additional constraint which was not necessary in the proof of \cref{prop:dual_obj}.

By swapping the order of optimization to eliminate $\rho_0, \rho_1$ and $v_t$, we obtain the optimality conditions 
%the appropriate variables, we obtain the optimality conditions
\begin{align}
\lambda_0 = \lagr_0, \,\, \lambda_1 = -\lagr_1 \qquad \qquad v_t = \nabla_{p} H (x, \nabla \lagr_t)
\end{align}
where the gradient is with respect to the second argument.   Swapping the order of optimizations again, the dual becomes
\begin{align}
\cS & =   \inf \limits_{\qm_t} \sup \limits_{\lagr_t} \int \lagr_1 \mu_1\dx - \int \lagr_0 \mu_0 \dx -  \int_0^1 \left( \int \Big( \frac{\partial \lagr_t}{\partial t} + H(x, \nabla \lagr_t)\Big) \qm_t \dx + \cU[\qm_t,t] \right) \dt . \nonumber
\end{align}
which is analogous to \cref{eq:dual_intermediate} in \cref{prop:dual_obj} for the Lagrangian kinetic energy.
While the dual above does not explicitly enforce the endpoint marginals on $\qm_t$,  the conditions $\lambda^*_0 = \lagr^*_0, \,\, \lambda_1^* = \lagr_1^*$ serve to enforce the constraint at optimality.
%Eliminating $\lambda_{0,1}$, we 
% \RB{TO DO:  }
%Lagrange multipliers require working in vector space of $\qm_t$ with constraints, derivations require switching order of arguments multiple times.   This may only be possible for linear Hamiltonians in $\qm_t$ (otherwise can't swap $\inf \qm_t$)
\section{Expressivity of Parameterization}\label{app:expressivity}

\express*
\begin{proof}
For every absolutely-continuous distributional path $\qm_t$, we have a unique gradient flow $\nabla s_t^*(x)$ satisfying the continuity equation (\citet{ambrosio2008gradient} Thm. 8.3.1),
\begin{align}
    \dot{\qm_t} = -\divp{\qm_t\nabla s_t^*(x)}\,. \label{eq:cont_eq}
\end{align}
Consider the function
\begin{align}
    \varphi_t(x_0, x_1) = 
    \begin{cases}
    x_0 + \int_0^t \nabla s_\tau^*(x_\tau) \dd\tau, \;\; t \leq 1/2\,, \\ 
    x_1 + \int_1^t \nabla s_\tau^*(x_\tau) \dd\tau, \;\; t > 1/2\,, \\ 
    \end{cases}
\end{align}
which integrates the ODE $\frac{\dx}{\dt} = \nabla s_t^*(x)$ forward starting from $x_0$ for $t \leq 1/2$, and integrates the same ODE backwards starting from $x_1$ otherwise.

Clearly, for $t \leq 1/2$ the designed function serves as a push-forward map for the samples $x_0 \sim \qm_0$, and produces samples from $\qm_t$ by \cref{eq:cont_eq}.
The same applies for $t > 1/2$.
Thus, $\varphi_t$ samples from the correct marginals, i.e.
\begin{align}
    \int\int \delta(x_t-\varphi_t(x_0, x_1)) \qm_0(x_0) \qm_1(x_1)\dx_0\dx_1 = \qm_t(x), \;\; \forall t \in [0,1].
\end{align}

We now show that $\varphi_t(x_0, x_1)$ can be expressed using the parameterization in \cref{eq:nn_two}, which constructs $x_t$ as
%Now, from \cref{eq:nn_two}, we have
\begin{align}
    x_t = (1-t)x_0 + tx_1 + t(1-t)\textsc{nnet}^*(t,x_0,x_1,
    \mathbbm{1}[t < 0.5]; 
    \eta), \;\; x_0 \sim \mu_0, \;\; x_1 \sim \mu_1.
\end{align}
Then taking the function $\textsc{nnet}^*(t,x_0,x_1, \mathbbm{1}[t < 0.5]; \eta)$ as follows
\begin{align}
    \textsc{nnet}^*(t, x_0, x_1, \mathbbm{1}[t < 0.5]; \eta) = 
    \begin{cases}
    \frac{1}{1-t}\left(x_0-x_1 + \frac{1}{t}\int_0^t \nabla s_\tau^*(x_\tau) \dd\tau \right), \;\; t \leq 1/2\,, \\ 
    \frac{1}{t}\left(x_1-x_0 + \frac{1}{1-t}\int_1^t \nabla s_\tau^*(x_\tau) \dd\tau \right), \;\; t > 1/2\,, \\ 
    \end{cases}
\end{align}
we have 
\begin{align}
    (1-t)x_0 + tx_1 + t(1-t)\textsc{nnet}^*(t,x_0,x_1; \mathbbm{1}[t < 0.5]; \eta) = \varphi_t(x_0, x_1),
\end{align}
which samples from the correct marginals by construction.
\end{proof}

\section{Details of Experiments}
\label{app:experiments_details}

\subsection{Single-cell Experiments}
\label{app:sc_experiments_details}

%\Lazar{Notes for myself ... will clean up / elaborate ... \textbf{Data} -- Embryoid body (EB) data is from \citet{moon2019visualizing} following prepossessing in \citet{tong2020trajectorynet}. Cite and Multi data are from Neurips competition \citet{burkhardt2022multimodal} following prepossessing in \citet{tong2023improving,tong2023simulation}. \textbf{Evaluation} --- To evaluate, we interpolate a left-out intermediary marginal distribution and compute the Wasserstien-1 distance (metric) between the predicted marginal and the true left-out marginal. Specifically, Given $K$ timepoints/marginals, we leave out the $k^{\text{th}}$ timepoint/marginal and train using the remaining $K-1$ timepoints/marginals.}

%\Lazar{I think adding the further details regarding the datasets and preprocessing I will put in the appendix. This is what we did in previous paper(s).}

We consider low dimensional (\cref{tab:5dim}) and high dimensional (\cref{tab:manydim}) single-cell experiments following the %experimental 
setups in \citet{tong2023improving,tong2023simulation}. The Embryoid body (\textbf{EB}) dataset \citet{moon2019visualizing} and the CITE-seq (\textbf{Cite}) and Multiome (\textbf{Multi}) datasets \citep{burkhardt2022multimodal} are repurposed and preprocessed by \citet{tong2023improving,tong2023simulation} for the task of trajectory inference. 

The \textbf{EB} dataset is a scRNA-seq dataset of human embryonic stem cells used to observe differentiation of cell lineages \citep{moon2019visualizing}. It contains approximately 16,000 cells (examples) after filtering, of which the first 100 principle components over the feature space (gene space) are used. For the low dimensional (5-dim) experiments, we consider only the first 5 principle components. The \textbf{EB} dataset comprises a collection of 5 timepoints sampled over a period of 30 days. 

The \textbf{Cite} and \textbf{Multi} datasets are taken from the Multimodal Single-cell Integration challenge at NeurIPS 2022 \citep{burkhardt2022multimodal}. Both datasets contain single-cell measurements from CD4+ hematopoietic
stem and progenitor cells (HSPCs) for 1000 highly variables genes and over 4 timepoints collected on days 2, 3, 4, and 7. 
We use the \textbf{Cite} and \textbf{Multi} datasets for both low dimensional (5-dim) and high dimensional (50-dim, 100-dim) experiments. We use 100 computed principle components for the 100-dim experiments, then select the first 50 and first 5 principle components for the 50-dim and 5-dim experiments, respectively. 
Further details regarding the raw dataset can be found at the competition website. \footnote{\url{https://www.kaggle.com/competitions/open-problems-multimodal/data}}

For all experiments, we train $k$ independent models over $k$ partitions of the single-cell datasets. The training data partition is determined by a left out intermediary timepoint. We then average test performance over the $k$ independent model predictions computed on the respective left-out marginals. For experiments using the \textbf{EB} dataset, we train 3 independent models using marginals from timepoint partitions $[1, 3, 4, 5], [1, 2, 4, 5], [1, 2, 3, 5]$ and evaluate each model using the respective left-out marginals at timepoints $[2], [3], [4]$. Likewise, for experiments using \textbf{Cite} and \textbf{Multi} datasets, we train 2 independent models using marginals from timepoint partitions $[2, 4, 7], [2, 3, 7]$ and evaluate each model using the respective left-out marginals at timepoints $[3], [4]$.

For both $\lagr_t(x,\theta)$ and $\qm_t(x,\eta)$, we consider Multi-Layer Perceptron (MLP) architectures and a common optimizer \citep{loshchilov2017decoupled}.
For detailed description of the architectures and hyperparameters we refer the reader to the code supplemented.

\subsection{Single-step Image Generation via Optimal Transport}
\label{app:single_step_image_gen}

Learning the vector field that corresponds to the optimal transport map between some prior distribution (e.g. Gaussian) and the target data allows to generate data samples evaluating the vector field only once.
Indeed, the optimality condition (Hamilton-Jacobi equation) for the dynamical optimal transport yields
\begin{align}
   \ddot{X}_t &= \nabla \left[ \frac{\partial  \lagr_t(x_t)}{\partial t} + \frac{1}{2} \| \nabla \lagr_t(x_t) \|^2 \right] = 0\,,
\end{align}
hence, the acceleration along every trajectory is zero.
This implies that the learned vector field can be trivially integrated, i.e.
\begin{align}
    X_1 = X_0 + \nabla \lagr_0(X_0)\,.
\end{align}
Thus, $X_1$ is generated with a single evaluation of $\nabla \lagr_0(\cdot)$.

For the image generation experiments, we follow common practices of training the diffusion models \citep{song2020score}, i.e. the vector field model $\lagr_t(x,\theta)$ uses the U-net architecture \citep{ronneberger2015u} with the time embedding and hyperparameters from \citep{song2020score}.
For the distribution path model $\qm_t(x,\eta)$, we found that the U-net architectures works best as well.
For detailed description of the architectures and hyperparameters we refer the reader to the code supplemented.

\begin{figure}[h]
    \centering
    \includegraphics[width=0.925\textwidth]{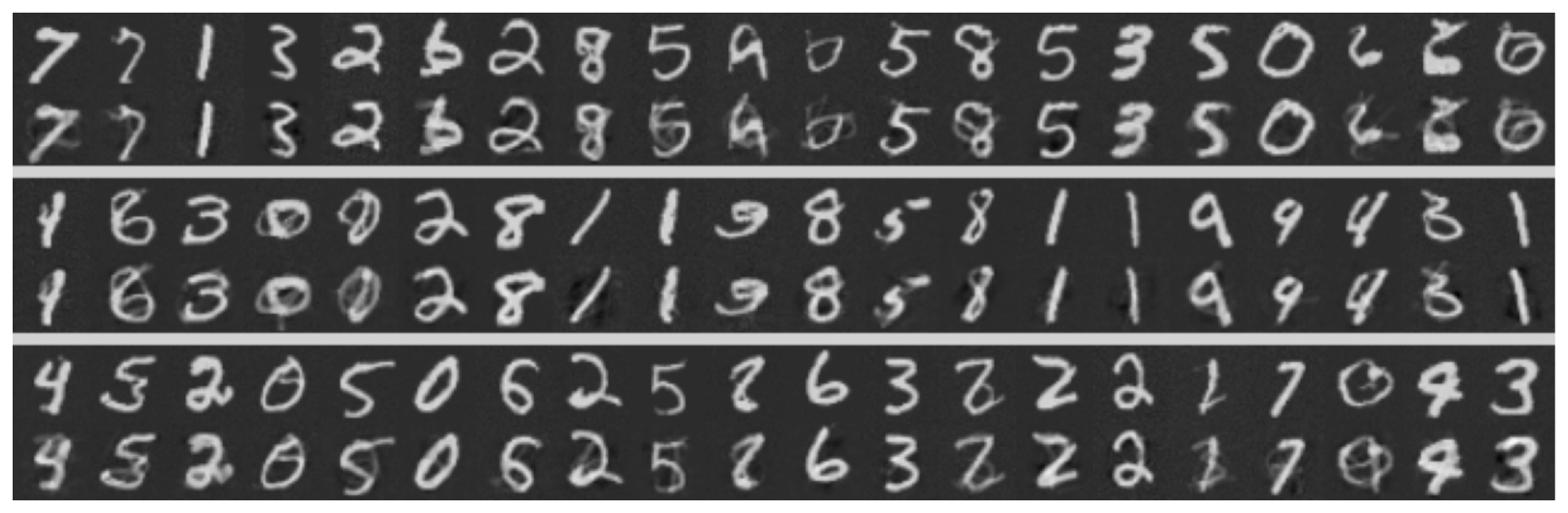}
    \includegraphics[width=0.925\textwidth]{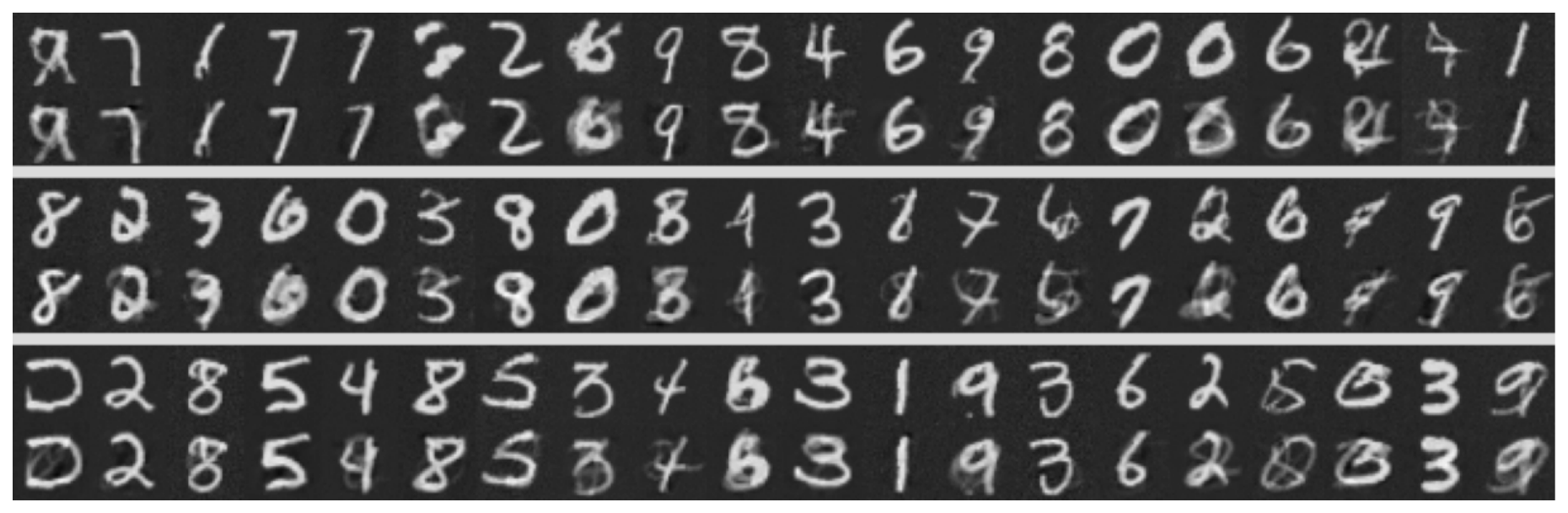}
    \includegraphics[width=0.925\textwidth]{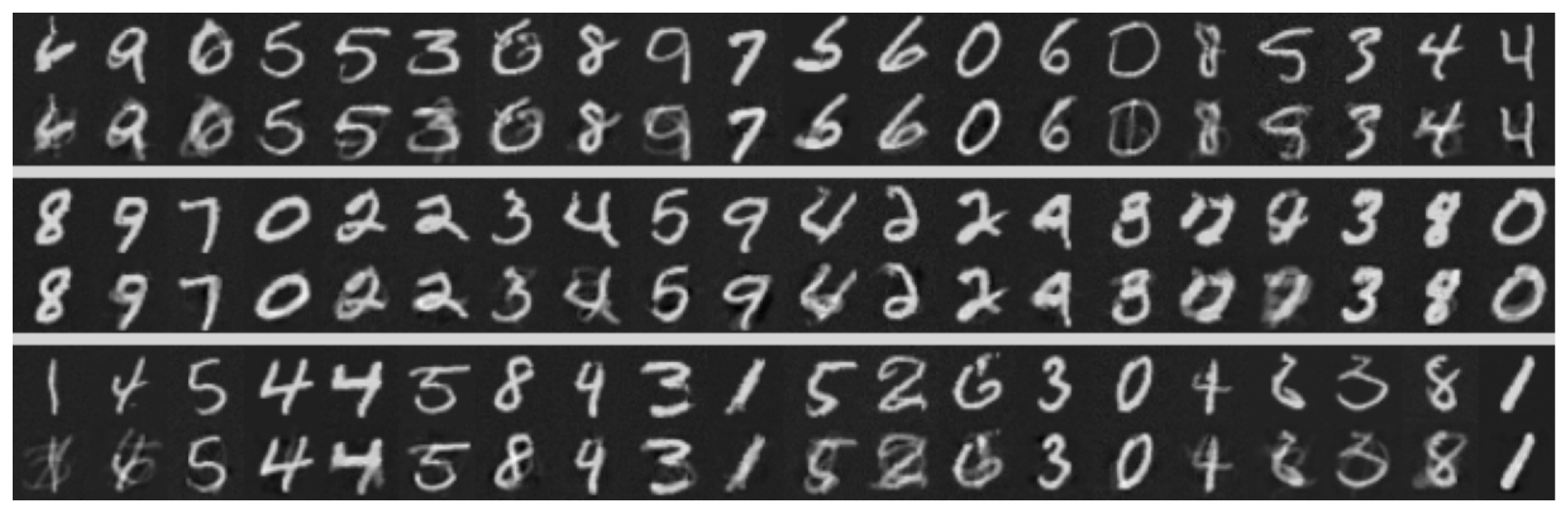}
    \caption{MNIST 32x32 image generation. Every top row is the integration of the corresponding ODE via Dormand-Prince's $5/4$ method, which makes $108$ function evaluations. Every bottom row corresponds to single function evaluation approximation.}
    \label{fig:mnist_ot}
\end{figure}
\begin{figure}
    \centering
    \includegraphics[width=0.95\textwidth]{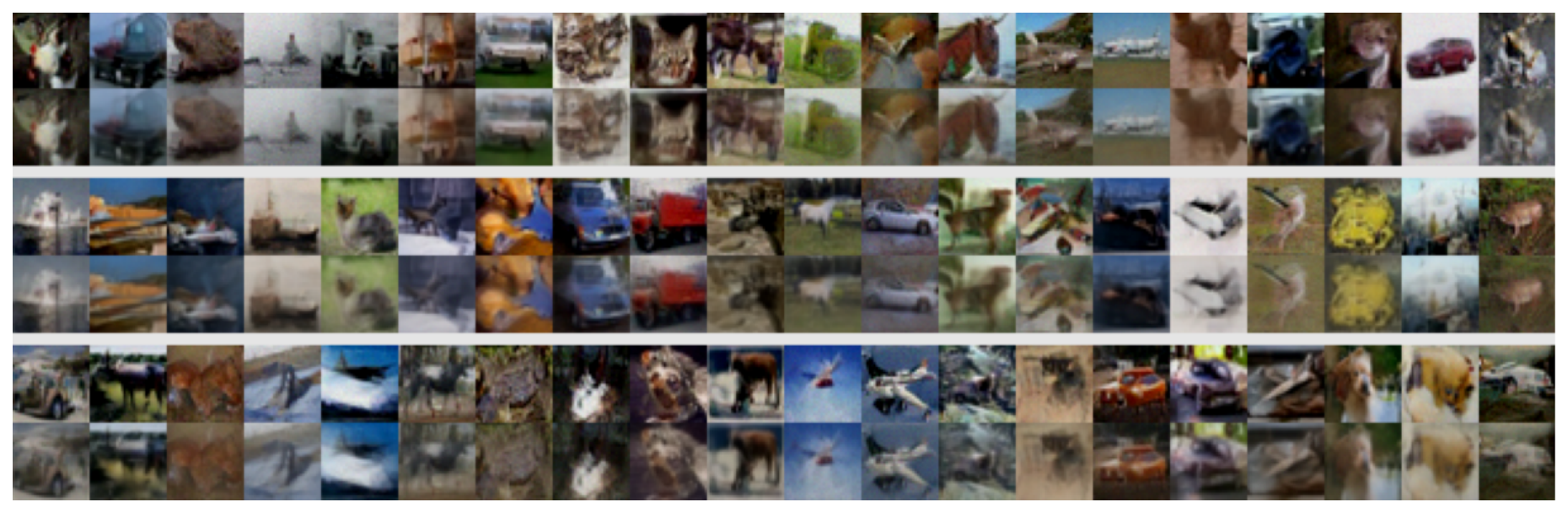}
    \includegraphics[width=0.95\textwidth]{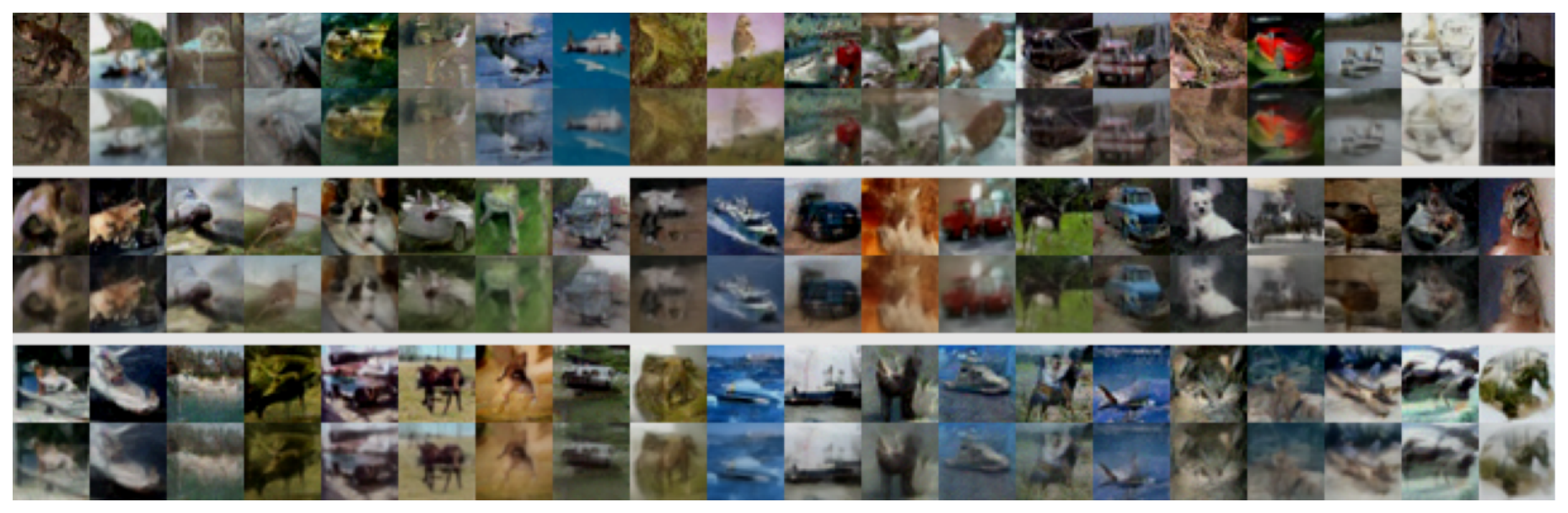}
    \includegraphics[width=0.95\textwidth]{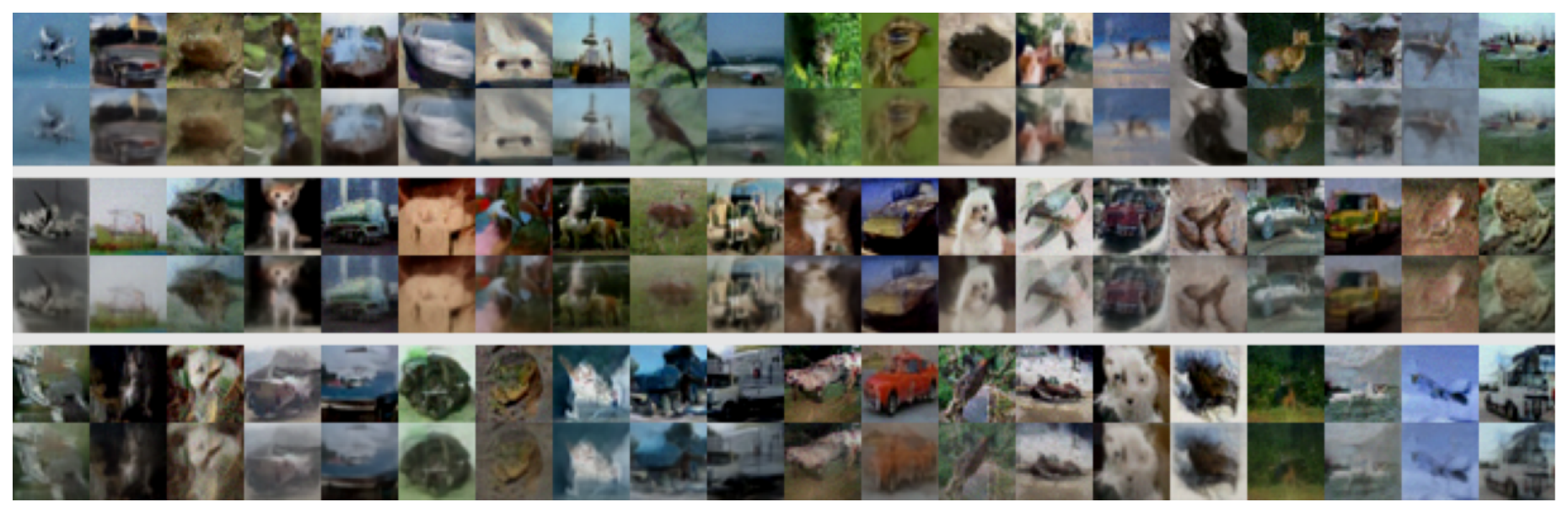}
    \caption{CIFAR-10 image generation. Every top row is the integration of the corresponding ODE via Dormand-Prince's $5/4$ method, which makes $78$ function evaluations. Every bottom row corresponds to single function evaluation approximation.}
    \label{fig:cifar10_ot}
\end{figure}

\end{document}